\documentclass[10pt]{article}
\usepackage[letterpaper,margin=0.75in]{geometry}
\usepackage[T1]{fontenc}
\usepackage{lmodern}
\usepackage{amsmath,amssymb,amsthm,mathtools,bm}
\usepackage{microtype,graphicx,booktabs,array,enumitem,caption}
\usepackage{xcolor}
\definecolor{accent}{HTML}{000000}
\usepackage[round]{natbib}
\usepackage[hidelinks]{hyperref}
\hypersetup{pdftitle={Cheap to Draw, Expensive to Trust: Certifying Test-Time Scaling Curves},
  pdfauthor={Sohail (Neel) Sarkar and Shakuntala Baichoo}}
\setlist{itemsep=2pt,topsep=4pt}

\makeatletter
\renewcommand\section{\@startsection{section}{1}{\z@}%
  {-3.2ex \@plus -1ex \@minus -.2ex}{1.4ex \@plus .2ex}%
  {\normalfont\large\bfseries}}
\renewcommand\subsection{\@startsection{subsection}{2}{\z@}%
  {-2.6ex\@plus -1ex \@minus -.2ex}{1ex \@plus .2ex}%
  {\normalfont\normalsize\bfseries}}
\renewcommand\paragraph{\@startsection{paragraph}{4}{\z@}%
  {1.6ex \@plus .5ex \@minus .2ex}{-1em}%
  {\normalfont\normalsize\bfseries}}
\newcommand\floatbarrier{\par\begingroup\let\@elt\relax\edef\@tempa{\@deferlist\@dbldeferlist}\ifx\@tempa\@empty\endgroup\else\endgroup\clearpage\fi}
\makeatother

\newtheoremstyle{accent}{\topsep}{\topsep}{\itshape}{}{\bfseries\color{accent}}{.}{.5em}{}
\theoremstyle{accent}
\newtheorem{theorem}{Theorem}
\newtheorem{proposition}[theorem]{Proposition}
\newtheorem{lemma}[theorem]{Lemma}
\newtheorem{corollary}[theorem]{Corollary}
\theoremstyle{remark}

\newcommand{\eps}{\varepsilon}
\newcommand{\E}{\mathbb E}
\newcommand{\Prb}{\mathbb P}
\newcommand{\Ber}{\operatorname{Bernoulli}}
\newcommand{\Bin}{\operatorname{Bin}}
\newcommand{\kl}{\operatorname{kl}}
\newcommand{\vbar}{\bar v}
\newcommand{\sumw}{\Sigma_{\mathrm w}}
\newcommand{\ind}{\mathbf 1}
\DeclareMathOperator{\KL}{KL}
\DeclareMathOperator{\Var}{Var}

\title{\color{accent}\bfseries Cheap to Draw, Expensive to Trust:\\ Certifying Test-Time Scaling Curves}
\author{Sohail (Neel) Sarkar \qquad Shakuntala Baichoo\\[.4ex]
\normalsize PMCC AI Lab, Peter Munk Cardiac Centre\\
\normalsize University Health Network, Toronto, Ontario, Canada\\[.3ex]
\normalsize\href{mailto:sohail.sarkar@uhn.ca}{\texttt{sohail.sarkar@uhn.ca}} \qquad
\href{mailto:shakuntala.baichoo@uhn.ca}{\texttt{shakuntala.baichoo@uhn.ca}}}
\date{}

\begin{document}
\twocolumn[{%
\begin{@twocolumnfalse}
\maketitle

\begin{center}
\begin{minipage}{0.88\textwidth}
\begin{abstract}
\normalsize
Sampling several answers and keeping the one a verifier scores highest is one of the simplest ways to buy accuracy at test time. Its effect is reported as a scaling curve: accuracy against the number $k$ of sampled answers. The curve is cheap to draw and expensive to trust. A budget read off it is chosen after looking at every point, so only a band that covers all budgets at once protects the choice, and on a 100-question benchmark a fixed exact-binomial design needs 192,000 generated answers to certify 64 budgets to within $\pm1/32$ at 95\%. Most of that cost pays for the wrong uncertainty. A benchmark is a fixed list of questions; at budget 64, about three quarters of the variance of a selected answer's correctness lies between questions, and an audit that revisits every question need not pay for it. We derive the minimax cost of certifying the whole curve, up to logarithmic factors. It has three parts: calibrating the tail of the score distribution, telling the questions apart, and within-question noise summed along the curve. At a single benchmark the last part sharpens to the variance of one answer's influence under the best allocation of answers to questions, which every valid audit pays and an audit that learns the allocation attains, up to a logarithm, as the precision grows. A paired audit built on an exponential inequality for two independent draws at the same question needs no pilot. On 185 held-out score pools it uses 0.74 times the answers of the cheapest competing certified audit at 64 budgets and 0.53 times at 1,024, and on a newly generated MMLU-Pro study it certified the curve with 79,133 answers, within 0.6\% of what a cost law fitted beforehand predicted from the study's within-question variance. The same paths certify pass@$k$ and majority voting, and the bands extend to populations of questions and to answers that depend on earlier ones.
\end{abstract}

\noindent\textbf{Keywords:} test-time scaling; best-of-$k$ selection; simultaneous confidence bands; confidence sequences; minimax lower bounds; stratified sampling; multilevel Monte Carlo; language model evaluation.
\end{minipage}
\end{center}
\end{@twocolumnfalse}
}]

\section{Introduction}\label{sec:intro}

Sampling several answers from a language model and returning the one a verifier scores highest is among the simplest ways to buy accuracy with computation at test time \citep{codex,webgpt,largelanguagemonkeys,weaver2025}. Its effect is reported as a scaling curve: the accuracy of the selected answer at each budget $k=1,\dots,K$ on a fixed benchmark (Figure~\ref{fig:hero}a). Readers use the curve to decide how many samples are worth paying for, and they usually decide on a plateau. On the 185 held-out score pools studied below, the median curve comes within $1/32$ of its best value by $k=10$. The differences that matter are a few points.

A budget read off a curve is chosen after looking at every point. Only a band that holds at all budgets at once protects that choice. Pointwise intervals do not, and neither does the bootstrap: on the same pools, a within-question bootstrap band at nominal level 95\% missed the exact curve in 58 of 925 runs. Valid bands exist, and they are expensive. For 100 MMLU-Pro questions, one path of 64 answers per question, 6,400 answers in all, already gives an unbiased estimate of every point. Certifying the curve to within $\pm1/32$ with a fixed exact-binomial design takes 192,000 (Figure~\ref{fig:hero}c).

Most of that cost pays for the wrong uncertainty. The benchmark is a fixed list, and the accuracy it reports is an average over that list. An audit that samples questions at random pays for which questions it happened to draw. At budget 64, about three quarters of the variance of the selected answer's correctness is of that kind (Figure~\ref{fig:hero}b). Visiting every question in every round removes it, as stratified sampling removes between-stratum variance \citep{cochran1977}. Balance by itself buys nothing, though. An exact-binomial audit costs the same on balanced rounds as on random ones (Section~\ref{sec:ablation}), because its interval cannot see a smaller variance. The saving appears only when the confidence sequence can see it, and its size depends on how the remaining variance accumulates along a whole curve. Working this out is the subject of the paper.

\begin{figure*}[t]
\centering\includegraphics[width=\textwidth]{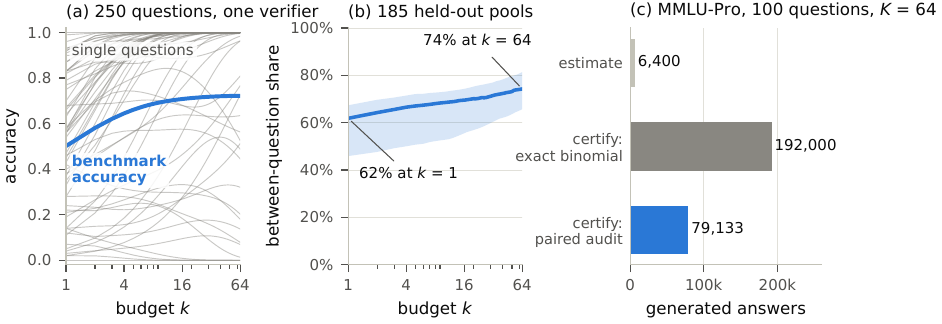}
\caption{Why a certified curve can be cheap. (a) Accuracy curves $p_k(x)$ of single questions spread from 0 to 1, while the benchmark average rises smoothly (MATH500, answers from Llama-3.1-8B-Instruct, one reward model; 60 of the 250 questions are drawn). (b) Median share of the variance of the selected answer's correctness that lies between questions, with interquartile band, over the 185 held-out pools. (c) Generated answers needed for 100 MMLU-Pro questions and 64 budgets: an unbiased estimate of every point, a band from a fixed exact-binomial design, and a band from the paired audit of Section~\ref{sec:audit}, both at half-width $1/32$ with 95\% simultaneous coverage.}
\label{fig:hero}
\end{figure*}

\paragraph{What a certified curve costs.} We model an audit that may choose questions, grow paths of answers, query correctness and stop, all adaptively, and must return a band of half-width $\eps$ that covers every budget with probability $1-\delta$, whatever the answers look like. Three resources are counted: generated answers, correctness queries and question visits. The results are these.
\begin{itemize}[leftmargin=*]
\item \emph{The price on a fixed list} (Theorem~\ref{thm:frontier}). On a benchmark of $M$ questions, certifying all budgets up to $K$ takes $K/\eps+\min(M,\eps^{-2})+s/\eps^2$ generated answers up to logarithmic factors, where $s$ bounds the within-question variance summed along the curve. The three terms are three obstructions: a rare high-scoring answer that decides the largest budgets, the identity of the questions, and within-question noise. Each has a matching lower bound, and the first is paid at every law. Correctness queries obey a law of the same shape, and labels for the whole curve cost what labels for its first point cost, up to logarithms. A coupling of nested winners on one path (Lemma~\ref{lem:coupling}) is what lets one path serve every budget. For fresh questions from a population the price is $K/\eps^2$ instead, while simultaneity over budgets adds only a $\log\log K$ term to the number of questions (Proposition~\ref{prop:joint}).
\item \emph{The price at one benchmark} (Theorem~\ref{thm:instance}). At a fixed benchmark every valid audit pays $K/\eps+\Gamma/\eps^2$, where $\Gamma$ is the variance of one answer's influence under the best allocation of answers to questions, and an audit that learns that allocation attains $\Gamma/\eps^2$ up to a logarithmic factor as $\eps\to0$. At a fixed precision no audit matches every benchmark's own optimum (Proposition~\ref{prop:priced}).
\item \emph{A paired audit} (Theorem~\ref{thm:paired}). Two fresh paths per question and an exponential inequality for pairs of Bernoulli draws (Lemma~\ref{lem:paired}) give anytime-valid simultaneous bands whose penalty measures the within-question variance without a pilot. An exact-binomial final look with a corrected level (Lemma~\ref{lem:edgecp}) ends the audit at a fixed round. Run beside the multilevel audit behind Theorem~\ref{thm:frontier}, it gives one audit with the minimax rate at no more than $32/31$ of its own cost, plus $K$ answers (Corollary~\ref{cor:portfolio}).
\item \emph{Other curves and settings} (Section~\ref{sec:beyond}). The same machinery certifies pass@$k$ and majority voting, populations of questions, and trajectories whose answers depend on earlier ones. When score percentiles are unknown, the minimax mean squared error of a single question's curve is asymptotically $4/e$ times its value with known percentiles whenever generated answers, not labels, are the bottleneck (Theorem~\ref{thm:oracle}).
\item \emph{Evidence} (Section~\ref{sec:exp}). On 185 held-out pools the paired audit uses 0.74, 0.66 and 0.53 times the generated answers of the cheapest competing certified audit at $K=64$, 256 and 1024, and it is cheaper on all eight generator--benchmark answer sets. It missed none of its 3,255 runs against the exact curves. Its cost follows the three terms of the theory ($R^2=0.995$), and a cost law fitted before a new study predicted that study's bill within 0.6\%.
\end{itemize}

\paragraph{Outline.} Section~\ref{sec:model} sets up the problem. Sections~\ref{sec:fresh}--\ref{sec:instance} derive the price of a certified curve, first for fresh questions, then on a fixed list, then at a single benchmark. Section~\ref{sec:audit} gives the paired audit and Section~\ref{sec:beyond} extends it. Section~\ref{sec:exp} tests it on stored score pools and on a newly generated study. Long proofs are in Appendices~\ref{app:proofs}--\ref{app:oracle} and experimental details in Appendix~\ref{app:details}.

\section{The problem}\label{sec:model}

\paragraph{Questions, answers and scores.} A benchmark is a known list $\mathcal X=\{1,\ldots,M\}$ of questions with equal weights. At question $x$ a generated answer carries a numerical verifier score $S$ and a correctness bit $Y$. The pairs $(S,Y)$ of different answers are independent and identically distributed given $x$, with an unknown law $P_x$. The score can be a reward model, a unit-test count or the model's own mean token log-probability; nothing below depends on where it comes from. A path is a sequence of answers at one question, and $W_k$ is the correctness of the first answer attaining the highest score among its first $k$. The target is the best-of-$k$ curve
\begin{equation}\label{eq:target}
\begin{aligned}
\theta_k&=\frac1M\sum_{x=1}^Mp_k(x),\qquad p_k(x)=\E(W_k\mid x),\\
&1\le k\le K.
\end{aligned}
\end{equation}
Two other curves live on the same list. Pass@$k$ counts a question as solved when some answer among the first $k$ is correct \citep{codex}. Majority voting counts it as solved when the most frequent valid answer among the first $k$ is correct, with plurality ties broken uniformly at random and no valid answer counted as incorrect \citep{selfconsistency}.

\paragraph{Audits and what they pay.} An audit may choose questions, stop paths, query correctness and stop, all adaptively. It must return intervals $I_k$ of width at most $2\eps$ with
\[
\Prb\{\theta_k\in I_k\text{ for all }k\le K\}\ge1-\delta
\]
for \emph{every} answer law. It pays in three currencies: generated answers $N$, correctness queries $m$ (one per queried answer occurrence), and question visits $n$ (one per path started). Answers usually dominate the bill, since each needs a forward pass of the model, while a query may be cheap when the grader is a string comparison. We report all three. Unless stated otherwise $\delta=0.05$ and $\eps=1/32$.

\paragraph{What a band buys.} Simultaneity is what makes a budget choice safe. The following facts use nothing but coverage, so they hold whatever rule picked the budget after seeing the band.

\begin{proposition}[Decisions from a band]\label{prop:decisions}
Suppose $[l_k,u_k]$ covers $\theta_k$ for every $k\le K$, and $u_k-l_k\le2\eps$.
\begin{enumerate}[leftmargin=*,label=(\roman*)]
\item The set $\{k:u_k\ge\max_jl_j\}$ contains every budget that maximizes $\theta_k$.
\item Every $k$ with $l_k\ge\max_ju_j-\tau$ has $\theta_k\ge\max_j\theta_j-\tau$; the budget that maximizes $l_k$ qualifies with $\tau=2\eps$.
\item For known costs $c(k)$ and a price $\lambda\ge0$, a maximizer $\widehat k$ of $(l_k+u_k)/2-\lambda c(k)$ has $\max_k\{\theta_k-\lambda c(k)\}-\{\theta_{\widehat k}-\lambda c(\widehat k)\}\le2\eps$.
\item If a second system has a band $[l'_k,u'_k]$ covering its curve $\theta'_k$, then $l_k>u'_k$ implies $\theta_k>\theta'_k$, and both bands cover at once with probability at least $1-\delta-\delta'$.
\end{enumerate}
\end{proposition}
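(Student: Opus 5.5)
All four parts are deterministic consequences of the coverage event $\mathcal C=\{l_k\le\theta_k\le u_k\ \text{for all }k\le K\}$ together with the width bound $u_k-l_k\le2\eps$. The plan is therefore to work on $\mathcal C$ throughout, using only elementary inequalities. Because no step uses how a budget was chosen, the conclusions hold for any data-dependent selection rule. Probability enters only in (iv), through a union bound.

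For (i), let $k^\star$ maximize $\theta_k$. For every $j$ we have $l_j\le\theta_j\le\theta_{k^\star}\le u_{k^\star}$, so $u_{k^\star}\ge\max_jl_j$ and $k^\star$ lies in the set.

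For (ii), take $k$ with $l_k\ge\max_ju_j-\tau$. Then
\[
\theta_k\ge l_k\ge\max_ju_j-\tau\ge\max_j\theta_j-\tau.
\]
For the second claim, let $\widehat k$ maximize $l_k$, and let $j^\star$ attain $\max_ju_j$. Then
\[
l_{\widehat k}\ge l_{j^\star}\ge u_{j^\star}-2\eps=\max_ju_j-2\eps,
\]
so $\widehat k$ qualifies with $\tau=2\eps$.

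For (iii), write $m_k=(l_k+u_k)/2$. On $\mathcal C$ the width bound gives $|m_k-\theta_k|\le\eps$ for every $k$. Let $k^\star$ maximize $\theta_k-\lambda c(k)$. Since $\widehat k$ maximizes $m_k-\lambda c(k)$,
\[
\theta_{k^\star}-\lambda c(k^\star)\le m_{k^\star}-\lambda c(k^\star)+\eps\le m_{\widehat k}-\lambda c(\widehat k)+\eps\le\theta_{\widehat k}-\lambda c(\widehat k)+2\eps.
\]
This is exactly the claimed regret bound of $2\eps$.

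For (iv), on the joint coverage event $\theta_k\ge l_k>u'_k\ge\theta'_k$. Each band fails with probability at most $\delta$ and $\delta'$ respectively, so by the union bound both cover simultaneously with probability at least $1-\delta-\delta'$. This needs no independence between the two audits.

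There is no real obstacle here. The only point that needs care is part (iii), where the optimization must be done against midpoints rather than the $\theta_k$; the bound $|m_k-\theta_k|\le\eps$, applied once at $k^\star$ and once at $\widehat k$, yields the factor $2\eps$.
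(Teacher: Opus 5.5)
Your proposal is correct and follows essentially the same approach as the paper. It uses the same inequality chains on the coverage event for (i)--(iii), including the midpoint bound $|m_k-\theta_k|\le\eps$ applied at both $k^\star$ and $\widehat k$ in (iii), and a union bound for (iv).
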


\begin{proof}
If $k^*$ maximizes $\theta$, then $u_{k^*}\ge\theta_{k^*}\ge\theta_j\ge l_j$ for every $j$, which is (i). For (ii), $\theta_k\ge l_k\ge\max_ju_j-\tau\ge\max_j\theta_j-\tau$; and $\max_kl_k\ge\max_j(u_j-2\eps)$. For (iii), write $c_k=(l_k+u_k)/2$, so $|c_k-\theta_k|\le\eps$; for any $k$, $\theta_k-\lambda c(k)\le c_k+\eps-\lambda c(k)\le c_{\widehat k}+\eps-\lambda c(\widehat k)\le\theta_{\widehat k}+2\eps-\lambda c(\widehat k)$. Part (iv) is a union bound.
\end{proof}

Pointwise intervals give none of this: a budget that looks best among $K$ pointwise intervals is best by selection as much as by accuracy.

\paragraph{Two kinds of variance.} Write
\begin{equation}\label{eq:variance}
\begin{aligned}
\vbar_k&=\frac1M\sum_xp_k(x)\{1-p_k(x)\},\\
\sumw&=\sum_{j=1}^K\max_{j\le k\le K}\vbar_k .
\end{aligned}
\end{equation}
For a question drawn uniformly from the list, the law of total variance splits the variance of $W_k$ as
\begin{equation}\label{eq:split}
\theta_k(1-\theta_k)=\vbar_k+\Var_x p_k(x),
\end{equation}
a within-question part and a between-question part. The second part is what random questions cost and balanced visits avoid. The tail sum $\sumw$ charges the answer at position $j$ of a path for the noisiest budget from $j$ on that still needs it. If every budget had within-question variance $v$, then $\sumw=Kv$. When the selected answer at a question becomes reliably right or reliably wrong as $k$ grows, $\sumw$ is much smaller: on the MMLU-Pro study of Section~\ref{sec:luna}, $\sumw=2.57$ against a worst case of $K/4=16$ at $K=64$.

\section{Fresh questions: the population price}\label{sec:fresh}

Start with the setting that most evaluation work assumes. Each path begins at a fresh question drawn from a population, and the target is the population average $\theta_k=\E_Xp_k(X)$. Every draw then carries the between-question variance, and the price is the following (proof in Appendix~\ref{app:joint}).

\begin{proposition}[Fresh questions]\label{prop:joint}
Let $0<\eps\le1/32$, $0<\delta\le1/16$ and $J=1+\lfloor\log_{16}K\rfloor$. Under deterministic resource caps, with an abort counted as a failure, every valid audit needs
\[
\begin{aligned}
n&\gtrsim\frac{\log(J/\delta)}{\eps^2},\qquad
m\gtrsim\frac{J\log(J/\delta)}{\eps^2},\\
N&\gtrsim\frac{K\log(1/\delta)}{\eps^2},
\end{aligned}
\]
and one predetermined record audit attains all three orders together.
\end{proposition}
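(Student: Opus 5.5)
The plan is to prove the three lower bounds by change-of-measure arguments against carefully chosen pairs or families of answer laws, and the upper bound by analysing one explicit audit.

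\textbf{Lower bounds.} In every construction I would use the standard data-processing inequality for adaptive experiments: if the audit distinguishes laws $P$ and $Q$ with error at most $\delta$ under both, then $\E_P[\text{information}]\ge\kl(\delta,1-\delta)\gtrsim\log(1/\delta)$. The information term is computed by Wald's identity as the expected number of resource units spent times the per-unit KL divergence. Deterministic caps, with abort counted as failure, ensure the stopping time is bounded, so Wald applies. I would split the $K$ budgets into the $J$ blocks $B_j=[16^{j-1},16^j)\cap[1,K]$.

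For $n$, I would make every $p_k(X)$ equal to a question-level bit of mean $1/2\pm\eps$ (for instance, every answer at $X$ equals that bit). Then $\theta_k$ is constant in $k$, and a question visit is worth $O(\eps^2)$ nats however many answers it yields. Taking $J$ alternatives in which the shift occurs in block $B_j$ only, and encoding the block index into the question-level law, Fano's inequality combined with the $\kl$ bound gives $n\gtrsim\log(J/\delta)/\eps^2$.

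For $m$, I would choose scores so that the winners for budgets in different blocks are almost surely \emph{different} answers. Scores increase along the path, and the block-$j$ winner's correctness carries the block-$j$ shift. Then one label is informative about at most one block. Each block requires $\log(J/\delta)/\eps^2$ labels, and summing over blocks gives $m\gtrsim J\log(J/\delta)/\eps^2$.

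For $N$, I would build a rare-top-score construction. With probability about $1/K$ an answer lands in a top score band whose correctness has mean $1/2\pm c\eps$, so the shift moves $\theta_K$ by order $\eps$. Each generated answer then reveals $O(\eps^2/K)$ nats, which gives $N\gtrsim K\log(1/\delta)/\eps^2$.

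\textbf{Upper bound (record audit).} The predetermined record audit draws $n\asymp\log(J/\delta)/\eps^2$ fresh questions and grows one path of $K$ answers at each, so $N=nK$ matches the third bound. It queries correctness only at answers that set a new running maximum score and that are the winners at the endpoint of some block $B_j$ or at a record inside it. By Lemma~\ref{lem:coupling}, the values $W_1,\dots,W_K$ on a path are all read from these record answers. Each $W_k$ is an average of i.i.d.\ bounded terms across questions and has mean $\theta_k$, so Hoeffding's inequality gives pointwise validity. For simultaneity, I would apply a union bound over the $J$ blocks and control fluctuation within a block by a maximal inequality: within a block the process $k\mapsto W_k$ changes only at records, and the expected number of records in $B_j$ is $\log 16=O(1)$. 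This gives the $\log(J/\delta)$ factor rather than $\log(K/\delta)$. The expected number of queries per path is $O(J)$, giving $m\asymp J\log(J/\delta)/\eps^2$.

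The caps must be deterministic, so I would truncate the number of records per block at a constant multiple of its mean plus $\log(J/\delta)$, and absorb the truncation probability into $\delta$.

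\textbf{Main obstacle.} The hardest step is the within-block simultaneity of the upper bound. Controlling $\sup_{k\in B_j}|\hat\theta_k-\theta_k|$ at cost $\log(J/\delta)$ instead of $\log(K/\delta)$ needs a chaining argument along the record times, since $\theta_k$ need not be monotone. I would first try bounding the sup by the fluctuation at the block endpoints plus the empirical frequency of records in the block, which concentrates at rate $O(\eps)$.

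A secondary difficulty is the $m$ lower bound: the construction must make the $J$ winners genuinely disjoint while keeping each $\theta_k$ valid. This is why the blocks are geometric with base $16$, which guarantees that a new record appears between consecutive block endpoints with constant probability.
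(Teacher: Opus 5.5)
Your rare-top-band construction for the $N$ lower bound matches the paper's. The upper bound, however, fails exactly where the proposition makes its sharpest claim. Growing all $n\asymp\eps^{-2}\log(J/\delta)$ paths to length $K$ gives $N=nK\asymp K\eps^{-2}\{\log J+\log(1/\delta)\}$. With $\delta$ fixed, this exceeds the lower bound $K\eps^{-2}\log(1/\delta)$ by a factor of order $\log\log K$. So $N=nK$ does not match the third bound, and your audit does not attain the three orders together. The missing idea is nesting:
\begin{itemize}
\item Take $D=\lceil\log_2K\rceil$, $b_j=\min(2^j,K)$ and levels $\delta_j=\delta/\{4(D+1)\}+\delta2^{j-D-2}$. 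These increase in $j$ and sum to less than $3\delta/4$.
\item Extend only the first $n_j\asymp\eps^{-2}\log(2/\delta_j)$ paths to length $b_j$, and certify the budgets in $(b_{j-1},b_j]$ from those paths.
\item The number of questions is then $n_0\asymp\eps^{-2}\log(D/\delta)$, so the $\log J$ is paid in visits (and, times $H_K$, in queries).
\item The long blocks, which dominate the answer count, run at a level of order $\delta$. Since $\log(2/\delta_j)\le\log(8/\delta)+(D-j)\log2$ and $\sum_j(b_j-b_{j-1})(D-j)<2K$, the answers total $O\{K\eps^{-2}\log(1/\delta)\}$.
\end{itemize}

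\textbf{Within-block simultaneity.} The step you flag is not closed, and your fallback cannot close it. Across a block spanning a factor $16$, the winner changes with probability $15/16$. The empirical record frequency therefore concentrates around a constant, and bounding the supremum by it gives an error of order one. Shrinking the groups until that probability is $O(\eps)$ costs an extra $\log(1/\eps)$ in the union bound. What works is chaining:
\begin{itemize}
\item For $k\le\ell$, $\E(W_k-W_\ell)^2\le1-k/\ell\le\log(\ell/k)$.
\item Hence the pointwise minimum and maximum over budgets whose logarithms lie in an interval of length $r^2$ form an $L_2$ bracket of size $r$.
\item The bracketing number is at most $2+\lceil\log K/r^2\rceil$, and the entropy integral is $O\{\sqrt{\log(2+\log K)}\}$, a constant inside a dyadic block.
\item A bracketing maximal inequality plus McDiarmid's inequality finish the argument.
\end{itemize}
(Lemma~\ref{lem:coupling} is not what lets you read every $W_k$ from records; the winner simply changes only at records.)

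\textbf{The $n$ lower bound.} If every answer at $X$ equals one question-level bit, then $p_k(X)$ is constant in $k$. No alternative can then move block $B_j$ alone, and you are left with a two-point problem worth only $\log(1/\delta)$. Use one independent bit per question and per score band instead. For example, the band $I_j=[16^{-j},2\cdot16^{-j}]$ in $t=-\log u$ holds the budget-$16^j$ winner with probability $e^{-1}-e^{-2}$. It gives a diagonally dominant, hence invertible, map from signs to $(\theta_{16^j})_j$.

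\textbf{The $m$ lower bound.} Answers are i.i.d., so scores cannot ``increase along the path''. Winners of different blocks coincide with positive probability, so they are not almost surely different answers. The same disjoint bands give you the one-label-one-band property instead. More importantly, ``each block requires $\log(J/\delta)/\eps^2$ labels'' is the crux, not a step:
\begin{itemize}
\item A per-band two-point test yields only $\log(1/\delta)$.
\item Fano over the $2^J$ sign vectors yields only $m\gtrsim J/\eps^2$.
\item The extra $\log J$ comes from the product structure: all $J$ independent signs must be right at once, while labels are allocated adaptively across bands.
\end{itemize}
The paper obtains it by comparing the posterior log-odds this requires, $\E\sum_j|L_j|\ge\frac J2\log\frac J{8\delta}$, with what $m$ labels can supply, $16a^2m+8a\sqrt{Jm}$ with $a=8\eps$.
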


The three currencies behave very differently. Generation is where the curve is expensive: every question costs about $K$ answers, because the largest budget needs a full path. Labels are cheap. A path needs correctness only at its score records, the answers that beat every earlier score, and when scores do not tie, the $k$th answer is a record with probability $1/k$ whatever the question \citep{renyi}; so a path of length $K$ needs about $H_K=\sum_{j\le K}1/j\approx\log K$ labels, and the lower bound shows that a factor of that size is unavoidable. Questions are nearly free: simultaneity over all $K$ budgets adds only $\log J\approx\log\log K$ to the $\log(1/\delta)$ that a single budget pays.

The reason is a geometric fact about winners. The winners at budgets $k\le\ell$ coincide unless the best of the first $\ell$ answers comes after position $k$, which happens with probability at most $1-k/\ell$. So
\[
\E\{(W_k-W_\ell)^2\mid x\}\le1-\frac k\ell\le\log\frac\ell k,
\]
and budgets within a factor $e^{r^2}$ of each other move together to within $r$ in root mean square. At resolution $r$ the curve has only about $\log K/r^2$ distinct budgets, and a chaining bound turns that count into a $\log\log K$ price for the supremum over all of them. A fixed list changes everything else.

\section{A fixed list of questions}\label{sec:theory}

Estimating every budget separately would cost $K$ times one budget. It does not, because two winners on the same path rarely disagree.

\begin{lemma}[Nested winners]\label{lem:coupling}
For $a\le k\le2a$, under first-maximum selection and with ties allowed,
\[
\E\{(W_k-W_a)^2\mid x\}\le p_a(x)\{1-p_a(x)\}.
\]
\end{lemma}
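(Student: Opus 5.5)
The plan is to condition on the scores, which reduces the claim to an inequality about one function of the score, and then to use a symmetry between two blocks of equal size $a$. Write the first $k$ answers of the path as a block $B$ (positions $1,\dots,a$) followed by a block $C$ (positions $a+1,\dots,k$), with $|C|=k-a\le a$. Let $M_B,M_C$ be the block maxima of the scores, and $W_B=W_a$, $W_C$ the correctness of the first maximizer in each block. Under first-maximum selection the winner of the first $k$ lies in $C$ exactly when $M_C>M_B$; ties go to $B$. Hence
\[
(W_k-W_a)^2=\ind\{M_C>M_B\}\,\ind\{W_C\neq W_B\}.
\]

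\textbf{Step 1: conditioning on the scores.} Which index is selected depends only on the score vector. The pairs $(S,Y)$ are i.i.d. given $x$, so conditionally on all scores the selected answer's $Y$ is $\Ber(q(s))$, where $s$ is its score and $q(s)=\Prb(Y=1\mid S=s,x)$. The two blocks' winners are distinct answers, so their labels are conditionally independent. Therefore
\[
\E\{(W_k-W_a)^2\mid x\}=\E\bigl[\ind\{M_C>M_B\}\,f(M_B,M_C)\bigr],\qquad f(s,t)=q(s)\{1-q(t)\}+q(t)\{1-q(s)\}.
\]
Here $f$ is symmetric, and $p_a=\E\,q(M_B)$. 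To handle ties and atoms, apply a randomized probability-integral transform: replace each score by a uniform variable $U$ that preserves strict order, breaking ties consistently with first-maximum selection. The problem then lives on $[0,1]$, with $M_B\sim\mathrm{Beta}(a,1)$ and $M_C\sim\mathrm{Beta}(k-a,1)$, independent.

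\textbf{Step 2: the symmetric benchmark.} Take two independent blocks $B$ and $B'$, each of size $a$. Each winner's label is $\Ber(p_a)$, and the labels are independent, so $\Prb(W_B\neq W_{B'})=2p_a(1-p_a)$. By exchangeability of $B$ and $B'$, together with $f\ge0$ on the tie event,
\[
\E\bigl[\ind\{M_{B'}>M_B\}\,f(M_B,M_{B'})\bigr]\le p_a(1-p_a).
\]
It therefore suffices to show that shrinking the second block from size $a$ to size $k-a\le a$ cannot increase the left-hand side. Equivalently, in terms of uniforms, with $g(t)=\E[\ind\{M_B<t\}f(M_B,t)]$, I need
\[
\int_0^1 g(t)\,d(t^{k-a})\le\int_0^1 g(t)\,d(t^{a}).
\]

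\textbf{Step 3: the main obstacle.} The function $g$ need not be monotone, because $q$ is arbitrary. So the comparison in Step 2 cannot come from stochastic dominance of $\mathrm{Beta}(a,1)$ over $\mathrm{Beta}(k-a,1)$ alone. This is where $k\le2a$ must enter.

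What I would try first is to rewrite both sides as integrals over the pair $(M_B,t)$ with $M_B<t$. The difference of densities of the smaller block's maximum is $(k-a)t^{k-a-1}-at^{a-1}$, and the weight from $M_B$ is $a\,s^{a-1}$. I would then show that the resulting signed kernel, symmetrized in $(s,t)$ using $f(s,t)=f(t,s)$, is nonpositive when $k-a\le a$. Concretely, I would compare the contribution of a pair $(s,t)$ in the region $s<t$ with that of the mirrored pair $(t,s)$, which appears in the symmetric identity of Step 2.

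If that pointwise kernel comparison fails, the fallback is to write $f(s,t)=(q(s)-q(t))^2+q(s)(1-q(s))+q(t)(1-q(t))-\ldots$, i.e. to decompose $f$ into a sum of products $\phi(s)\psi(t)$. I would then verify the inequality on each product term, using extremal (indicator-type) choices of $q$, since the left-hand side is linear in each factor.
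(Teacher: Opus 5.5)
Your Steps 1 and 2 are sound, but Step 3, which you leave open, is the whole lemma rather than a reduction of it. In uniform coordinates the tie event has probability zero, so $\int_0^1 g\,d(t^a)$ equals $p_a(1-p_a)$ exactly, and $\int_0^1 g\,d(t^{k-a})$ equals $\E\{(W_k-W_a)^2\mid x\}$. The inequality you still owe is therefore the original claim, and neither route you sketch proves it.

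The pointwise route cannot work. With $c=k-a<a$, the kernel $a s^{a-1}\{c t^{c-1}-a t^{a-1}\}$ on $\{s<t\}$ is positive wherever $t^{a-c}<c/a$; for $a=2$, $c=1$ it is $2s(1-2t)$. Symmetrizing in $(s,t)$ changes nothing, because the benchmark density $a^2s^{a-1}t^{a-1}$ is already symmetric.

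The fallback fails term by term as well. With $\sigma^2=q(1-q)$ you have $f(s,t)=\{q(s)-q(t)\}^2+\sigma^2(s)+\sigma^2(t)$. The $\sigma^2(s)$ part contributes $\int_0^1\sigma^2(s)\,a s^{a-1}(s^a-s^c)\,ds\le0$. The $\sigma^2(t)$ part contributes $\int_0^1\sigma^2(t)\,(c t^{a+c-1}-a t^{2a-1})\,dt$, which is positive when $\sigma^2$ is concentrated near $0$. So the terms must cancel against one another. Passing to indicator $q$ is also unjustified: the functional is quadratic in $q$, and the same positive kernel makes it concave along perturbations of $q$ supported near $0$. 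It is therefore not convex, and its maximum need not sit at an indicator.

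The missing idea is to compare two overlapping blocks of the \emph{same} size $a$ inside the length-$k$ path, instead of a block of size $a$ and a disjoint block of size $k-a$. Let $A_1$ be the first $a$ positions and $A_2$ the last $a$. Because $k\le2a$, they cover the path and share $2a-k$ positions; this is where the hypothesis enters.

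Let $U$, $V$, $W$ be the correctness of the winners of $A_1$, $A_2$ and the whole path, with a unique maximum. Then $W\in\{U,V\}$, so $\Prb(W\ne U)+\Prb(W\ne V)=\Prb(U\ne V)$. Exchanging the two unshared pieces, both of size $k-a$, preserves the law of the path, fixes the overall winner and swaps $U$ with $V$. Hence $\Prb(W_k\ne W_a)=\tfrac12\Prb(U\ne V)$. Given the shared answers, $U$ and $V$ are independent with a common conditional mean $g$. This gives $\Prb(W_k\ne W_a)=p_a(1-p_a)-\Var(g)$ exactly, with no kernel inequality at all. Your disjoint split destroys precisely this exchange symmetry.

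Separately, a probability-integral transform cannot break ties ``consistently with first-maximum selection'' and still keep $M_B\sim\mathrm{Beta}(a,1)$ independent of $M_C$. Instead, break ties by independent keys. Then $\ind\{M_C>M_B\}$ is at most the indicator of the strict comparison after the tie-break, and $f\ge0$, so the continuous case bounds the first-maximum case from above. This is how the paper handles ties.
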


\begin{figure*}[t]
\centering\includegraphics[width=\textwidth]{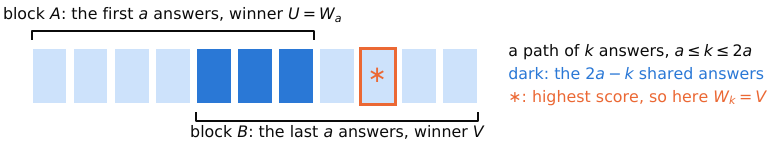}
\caption{The coupling behind Lemma~\ref{lem:coupling}. The highest score of the whole path lies in block $A$ or in block $B$, so the winner at budget $k$ is the winner of one of the two blocks. Exchanging the unshared parts of the blocks swaps their winners and leaves the law of the path unchanged.}
\label{fig:nested}
\end{figure*}

\begin{proof}
Fix $x$ and suppose first that the highest score on any path is attained once, almost surely. Take a path of length $k$ with $a\le k\le 2a$. Call its first $a$ positions block $A$ and its last $a$ positions block $B$; they share the $2a-k$ positions in the middle (Figure~\ref{fig:nested}). Let $U$, $V$ and $W$ be the correctness of the winners of $A$, of $B$ and of the whole path. The winner of the whole path is the winner of $A$ or the winner of $B$, so $W\in\{U,V\}$. If $U=V$, both agree with $W$; if $U\ne V$, exactly one of them does. Hence $\Prb(W\ne U)+\Prb(W\ne V)=\Prb(U\ne V)$. Exchanging the unshared parts of $A$ and $B$ does not change the law of the path, leaves the winner of the whole path in place, and swaps $U$ with $V$. The two probabilities on the left are therefore equal, and $\Prb(W\ne U)=\tfrac12\Prb(U\ne V)$.

Given the shared positions, $U$ and $V$ are independent with a common conditional mean $g$. Hence $\E(UV)=\E g^2$, $\E U=\E V=\E g=p_a(x)$, and
\[
\begin{aligned}
\Prb(W\ne U)
&=\tfrac12\bigl(\E U+\E V-2\E UV\bigr)\\
&=p_a(x)\{1-p_a(x)\}-\Var(g)\\
&\le p_a(x)\{1-p_a(x)\}.
\end{aligned}
\]
Since $W=W_k$ and $U=W_a$, this is the claim. When $k=2a$ the blocks are disjoint, $g$ is constant and equality holds.

For ties, attach to every answer an independent uniform key and rank by score, then by key. Given the scores, the correctness bits of different answers are independent, and answers with equal scores share one conditional correctness law, so the selected correctness has the same law under this rule as under first-maximum selection; in particular $p_a(x)$ is unchanged. If the path of length $k$ reaches a score strictly above the maximum of its first $a$ positions, the two winners carry different scores under either rule, and the disagreement event has the same probability under both. If the two maxima are equal, first-maximum selection keeps the same answer, so $W_k=W_a$, while the randomized rule may move to another tied answer. The disagreement probability under first-maximum selection is therefore at most the one under the randomized rule, which the argument above bounds.
\end{proof}

Budgets a factor two apart thus differ by at most the within-question noise of the smaller one, and none of the between-question variance enters. The rest of the theory is about how this noise adds up along a curve. Let $\mathcal C(M,K,v,s)$ be the laws with continuous conditional score distributions, $\max_k\vbar_k\le v$ and $\sumw\le s$, where $0\le v\le1/4$ and $0\le s\le Kv$. Put $M_\eps=\min(M,\eps^{-2})$, $a_*=\min(v,s)$ with $a_*\{1+\log(s/a_*)\}=0$ when $a_*=0$, $H_K=\sum_{j\le K}1/j$ and $\kappa_\delta=\kl(1-\delta,\delta)$, the Kullback--Leibler divergence between Bernoulli laws with means $1-\delta$ and $\delta$. Audits must stay valid for \emph{all} laws, inside the class and outside it.

\begin{theorem}[Price of a certified curve]\label{thm:frontier}
Let $0<\delta\le1/4$ and $0<\eps\le1/32$.
\begin{enumerate}[leftmargin=*,label=(\roman*)]
\item Over $\mathcal C(M,K,v,s)$ the minimax expected number of generated answers is
\begin{equation}\label{eq:frontN}
\widetilde\Theta\Bigl(\frac K\eps+M_\eps+\frac s{\eps^2}\Bigr).
\end{equation}
\item For every valid audit there is a law in $\mathcal C(M,K,v,s)$ at which its expected number of correctness queries is at least a constant, depending only on $\delta$, times
\begin{equation}\label{eq:frontm}
\frac{H_K}\eps+M_\eps+\frac{a_*\{1+\log(s/a_*)\}}{\eps^2}.
\end{equation}
\item One audit, told neither $v$ nor $s$, attains both \eqref{eq:frontN} and \eqref{eq:frontm}, up to factors that are polynomial in $\log\{K/(\eps\delta)\}$.
\end{enumerate}
\end{theorem}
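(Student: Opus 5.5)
The plan is to prove the lower bounds term by term, each with a two-point or many-point construction inside $\mathcal C(M,K,v,s)$, and then to build one multilevel audit whose cost matches their sum. For the $K/\eps$ term in \eqref{eq:frontN}, take every question with continuous scores and let a rare top-score event of probability of order $1/K$ per answer carry correctness $1$ in one law and $0$ in the other. This moves $\theta_K$ by about $2\eps$ while leaving $\theta_k$ for small $k$ essentially fixed. A valid audit must observe such an answer with constant probability under the alternative. Each answer reveals the event with probability $\sim\eps/K$ once the event's mass is tuned to $\eps/K$ and its correctness contributes order $\eps$ to $\theta_K$. The standard change-of-measure bound, expected count times per-sample KL $\ge\kappa_\delta$, then gives $N\gtrsim K\kappa_\delta/\eps$. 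The labels analogue $H_K/\eps$ uses the same construction, since only score records need labels. By the R\'enyi record count quoted in Section~\ref{sec:fresh}, each path exposes only $H_K$ candidates.

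For $M_\eps$, I would use deterministic questions ($p_k(x)\in\{0,1\}$, so $\vbar_k=0$) and hide which of $\min(M,\eps^{-2})$ questions are flipped. This is a Fano or Assouad argument: unvisited questions leave $\theta$ undetermined to within $\eps$ unless order $M_\eps$ questions are seen. For $s/\eps^2$, I would spread within-question variance along the curve. Partition $[1,K]$ dyadically, and on each block let $p_k(x)$ wobble with variance $\vbar_k$ so that $\sum_j\max_{k\ge j}\vbar_k\approx s$. A budget-$k$ point then needs about $\vbar_k/\eps^2$ paths of length $k$, and summing the lengths over the dyadic levels reproduces the tail sum $\sumw$. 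For labels the answer cost is replaced by record counts, which yields $a_*\{1+\log(s/a_*)\}$. Here the $\log$ comes from the number of dyadic levels on which variance $a_*$ can be placed before the budget $s$ is exhausted.

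For the upper bound (iii), I would use a multilevel Monte Carlo estimator over dyadic budgets $a=2^i$. Every question is visited in balanced rounds, which removes the between-question term (\ref{eq:split}). By Lemma~\ref{lem:coupling} the increment $W_{2a}-W_a$ on a shared path has second moment at most $p_a(1-p_a)$, so level $i$ needs about $\vbar_{2^i}\log(\cdot)/\eps^2$ rounds of paths of length $2^{i+1}$. Summing gives $\sumw/\eps^2$ up to a factor 2. The tail budgets inside one dyadic block are controlled by the same lemma, and the $K/\eps$ term comes from a separate Bernstein-type treatment of the top levels where variance is tiny and the range dominates. Since $v$ and $s$ are unknown, each level runs an empirical-Bernstein confidence sequence with a union bound over $\log K$ levels and a doubling schedule. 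This costs only the stated polylogarithmic factors and gives adaptivity. Labels are queried only at records, giving \eqref{eq:frontm}.

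I expect the main obstacle to be the $s/\eps^2$ lower bound. The construction must realise an arbitrary profile $\vbar_k$ with a prescribed tail sum while keeping continuous scores, and the information about each budget's wobble must actually require paths of that length. Shorter paths also carry information, because they constrain $p_k$ through the coupling. I would first try a construction where the wobble at level $i$ is carried only by answers whose scores exceed the $(1-2^{-i})$ quantile. Then any path shorter than $2^i$ sees it with small probability, and the KL argument charges roughly $2^i$ answers per informative observation.
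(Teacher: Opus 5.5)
\textbf{Upper bound.} Your upper bound does not attain \eqref{eq:frontN}. If every level advances in balanced rounds over all $M$ questions, then level $\ell$ pays at least $Mb_\ell$ answers before it can stop, which is $\Omega(KM)$ in all. The estimate of $\theta_1$ likewise pays at least $M$ answers and labels instead of $M_\eps=\min(M,\eps^{-2})$. When $M\gg1/\eps+s/(K\eps^2)$ this start-up exceeds the rate; it is exactly the $KM$ term that keeps the paired audit of Theorem~\ref{thm:paired} off the minimax rate.

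Balance is not needed for the corrections. Lemma~\ref{lem:coupling}, averaged over a uniformly random question, bounds the whole second moment of $W_k-W_{a_\ell}$ by $\vbar_{a_\ell}$, between-question part included. So the paper's levels draw questions at random and pay no start-up. Only $\theta_1$ carries between-question variance. The paper runs two procedures side by side and stops with whichever reaches half-width $\eps/2$ first: a random-order Chernoff procedure capped at $O(\eps^{-2}\log(1/\delta))$ answers, and a paired procedure on complete rounds costing about $2M+\widetilde O(1/\eps+\vbar_1/\eps^2)$.

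Two smaller points. First, level errors add linearly along the telescoping sum, so each level must reach half-width $\eps/(2J)$; that is a $J^2$ factor, not ``a factor 2''. Second, the label bound needs $\sum_\ell\vbar_{a_\ell}\le a_*\{4+\log_2(s/a_*)\}$, which follows from $\vbar_{a_\ell}\le\min(a_*,4s/2^\ell)$; you assert this rather than derive it.

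\textbf{Lower bounds.} Your $H_K/\eps$ label argument bounds labels from above. That an audit \emph{needs} labels only at records says nothing about how many it \emph{must} query. Your rare-event construction in fact forces a single label, since one query in the rare band separates the two laws. The missing idea is a family of $1+\lfloor\log_{16}K\rfloor$ disjoint score bands, band $j$ holding the budget-$16^j$ winner with constant probability. Flipping labels in band $j$ with probability $O(\eps)$ moves $\theta_{16^j}$ by $3\eps$ at divergence $O(\eps)$ per query in that band. Each band therefore forces $\Omega(\kappa_\delta/\eps)$ queries inside it, and disjointness makes these add. The term $a_*\{1+\log(s/a_*)\}/\eps^2$ arises the same way, from label noise $r\asymp a_*$ in about $1+\log(s/a_*)$ bands below the percentile $s/(s+r)$. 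Your count of levels is right, but record counts are not the mechanism.

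For $s/\eps^2$ answers, summing per-level costs is not a lower-bound argument. A long path informs every level, and with wobbles in disjoint quantile bands each answer lands in only one band, so answer costs combine by a maximum, not a sum. You also do not need to realize an arbitrary profile, since a minimax bound needs only one hard law. The paper uses identical questions whose answers are correct exactly in a top score band of mass $q$, tuned so that $p_K=t\le1/2$ and $Kt(1-t)=s$. Then $\vbar_k$ increases, $\sumw=K\vbar_K=s$ and $\max_k\vbar_k=s/K\le v$. Moving $p_K$ by $3\eps$ costs divergence $O\{\eps^2/(Kt)\}$ per answer, which gives $\Omega(s/\eps^2)$ when $s\ge8K\eps$; the calibration bound covers $s<8K\eps$. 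This is your top-quantile idea at a single level.

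Two smaller repairs remain. In your $K/\eps$ argument, a hard $0/1$ flip has infinite per-answer divergence. Use a hitting-time argument instead, or the paper's mixture that adds a top-scoring answer of mass $\beta$, with divergence $-\log(1-\beta)$ per answer. For $M>\eps^{-2}$, independent bits on single questions give $\theta_1$ a spread far below $\eps$. The hidden bits must instead be shared by groups of order $M\eps^2$ questions.
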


\paragraph{Three obstructions.} Each term of \eqref{eq:frontN} is forced by a different kind of law (Appendix~\ref{app:frontier}).
\begin{itemize}[leftmargin=*]
\item \emph{Calibration.} A valid audit must allow for a rare answer that scores above everything seen so far and carries the minority label. If such an answer turns up with probability $3\eps/K$ per draw, $\theta_K$ moves by more than $2\eps$, so the audit has to generate about $K/\eps$ answers before it can rule the answer out. This term is present even when correctness is deterministic, and it is paid at every law: every valid audit generates at least $7\kappa_\delta K/(32\eps)$ answers in expectation, whatever the law.
\item \emph{Telling questions apart.} Give each question a hidden correctness bit that scores do not reveal. An interval of width $2\eps$ for the average needs most of the bits once $M\le\eps^{-2}$, and about $\eps^{-2}$ of them otherwise.
\item \emph{Within-question noise.} A law whose winners change label within a question forces $s/\eps^2$ answers, and the same law forces the label term $a_*\{1+\log(s/a_*)\}/\eps^2$: one law makes both resources expensive at once.
\end{itemize}
Labels follow the same pattern with $1/\eps$ in place of $K/\eps$: in $\widetilde\Theta$ form they cost $1/\eps+M_\eps+a_*/\eps^2$, and (ii) shows that the factors $H_K$ and $1+\log(s/a_*)$ cannot be removed. The first of them is paid at every law as well. For every continuous-score law, every valid audit uses at least $(1+\lfloor\log_{16}K\rfloor)\,3\kappa_\delta/(32\eps)$ correctness queries in expectation, one $3\kappa_\delta/(32\eps)$ for each of the disjoint score scales on which the winners of budgets $1,16,256,\ldots$ live \citep{fitas2026}. Labels for the whole curve cost what labels for $\theta_1$ alone cost, up to these logarithms: a law whose within-question variance halves with each budget already forces $\kappa_\delta a_*/(26\eps^2)$ queries for $\theta_1$. The larger budgets are paid for in generated answers, not in grading. If correctness stays noisy after the score is seen, with conditional success probability between $\eta$ and $1-\eta$, the price of labels rises to $(1+\lfloor\log_{16}K\rfloor)\kappa_\delta\eta c_0^2/(36\eps^2)$ at every such law once $\eps\le\eta c_0/6$, with $c_0=e^{-1/4}-e^{-4}$.

\paragraph{An example.} On our MMLU-Pro study ($M=100$, $K=64$, $\eps=1/32$) the three terms are $2{,}048$, $100$ and $2{,}628$. With the unrestricted envelope $s=K/4$ the last term would be $16{,}384$: the measured within-question variance is six times smaller than the worst case, and the last term decides the bill. These are rate terms, not a predicted bill; logarithmic factors and the start-up of Section~\ref{sec:audit} multiply them. Over the 185 held-out pools, $\sumw$ is a quarter of its worst case at $K=64$ and 6\% of it at $K=1024$.

\paragraph{The multilevel audit.} The upper bound in (iii) is a multilevel Monte Carlo audit \citep{giles2015,rheeglynn2015}. It estimates $\theta_1$ and adds dyadic corrections: at level $\ell$ a path of length $b_\ell=\min(2^\ell,K)$ contributes $W_{\min(k,b_\ell)}-W_{2^{\ell-1}}$ to every budget $k>2^{\ell-1}$ and nothing to smaller budgets. The corrections telescope to $\theta_k-\theta_1$. Lemma~\ref{lem:coupling} bounds the second moment of each correction by $\vbar_{2^{\ell-1}}$, and stopping each level on its own makes its sample size follow its variance. Summing over levels turns these variances into $\sumw$. The audit identifies the rate; the audit we run in practice is the one in Section~\ref{sec:audit}.

\section{One benchmark at a time}\label{sec:instance}

Theorem~\ref{thm:frontier} is a statement about classes of laws. At a fixed precision no audit can be optimal law by law, and the reason is a guess-and-verify argument \citep[Claim~1.8]{narayanan2024}: an audit that guesses a benchmark's correctness table can check its guess with about $\log(2/\delta)/\eps$ labels, and no single audit is that cheap at every table.

\begin{proposition}[No audit is optimal at every law]\label{prop:priced}
Let $\delta\le1/4$, $\eps\le1/128$, $K=1$, $M=\lfloor\pi/(1024\eps^2)\rfloor$ and $q=\lceil\log(2/\delta)/\{-\log(1-\eps)\}\rceil$. For every valid audit there is a law, with continuous scores and deterministic correctness, at which its expected numbers of generated answers, correctness queries and question visits are each at least $M/4$, while another valid audit uses exactly $q$ of each. The ratio $M/(4q)$ is of order $1/\{\eps\log(2/\delta)\}$.
\end{proposition}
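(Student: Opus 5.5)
The plan has two halves, both on the instance family behind the guess-and-verify argument. Fix $K=1$, so $\theta_1=\frac1M\sum_xp_1(x)$. For each table $t\in\{0,1\}^M$ let $P^t$ be the law in which, at question $x$, the score is uniform on $[0,1]$, drawn independently of everything, and the correctness bit is $Y=t(x)$ deterministically. Then $\theta_1(t)=\frac1M\sum_xt(x)$. Scores carry no information about $t$, so an audit learns about $t$ only by querying correctness. Each queried answer lives at a visited question, so the number $D$ of distinct questions whose bit has been observed satisfies $D\le m\le N$ and $D\le n$. It therefore suffices to show that, for some $t$, $\E_tD\ge M/4$.

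\emph{Upper half: the cheap competing audit.} Fix a table $t_0$. The audit $A_{t_0}$ draws $q$ questions uniformly and independently, starts one path at each, generates one answer and queries it. If every observed bit equals $t_0(x)$, it returns $[\theta_1(t_0)-\eps,\theta_1(t_0)+\eps]$. Otherwise it runs some fixed valid audit at level $\delta/2$; one exists, for instance the record audit of Proposition~\ref{prop:joint} with $K=1$, since sampling the list uniformly is a population audit. At $P^{t_0}$ no mismatch can occur, so $A_{t_0}$ uses exactly $q$ answers, queries and visits.

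For validity under an arbitrary law, each draw mismatches $t_0$ with probability $\frac1M\sum_x|p_1(x)-t_0(x)|\ge|\theta_1-\theta_1(t_0)|$. Hence, whenever $\theta_1$ lies outside the returned interval, all $q$ draws agree with $t_0$ with probability at most $(1-\eps)^q\le\delta/2$, by the choice of $q$. Adding the $\delta/2$ from the fallback gives coverage $1-\delta$.

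\emph{Lower half: every valid audit is expensive somewhere.} Put the uniform prior on $t$ and suppose, for contradiction, that $\E_tD<M/4$ for every $t$; then the Bayes average of $D$ is also below $M/4$. By Markov's inequality, $\Prb(D<M/2)>1/2$ under the joint law. Observed bits and scores are independent of the unobserved bits, and $D$ is a stopping-time quantity. So at termination, conditionally on the transcript and on $\{D<M/2\}$, the sum of the unobserved bits is $\Bin(M-D,1/2)$ with $M-D>M/2$, while the returned interval is transcript-measurable.

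The covering interval must contain $\theta_1$, so the unseen sum must fall in an integer window of length at most $2\eps M+1$. Bounding the binomial point mass by $\sqrt{2/(\pi(M-D))}\le 2/\sqrt{\pi M}$, the coverage probability is at most
\[
\frac{2(2\eps M+1)}{\sqrt{\pi M}}\approx\frac{4\eps\sqrt M}{\sqrt\pi}\le\frac18+o(1),
\]
using $\sqrt M\le\sqrt\pi/(32\eps)$. The failure probability under the prior is then at least $\tfrac12\cdot\tfrac78$, up to lower-order terms, which exceeds $1/4\ge\delta$. This contradicts validity at some $t$, so some $t$ has $\E_tD\ge M/4$, and each of $N,m,n$ is at least $M/4$ there.

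\emph{Main obstacle.} The delicate step is the anticoncentration constant. The $+1$ in the window length and the point-mass bound must be kept tight enough at $\eps\le1/128$, where $M\ge 19$, so that coverage stays below $1-\delta/(1/2)$; this is where the constants $\pi/1024$ and $\eps\le1/128$ are spent. If the crude bound falls short, I would tighten Markov to $D<3M/4$ with probability above $2/3$ and redo the arithmetic. The ratio claim is immediate, since $q\sim\log(2/\delta)/\eps$ and $M\asymp\eps^{-2}$.
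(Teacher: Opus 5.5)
Your proposal is correct and follows the paper's proof essentially step by step: the same guess-and-verify audit with $q$ uniform draws (your record-audit fallback at level $\delta/2$ works as well as the paper's Hoeffding design), the same uniform prior on deterministic tables with uniform scores, and the same anticoncentration bound $(2\eps M+1)\sqrt{2/(\pi u)}$. Your Markov contradiction is just the contrapositive of the paper's averaging of the coverage requirement, $1-\delta\le\Prb(u<M/2)+\tfrac12\Prb(u\ge M/2)$. The one thing to tidy is the $o(1)$: $\eps$ is fixed here, so there is no asymptotic parameter, and you need the explicit bound. Use $M\ge\lfloor 16\pi\rfloor=50$ to get $2/\sqrt{\pi M}\le 2/\sqrt{50\pi}<0.16$. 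The conditional coverage is then below $1/2$, and the Bayes failure probability exceeds $\tfrac12\cdot\tfrac12=\tfrac14\ge\delta$, exactly as in the paper.
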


\begin{proof}
\emph{A cheap audit for one table.} Fix $f:\{1,\ldots,M\}\to\{0,1\}$ with average $\bar f$. The audit $A_f$ draws $q$ questions independently and uniformly from the list, generates one answer at each and queries its correctness. If every value agrees with $f$, it returns $[\bar f-\eps,\bar f+\eps]$; otherwise it runs a fixed valid audit at level $\delta/2$, such as a fixed Hoeffding design, and returns its interval. Let $d=M^{-1}\sum_x\Prb(Y\ne f(x)\mid x)$. If $d\le\eps$, then $|\theta_1-\bar f|\le M^{-1}\sum_x|p_1(x)-f(x)|=d\le\eps$ and the first interval covers. If $d>\eps$, the first interval is returned with probability $(1-d)^q\le(1-\eps)^q\le\delta/2$. So $A_f$ is valid at every law, and at a law with $Y=f(x)$ it uses exactly $q$ answers, $q$ queries and $q$ visits.

\emph{Every audit is expensive at some table.} Let scores be independent and uniform on $[0,1]$, let $Y=f(x)$, and draw the $M$ bits $f(x)$ independently and uniformly. Scores, unqueried answers and the audit's own randomness carry no information about the bits, so after any transcript the bits of questions without a queried answer are independent and uniform given the transcript. Let $u$ be their number. Given the revealed bits, $\theta_1$ is a known shift of $\Bin(u,1/2)/M$, and an interval of width $2\eps$ contains at most $2\eps M+1$ of its possible values. If $u\ge M/2$, the interval therefore contains $\theta_1$ with conditional probability at most
\[
\begin{aligned}
&(2\eps M+1)\max_j\Prb\{\Bin(u,1/2)=j\}\\
&\le(2\eps M+1)\sqrt{\frac2{\pi u}}\\
&\le4\eps\sqrt{\frac M\pi}+\frac2{\sqrt{\pi M}}\\
&\le\frac18+\frac2{\sqrt{50\pi}}<\frac12,
\end{aligned}
\]
because $M\le\pi/(1024\eps^2)$ and $M\ge\lfloor\pi\cdot128^2/1024\rfloor=50$. Averaging the coverage requirement over the prior gives $1-\delta\le\Prb(u<M/2)+\tfrac12\Prb(u\ge M/2)$, so more than $M/2$ bits are revealed with probability at least $1-2\delta\ge1/2$, and the expected number $D$ of revealed bits is at least $M/4$ under the prior. Some table $f$ attains $\E_fD\ge M/4$. Each revealed bit needs an answer generated at its question, a correctness query and a visit, so all three expected counts are at least $M/4$ at that law, where $A_f$ uses $q$ of each. Finally, $q\le1+\log(2/\delta)/\eps$ and $M\ge\pi/(1024\eps^2)-1$ make $M/(4q)$ of order $1/\{\eps\log(2/\delta)\}$.
\end{proof}

What survives is the leading term as $\eps\to0$ at a fixed benchmark. For budget $k$ let $\widetilde W_k$ be the correctness of the highest-scoring answer among $k$, averaged over tied maxima, so that $\E(\widetilde W_k\mid x)=p_k(x)$. Let
\[
v_k(x)=k^2\Var\{\E(\widetilde W_k\mid Z_1,x)\mid x\},
\]
where $Z_1$ is one of the $k$ answers. This is the variance per answer of the all-subsets average of $\widetilde W_k$ over many answers at $x$, the first term of its Hoeffding decomposition \citep{hoeffding1948}. If a fraction $a_x$ of $N$ answers goes to question $x$, the benchmark estimate at budget $k$ has variance close to $\sum_xv_k(x)/(M^2a_xN)$, and
\begin{equation}\label{eq:gamma}
\Gamma=\min_{a}\max_{k\le K}\frac1{M^2}\sum_{x=1}^M\frac{v_k(x)}{a_x},
\end{equation}
with $a$ ranging over the simplex, is the smallest worst-budget variance per answer.

\begin{theorem}[Price at one benchmark]\label{thm:instance}
Let $0<\delta<1/2$ and fix a law on the list.
\begin{enumerate}[leftmargin=*,label=(\roman*)]
\item For $0<\eps\le1/32$ every valid audit has $\E N\ge(\kappa_\delta/128)(K/\eps+\Gamma/\eps^2)$, and every family of valid audits has $\liminf_{\eps\to0}\eps^2\E N\ge\kappa_\delta\Gamma/2$.
\item One valid audit, whose schedule does not depend on the law, has
\[
\limsup_{\eps\to0}\eps^2\E N\le2\Gamma\log(2K/\delta).
\]
\item $\Gamma\le\max_kk\vbar_k\le\sumw$.
\end{enumerate}
\end{theorem}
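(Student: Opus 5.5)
I will prove the three parts separately, taking (iii) first because it is elementary and fixes the scale of $\Gamma$.

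\emph{Part (iii).} I will bound $v_k(x)\le k\,p_k(x)\{1-p_k(x)\}$, which is the standard Hoeffding (Efron--Stein) inequality for a symmetric statistic of $k$ i.i.d.\ answers. The first-order projection carries variance at most $\Var(\widetilde W_k\mid x)/k$, so $k^2$ times that variance is at most $k\Var(\widetilde W_k\mid x)\le k\,p_k(x)\{1-p_k(x)\}$, where the last step uses that $\widetilde W_k\in[0,1]$ has mean $p_k(x)$. Plugging the uniform allocation $a_x=1/M$ into \eqref{eq:gamma} gives $\Gamma\le\max_k M^{-1}\sum_xv_k(x)\le\max_kk\vbar_k$. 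The comparison with $\sumw$ follows from its definition: $\sumw\ge\sum_{j\le k}\max_{i\ge j}\vbar_i\ge k\vbar_k$ for every $k$.

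\emph{Part (i), lower bound.} The $K/\eps$ term is the calibration obstruction already recorded after Theorem~\ref{thm:frontier}: every valid audit generates at least $7\kappa_\delta K/(32\eps)$ answers at every law, which dominates $(\kappa_\delta/128)K/\eps$. For the $\Gamma/\eps^2$ term I will use the change-of-measure inequality: if the audit is valid at $P$ and at an alternative $Q$ with $|\theta_k(P)-\theta_k(Q)|>2\eps$ for some $k$, then
\[
\sum_x\E_P[N_x]\,\KL(P_x,Q_x)\ge\kappa_\delta,
\]
with $N_x$ the number of answers generated at $x$. I will perturb the law along the influence function, taking $dQ_x/dP_x=1+t\,h_x(Z)$ with $h_x$ a multiple of the first Hoeffding projection $k\,\E(\widetilde W_k\mid Z_1,x)-kp_k(x)$. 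To first order this moves $p_k(x)$ by $t\,\E[h_xk\,\E(\widetilde W_k\mid Z_1)]$ and costs $\KL\approx t^2\E h_x^2/2$. Writing $a_x=\E N_x/\E N$ and choosing the scales $h_x\propto v$-weighted by $1/a_x$ (the Lagrange choice), the constraint reduces to $\E N\cdot\eps^2\gtrsim\kappa_\delta\sum_xv_k(x)/(M^2a_x)$ for the least favourable $k$. A minimax step, possibly mixing alternatives over $k$, then replaces this by $\Gamma$. In the asymptotic version $t\to0$, the second-order remainder vanishes and the constant becomes $\kappa_\delta\Gamma/2$. At fixed $\eps$ I will keep $t$ small enough that $1+th_x\ge1/2$ and the nonlinearity of $p_k$ in $Q_x$ (a degree-$k$ polynomial) is controlled, and I will lose the constant down to $1/128$. \textbf{This is the main obstacle:} the perturbation must stay a valid probability law with $|h_x|$ bounded, and the curvature terms of order $k^2t^2$ must be dominated. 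When $k$ is large relative to $\eps^{-1}$ this will fail, and in that regime I will fall back on the $K/\eps$ term, since the sum of the two bounds loses only a constant factor.

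\emph{Part (ii), upper bound.} I will use a two-phase audit. The first phase runs a pilot of $\eps^{-2}/\log(1/\eps)$ answers, sublinear in $\eps^{-2}$, spread uniformly over the list. From it I will estimate each $v_k(x)$ by U-statistics on one long path per question and solve the convex program \eqref{eq:gamma} for the allocation $\widehat a$. The second phase generates $N_x\approx\widehat a_xN$ answers per question, with $N=2\widehat\Gamma\log(2K/\delta)/\eps^2$, and forms all-subsets U-statistic estimates of every $\theta_k$. Coverage is kept valid for all laws by a fallback: a Bernstein or empirical-Bernstein bound, union-bounded over $K$ budgets, whose width is checked at the end and extended with a Hoeffding design if it is too wide. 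Because the fallback fires with vanishing probability, its cost disappears from $\limsup\eps^2\E N$. The leading constant $2\log(2K/\delta)$ comes from the Gaussian-type tail $\exp\{-\eps^2/(2\sigma^2)\}$ with $\sigma^2=\Gamma/N$, union-bounded over $K$ budgets and two sides. I will control the tails of U-statistics through Hoeffding's decomposition together with Bernstein's inequality for bounded kernels. The degenerate part has variance $O(k^2/N_x^2)$, which is negligible as $\eps\to0$. Consistency of $\widehat a$ then gives the limsup.
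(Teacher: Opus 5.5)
Your part (iii) is the paper's argument. Part (i) uses the paper's mechanism: tilt each $P_x$ along the first Hoeffding projection $g_{k,x}$, weight by the inverse of the audit's own allocation, and charge $\sum_x\E N_x\,\KL(P_x\|Q_x)$. But the step you call the main obstacle is where the proof lives, and your sketched way around it fails.

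\begin{itemize}
\item Weighting by $1/a_x$ leaves $h_x$ unbounded at questions the audit rarely visits. So neither $1+th_x\ge0$ nor a remainder bound holds uniformly.
\item The construction does not break when $k$ is large relative to $\eps^{-1}$. The binding condition is $tk\le1/2$ with $t\asymp\eps/B$, so the split is $\Gamma$ against $\eps K$. With your split, for $K\gg1/\eps$ and $\Gamma$ of order $K$, you would fall back on $K/\eps$, which is far below $\Gamma/\eps^2$.
\end{itemize}

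The missing idea is to regularize the weights. With $w_x=MC(x)/\bar N$, the paper takes $h_x=g_{k,x}/(1+w_x)$, so $|h_x|\le k$. Since $(1+w)/2$ is itself an admissible allocation, the worst budget for $B_k=M^{-1}\sum_xv_k(x)/(1+w_x)$ already has $B_k\ge\Gamma/2$ directly from the min--max definition. No mixing over budgets is needed.

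If $\Gamma\ge16\eps K$, then $t=4\eps/B$ gives $tk\le1/2$. Orthogonality of the products over subsets gives the exact remainder $\E G^2=(1+t^2r_x)^k-1-kt^2r_x$. So the target moves by at least $3\eps$, while the transcript divergence is at most $\bar Nt^2B$. Otherwise the calibration bound takes over. For the limit, use $\eta+w_x$ with $\eta$ fixed, which bounds $|h_x|\le K/\eta$ uniformly over the family of audits. That uniformity is what makes the remainder vanish as $t\to0$; the limit $\eta\to0$ is taken last.

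In part (ii) your architecture matches the paper's: pilot, Neyman allocation for a least favourable mixture of budgets, all-subsets averages, and Bernstein over $2K$ sides. But validity and cost both have holes.

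\textbf{Validity.} Bernstein needs a variance bound that holds at every law. A merely consistent pilot estimate can undercover, and checking the width and extending with a Hoeffding design repairs the width, not the coverage. Empirical Bernstein does not apply off the shelf either: one all-subsets U-statistic of a question's whole sample is not a sum of independent terms. A variance bound on its degenerate part gives only a Chebyshev tail, which would need its own share of $\delta$.

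The paper instead cuts each question's answers into independent blocks of $b\to\infty$ answers. Then $\widehat\theta_k$ is a sum of independent $[0,1]$-valued block statistics, and $v_k(x)\le b\Var(U_k)\le v_k(x)+k(k-1)/\{4(b-1)\}$. Pairs of pilot blocks give the unbiased estimate $(U_k-U_k')^2/2$. Its Hoeffding upper confidence bound holds on an event of probability $1-\alpha$, and $\alpha$ is charged to $\delta$.

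\textbf{Cost.} Consistency of $\widehat a$ is neither available, since optimal allocations need not be unique, nor sufficient. The true worst-budget variance at $\widehat a$ blows up when $\widehat a_x\approx0$ at a question with $v_k(x)>0$. What works is the following combination:
\begin{itemize}
\item one-sided estimates $v_k(x)\le\widehat V_{k,x}\le v_k(x)+o(1)$ on the pilot event;
\item monotonicity of $\Gamma$ as a function of the variance array;
\item the bound $\Gamma(V+c)\le\{\Gamma(V)^{1/2}+c^{1/2}\}^2$ for a constant shift $c\ge0$;
\item mixing with the uniform allocation at weight $\rho\to0$, which also keeps the Bernstein range term $b/(\rho\bar N)$ negligible.
\end{itemize}

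Finally, $N=2\widehat\Gamma\log(2K/\delta)/\eps^2$ leaves no room for that range term or for error in $\widehat\Gamma$. Your width check would then fail with probability bounded away from zero. The Hoeffding extension, of order $K\log(2K/\delta)/\eps^2$ answers, would then enter the limsup, so $N$ must be inflated by a factor $1+o(1)$.
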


Part (i) is a bound at every law rather than at a worst case, and (ii) matches its leading term up to the factor $4\log(2K/\delta)/\kappa_\delta$. The audit behind (ii) spends a pilot to bound each $v_k(x)$, gives question $x$ answers in proportion to $\{\sum_k\lambda_kv_k(x)\}^{1/2}$ for a least favorable mixture $\lambda$ of budgets, as in Neyman allocation \citep{cochran1977,carpentier2015}, and averages each question's answers over all $k$-subsets. It labels every answer (Appendix~\ref{app:instance}). By (iii), $\sumw/\Gamma$ is what an audit gives up by reading one winner per path and visiting every question equally.

The gap is large on measured benchmarks (Figure~\ref{fig:gamma}). On the MMLU-Pro study $\Gamma=0.28$ against $\sumw=2.57$, and over the held-out pools the median of $\sumw/\Gamma$ is 12, with interquartile range 8.4 to 21.9. Most of it is allocation. The model is right, or wrong, at least 98\% of the time on 58 of the 100 MMLU-Pro questions, and the optimal allocation puts 90\% of its answers on 23 questions. Finding those questions costs answers, though. At $\eps=1/32$ the pilot of the audit behind (ii) alone would use 179,200 answers, more than twice the whole bill of the paired audit below, and Proposition~\ref{prop:priced} says that no audit finds them for free at a fixed precision.

\begin{figure*}[t]
\centering\includegraphics[width=\textwidth]{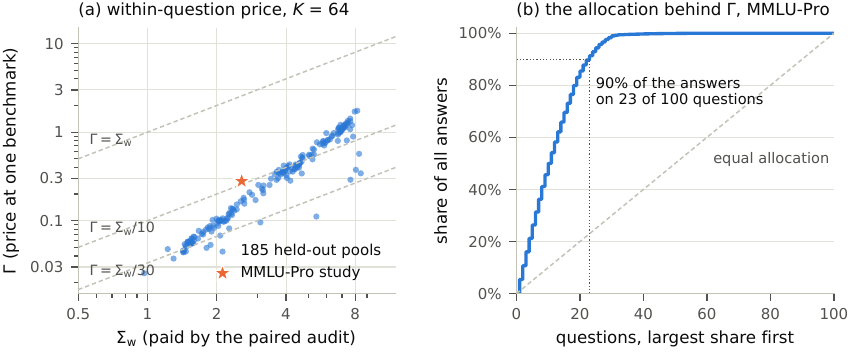}
\caption{The price at one benchmark. (a) $\Gamma$ of Theorem~\ref{thm:instance} against $\sumw$ at $K=64$ for the 185 held-out pools and the MMLU-Pro study (law of the first 4,000 reference answers per question). The median of $\sumw/\Gamma$ is 12. (b) The allocation of answers to questions that attains $\Gamma$ on the MMLU-Pro study, sorted by share.}
\label{fig:gamma}
\end{figure*}

\section{The paired audit}\label{sec:audit}

Stratified sampling removes the between-question variance only if the confidence sequence can measure the variance that is left. A plug-in estimate of each question's mean does this slowly. Early on, the error of the plug-in center is itself a between-question deviation, so the penalty charges exactly the variance that balance was supposed to remove. Two independent draws at the same question avoid the problem: their squared difference has mean exactly twice the within-question variance, whatever the question's mean.

\begin{lemma}[Paired exponential inequality]\label{lem:paired}
Let $A,B$ be independent $\Ber(p)$ variables, $0\le\lambda<1$ and $\psi(\lambda)=-\log(1-\lambda)-\lambda$. Then
\begin{equation}\label{eq:pairmgf}
\E\exp\{\pm\lambda(A+B-2p)-\psi(\lambda)(A-B)^2\}\le1 .
\end{equation}
\end{lemma}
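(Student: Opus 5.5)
The plan is to verify \eqref{eq:pairmgf} by direct computation over the four outcomes of $(A,B)$, after reducing the two signs to one. First I would remove the sign choice. Put $A'=1-A$ and $B'=1-B$, which are independent $\Ber(1-p)$. Then
\[
-\lambda(A+B-2p)=\lambda\{A'+B'-2(1-p)\},\qquad (A-B)^2=(A'-B')^2 .
\]
So the ``$-$'' case at $p$ is exactly the ``$+$'' case at $1-p$, and it suffices to prove the ``$+$'' inequality for every $p\in[0,1]$.

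Next I would use $e^{-\psi(\lambda)}=(1-\lambda)e^{\lambda}$ and sum over the outcomes:
\begin{itemize}
\item $(1,1)$, with probability $p^2$, contributes $e^{2\lambda(1-p)}$;
\item $(0,0)$, with probability $(1-p)^2$, contributes $e^{-2\lambda p}$;
\item each mixed outcome, with total probability $2p(1-p)$, contributes $e^{\lambda(1-2p)}(1-\lambda)e^{\lambda}$.
\end{itemize}
The claim becomes the one-variable inequality
\[
q(p):=(1-p)^2+2p(1-p)(1-\lambda)e^{2\lambda}+p^2e^{2\lambda}\le e^{2\lambda p},\qquad 0\le p\le1,\ 0\le\lambda<1 .
\]
Both sides agree at $p=0$ and $p=1$. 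Writing $\phi(p)=\log q(p)-2\lambda p$, we have $\phi(0)=\phi(1)=0$. So it is enough to show that $\phi$ is convex on $[0,1]$, or at least that it has no interior positive maximum.

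The convexity step is where I expect the main obstacle. A naive comparison fails: $q(p)=\{(1-p)+pe^\lambda\}^2-2p(1-p)e^{\lambda}\{1-(1-\lambda)e^{\lambda}\}$, and the leading square is at least $e^{2\lambda p}$ by convexity of the exponential. The penalty term, which is nonnegative because $(1-\lambda)e^\lambda\le1$, must therefore beat the Jensen gap, and that is a genuine quantitative comparison. I would attempt it in two ways.
\begin{enumerate}
\item Since $q$ is a quadratic $ap^2+bp+c$, convexity of $\log q$ reduces to $2a\,q(p)\ge q'(p)^2$ on $[0,1]$. I would check the sign of this quadratic in $p$ at its endpoints and its vertex, with $a=1+e^{2\lambda}-2(1-\lambda)e^{2\lambda}$, $b=2(1-\lambda)e^{2\lambda}-2$ and $c=1$, using the elementary bounds $1+\lambda\le e^\lambda\le 1/(1-\lambda)$.
\item Failing that, I would show $\phi'(0)\le0\le\phi'(1)$ and that $\phi'$ has at most one zero. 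By Rolle's theorem this forces $\phi\le0$.
\end{enumerate}
Note that $\phi'(0)=b-2\lambda=2(1-\lambda)e^{2\lambda}-2-2\lambda$, whose sign needs care, so the endpoint-derivative conditions are themselves part of the calculation and not a given.

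As an alternative route, I would condition on $B$ and apply the scalar inequality $\exp\{\lambda\xi-\psi(\lambda)\xi^2\}\le1+\lambda\xi$, valid for $\xi\ge-1$ and $0\le\lambda<1$, with $\xi=A-B\in\{-1,0,1\}$. Writing $\lambda(A+B-2p)=\lambda(A-B)+2\lambda(B-p)$ bounds the expectation over $A$ by $e^{2\lambda(B-p)}\{1-\lambda(B-p)\}$. This leaves the single Bernoulli inequality $\E\, e^{2\lambda\zeta}(1-\lambda\zeta)\le1$ with $\zeta=B-p$. Its expansion $1+x-\tfrac23x^3+\cdots$ in $x=\lambda\zeta$ shows the second-order term cancels, so it may or may not hold uniformly. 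I would check it numerically before committing to this route, and otherwise return to the exact one-variable inequality for $q$ above.
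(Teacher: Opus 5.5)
\textbf{Verdict: there is a genuine gap.} Your reduction is correct and is the paper's own. The flip $A,B\mapsto 1-A,1-B$ handles the minus sign, and the four-outcome sum leaves exactly $q(p)\le e^{2\lambda p}$, which is the paper's $e^{up}\ge(1-p)^2+e^u\{p(2-p)-up(1-p)\}$ with $u=2\lambda$. But that inequality is the entire content of the lemma, and the proposal stops there.

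\textbf{Route (i) is false.} Because $q(1)=e^{2\lambda}$ and $q'(1)=2a+b=2\lambda e^{2\lambda}$, one has $\phi'(1)=0$ and $\phi''(1)=2ae^{-2\lambda}-4\lambda^2=2(e^{-2\lambda}-1+2\lambda-2\lambda^2)$. The bracket is negative for every $\lambda\in(0,1)$: it and its first derivative vanish at $0$, and its second derivative is $4(e^{-2\lambda}-1)<0$. So $\log q$ is strictly concave near $p=1$, and your check $2a\,q(p)\ge q'(p)^2$ fails at $p=1$. For example, at $\lambda=1/2$ we have $q(p)=p^2+(e-2)p+1$ and $2q(1)-q'(1)^2=2e-e^2<0$.

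\textbf{Route (ii) works, but its key step is missing.} Take $\lambda\in(0,1)$ (the case $\lambda=0$ is trivial). First, $\phi'(0)=2g(\lambda)<0$ with $g(\lambda)=(1-\lambda)e^{2\lambda}-1-\lambda$, because $g(0)=g'(0)=0$ and $g''(\lambda)=-4\lambda e^{2\lambda}$. Next, the zeros of $\phi'=(q'-2\lambda q)/q$ are the roots of $r=q'-2\lambda q$. This is a polynomial of degree at most two in $p$, it is not identically zero since $r(0)<0$, and $r(1)=0$ by the computation above. Hence $\phi'$ has at most one zero in $(0,1)$. If $\phi'$ stayed negative on $(0,1)$ apart from that zero, $\phi$ would be strictly decreasing, contradicting $\phi(0)=\phi(1)=0$. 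So $\phi$ decreases and then increases, and $\phi\le0$. With this paragraph your proof is complete.

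\textbf{The conditioning route cannot be rescued.} The scalar bound is exact at $\xi\in\{-1,0\}$, but it discards $(1+\lambda)-(1-\lambda)e^{2\lambda}=\tfrac23\lambda^3+O(\lambda^4)$ at $(A,B)=(1,0)$. The residual inequality then fails for $1/2<p<1$ and small $\lambda$. Indeed $\E e^{2\lambda\zeta}(1-\lambda\zeta)=1-\sum_{m\ge3}2^{m-1}(m-2)\lambda^m\E\zeta^m/m!$, and $\E\zeta^3=p(1-p)(1-2p)<0$ there; numerically the expectation is about $1.00004$ at $p=0.9$, $\lambda=0.1$. It does hold for $p\le1/2$, where every $\E\zeta^m\ge0$. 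So even with the sign flip it settles only half the cases.

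\textbf{Comparison with the paper.} The paper fixes $p$ and expands in $u=2\lambda$ rather than studying the shape in $p$. The coefficient of $u^m/m!$ in $e^{up}-(1-p)^2-e^u\{p(2-p)-up(1-p)\}$ is zero for $m\le2$. For $m\ge3$ it equals $p\{p^{m-1}+(m-2)-(m-1)p\}\ge0$ by Bernoulli's inequality. That argument is shorter and needs no case analysis. Your completed route (ii) is a legitimate alternative whose one real input is that $q'-2\lambda q$ is a quadratic vanishing at $p=1$.
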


\begin{proof}
Let $q=1-p$ and $u=2\lambda$. The pair $(A,B)$ takes the values $(0,0)$, $(1,1)$ and the two mixed values with probabilities $q^2$, $p^2$ and $2pq$, and $e^{\lambda-\psi(\lambda)}=e^{2\lambda}(1-\lambda)$. Enumerating,
\[
\begin{aligned}
&\E e^{\lambda(A+B-2p)-\psi(\lambda)(A-B)^2}\\
&=e^{-up}\bigl[q^2+p^2e^{u}+2pq\,e^{u}(1-u/2)\bigr]\\
&=e^{-up}\bigl[q^2+e^u\{p(1+q)-upq\}\bigr].
\end{aligned}
\]
It remains to show $e^{up}-q^2-e^u\{p(1+q)-upq\}\ge0$ for $u\ge0$. Expand in powers of $u$. The coefficient of $u^m/m!$ is $1-q^2-p(1+q)=0$ for $m=0$, and $p^m-p(1+q)+mpq$ for $m\ge1$, which vanishes for $m=1$ and $m=2$. For $m\ge3$ it equals
\[
\begin{gathered}
p^m+(m-2)p-(m-1)p^2\\
=p\bigl\{p^{m-1}+(m-2)\\
\qquad -(m-1)p\bigr\}\ge0,
\end{gathered}
\]
because $p^{m-1}=(1-q)^{m-1}\ge1-(m-1)q$ by Bernoulli's inequality. This proves the inequality with the sign $+$. Applying it to $1-A$ and $1-B$, which are $\Ber(q)$ and have the same squared difference, gives the sign $-$.
\end{proof}

Penalized exponential inequalities of this kind go back to \citet{fan2015} and are the basis of betting confidence sequences \citep{howard2021,betting}. Estimating a variance from squared differences is classical as well \citep{wangramdas2025}. What the pair buys here is exactness: for two Bernoulli draws the penalty needs no estimated center and no correction term, so the within-question variance enters the band from the first pair on.

\paragraph{The audit.} The audit runs in rounds, and each round visits every question once with a fresh path. Two rounds make a pair.
\begin{enumerate}[leftmargin=*]
\item \emph{Grow only what is needed.} In pair $j$, every path is grown to the largest budget whose interval is still wider than $2\eps$, and correctness is queried only at score records.
\item \emph{Update the band.} For each unresolved budget $k$, let $T_j$ be the sum of the $2M$ selected-correctness values of the pair and $D_j$ the number of questions whose two values disagree. Multiplying \eqref{eq:pairmgf} over questions and pairs gives two nonnegative supermartingales per budget, and Ville's inequality \citep{ville1939} with a union over the $2K$ sides gives, at every pair,
\begin{equation}\label{eq:pairedci}
\begin{aligned}
\theta_k&\in\frac{\sum_j\lambda_jT_j}{2M\sum_j\lambda_j}
\ \pm\
\frac{\sum_j\psi(\lambda_j)D_j+\log(2K/\alpha_S)}{2M\sum_j\lambda_j},\\
&\alpha_S=0.95\,\delta.
\end{aligned}
\end{equation}
The audit keeps the running intersection of these intervals.
\item \emph{Bet on the variance seen so far.} The bet $\lambda_j=\min\{\lambda_{\max},\eps/(\eps+\hat v_j)\}$ maximizes $\lambda\eps-\psi(\lambda)\hat v_j$, where $\hat v_j$ is half the disagreement rate of earlier pairs, started with one pseudo-pair of variance $1/4$ and floored at $\min(10^{-4},\eps/100)$.
\item \emph{Retire and stop.} A budget retires when its interval has width at most $2\eps$. The remaining $0.05\,\delta$ goes to an exact-binomial \citep{clopperpearson} interval at a fixed final round, the smallest even number of rounds at which every possible count gives an interval of width at most $2\eps$. That round ends the audit.
\end{enumerate}
Figure~\ref{fig:anatomy} follows one audit on one stored pool. Budgets resolve at different times, and paths shrink as they do: after four pairs only the smallest budgets are still open, and the last pairs generate a few answers per question.

\begin{figure*}[t]
\centering\includegraphics[width=\textwidth]{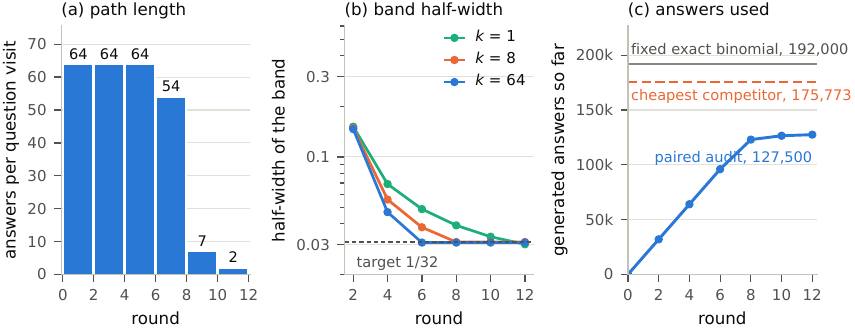}
\caption{One paired audit, traced round by round (the stored run on the MATH500 pool of Figure~\ref{fig:hero}a, 250 questions, $K=64$). (a) Answers generated per question visit in each pair of rounds; paths stop at the largest unresolved budget. (b) Half-width of the band at three budgets; a budget retires when it reaches $1/32$, and the final look is at round 18. (c) Answers used so far, against the fixed exact-binomial design and the cheapest of the competing certified audits on the same pool. The audit stops after 12 rounds with 127,500 answers, and its band covers the exact curve.}
\label{fig:anatomy}
\end{figure*}

\paragraph{A final look with unequal means.} The pooled count at the final round is a sum of independent Bernoulli variables whose average mean is $\theta_k$ but whose individual means differ, one per question. \citet[Theorem~4]{hoeffding1956} bounds $\Prb(S\le b)$ by the binomial tail when $b\le n\theta_k-1$. The counts just below the mean are not covered, and there the comparison can fail. With $n=2$ and means $(1,2\theta-1)$ for some $\theta>1/2$, the count is at most 1 with probability $2(1-\theta)$, more than the binomial value $1-\theta^2$. At $\theta=(1-a)^{1/2}$ an exact-binomial test at level $a$ rejects whenever the count is at most 1, which happens with probability $2\{1-(1-a)^{1/2}\}>a$. A small correction of the level repairs this.

\begin{lemma}[Exact-binomial tails for unequal Bernoulli sums]\label{lem:edgecp}
Let $B\sim\Bin(n,\theta)$ and let $S$ be a sum of $n$ independent Bernoulli variables whose means average to $\theta$. If $0<a<1/4$ and $a'=a/(1+a)$, a lower-tail binomial test at level $a'$ rejects with probability at most $a$ under $S$. The same holds for the upper tail.
\end{lemma}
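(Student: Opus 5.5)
The plan is to reduce everything to Hoeffding's comparison \citep[Theorem~4]{hoeffding1956}, and to treat the single count it misses, just below the mean, by a separate extremal argument. By symmetry (replace every variable $Z$ by $1-Z$ and $\theta$ by $1-\theta$) it suffices to treat the lower tail.

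\textbf{Reduction to one count.} A lower-tail exact-binomial test at level $a'$ rejects when the count is at most $c$, where $c$ is the largest integer with $\Prb(B\le c)\le a'$; if there is no such integer it never rejects. So we must show $\Prb(S\le c)\le a$. First, $c<n\theta$. The median of $B$ lies in $\{\lfloor n\theta\rfloor,\lceil n\theta\rceil\}$, so any integer $c\ge n\theta$ has $\Prb(B\le c)\ge\Prb(B\le\lfloor n\theta\rfloor)$. We need this last quantity to exceed $1/4>a'$. It is at least $1/2$ when the median is $\lfloor n\theta\rfloor$, and in the other case we would use a Greenberg--Mohri type bound on the binomial tail below its mean (I would confirm that form). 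With $c<n\theta$, there are two cases.
\begin{itemize}
\item If $c\le n\theta-1$, Hoeffding's Theorem~4 gives $\Prb(S\le c)\le\Prb(B\le c)\le a'<a$.
\item Otherwise $c$ is the unique integer in $(n\theta-1,n\theta)$, namely $c=\lceil n\theta\rceil-1$ with $n\theta$ not an integer. The two-variable example before the lemma shows the comparison can genuinely fail here, so this edge count needs the correction.
\end{itemize}

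\textbf{The edge count.} I would follow Hoeffding's own extremal method. For fixed $n$, $\theta$ and $c$, the map from the vector of means $(p_1,\ldots,p_n)$ with fixed average to $\Prb(S\le c)$ is maximized over the simplex slice at a point whose means take at most one value other than $0$ and $1$. Means equal to $0$ only help, since their variables always contribute $0$ and push the remaining means up with the count unchanged; I expect that to show they can be removed. So it suffices to bound, for $0\le r<n$,
\[
S_r=r+\Bin(n-r,\theta_r),\qquad \theta_r=\frac{n\theta-r}{n-r},
\]
which gives $\Prb(S_r\le c)=\Prb\{\Bin(n-r,\theta_r)\le c-r\}$. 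Since $c-r$ is again the integer just below the mean $(n-r)\theta_r=n\theta-r$, the problem keeps its shape with smaller $n$ and larger success probability.

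The target inequality is
\[
\Prb(S\le c)\le\frac{\Prb(B\le c)}{\Prb(B>c)},
\]
since $\Prb(B\le c)\le a'=a/(1+a)$ forces the right side to be at most $a$. To prove it I would split $\Prb(S\le c)=\Prb(S\le c-1)+\Prb(S=c)$. Hoeffding's theorem applies to the first term because $c-1\le n\theta-1$. For the second term I would use the log-concave likelihood ratio between $\Bin(n-r,\theta_r)$ and $\Bin(n,\theta)$ at the point $c$, aiming for $\Prb(S_r=c)\le\Prb(B=c)/\Prb(B>c)$. I would check this against the $n=2$ example: there the left side is $2(1-\theta)$ and the ratio is $(1-\theta^2)/\theta^2$, so the slack is exactly the factor $1+a$.

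\textbf{Main obstacle.} The hard step is the edge-count inequality, uniformly in $r$ and $n$. If the pointwise likelihood-ratio comparison at $c$ does not close, my fallback is to argue that $\Prb\{\Bin(n-r,\theta_r)\le c-r\}$ is monotone in $r$. The worst case would then be the most extreme $r$ still admissible, where the computation collapses to the explicit two-point example and the factor $1+a$ can be checked by hand. The condition $a<1/4$ is used throughout to guarantee $c<n\theta$ and to keep $\Prb(B>c)$ bounded away from zero.
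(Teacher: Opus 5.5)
Your reduction to the rejection threshold $c$ and your case $c\le n\theta-1$ are fine. The gap is the edge count: you flag it yourself as the main obstacle, leave it open, and none of the routes you sketch closes it. The removal of zero means is asserted, not shown. The pointwise bound $\Prb(S_r=c)\le\Prb(B=c)/\Prb(B>c)$ is checked only at $n=2$. The monotonicity fallback fails as a uniform principle, because the worst $r$ changes with $n$. At $n=2$ the extreme configuration $r=1$ is worst, since $2(1-\theta)>1-\theta^2$. At $n=3$ with $3\theta$ just below $2$ and $c=1$, the plain binomial $r=0$ is worst, about $7/27$ against about $1/4$. Also, at $r=c$ the tail is $(1-\theta_c)^{n-c}$, which reduces to the two-point computation only when $c=n-1$.

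The missing idea is that you need no general edge-count inequality. The level constraint $a'<1/4$, which you use only to get $c<n\theta$, already excludes the edge count unless $c=n-1$.
\begin{itemize}
\item Suppose $c\le n-2$ and $c>n\theta-1$, and put $h=c+1\le n-1$. Then $\theta<h/n$, and since binomial lower tails decrease in the success probability, $\Prb(B\le c)\ge\Prb\{\Bin(n,h/n)\le h-1\}$.
\item Apply Hoeffding's comparison to the sum of $h+1$ independent $\Ber\{h/(h+1)\}$ variables padded with $n-h-1$ zeros. Its mean is exactly $h$, so $h-1$ is one below the mean and the comparison is legitimate.
\item This gives $\Prb\{\Bin(n,h/n)\le h-1\}\ge\Prb\{\Bin(h+1,h/(h+1))\le h-1\}=1-\{h/(h+1)\}^h(2h+1)/(h+1)\ge1/4$. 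That contradicts $\Prb(B\le c)\le a'<1/4$.
\end{itemize}
So $c\le n\theta-1$ whenever $c\le n-2$, and Hoeffding's theorem applies directly. The same argument also covers $c\ge n\theta$ with $c\le n-2$, so it replaces your median and Greenberg--Mohri step.

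The only remaining case is $c=n-1$. There, rejection means $1-\theta^n\le a'$, and a union bound gives $\Prb(S\le n-1)\le n(1-\theta)\le-n\log\theta\le-\log(1-a')=\log(1+a)\le a$. This is the only place the correction $a'=a/(1+a)$ is used, which matches your observation that the $n=2$ example is where the slack lives.
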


\begin{proof}
Let $b$ be the largest count the test rejects, so $\Prb(B\le b)\le a'<1/4$. Suppose first $b\le n-2$, and suppose $b>n\theta-1$. With $h=b+1\le n-1$ we have $\theta<h/n$, and since binomial lower tails decrease in the success probability, $\Prb(B\le b)\ge\Prb\{\Bin(n,h/n)\le h-1\}$. The sum of $h+1$ independent $\Ber\{h/(h+1)\}$ variables, padded with $n-h-1$ zeros, has mean $h$, so Hoeffding's comparison at $h-1$, one below the mean, gives
\[
\begin{aligned}
&\Prb\{\Bin(n,h/n)\le h-1\}\\
&\ge\Prb\{\Bin(h+1,h/(h+1))\le h-1\}\\
&=1-\Bigl(\frac{h}{h+1}\Bigr)^h\frac{2h+1}{h+1}\ge\frac14,
\end{aligned}
\]
because the subtracted term decreases in $h$ from $3/4$ at $h=1$. This contradicts $\Prb(B\le b)<1/4$, so $b\le n\theta-1$, and Hoeffding's comparison gives $\Prb(S\le b)\le\Prb(B\le b)\le a'\le a$. If $b=n-1$, rejection means $1-\theta^n\le a'$, and a union bound over the $n$ variables gives $\Prb(S\le n-1)\le n(1-\theta)\le-n\log\theta\le-\log(1-a')=\log(1+a)\le a$. Applying the argument to the complements proves the upper tail.
\end{proof}

The final look inverts each of the $2K$ tails at $a/(1+a)$ with $a=0.05\,\delta/(2K)$. Since $a/(1+a)$ differs from $a$ by a factor $1+a\le1+10^{-3}$, the final round is the same as with the uncorrected level in every configuration we ran.

\begin{theorem}[Paired audit]\label{thm:paired}
For every fixed $\lambda_{\max}\in(0,1)$, the paired audit has simultaneous coverage at least $1-\delta$ for every answer law and stops by a deterministic round cap. With $L$ logarithmic in $K$, $1/\eps$ and $1/\delta$, and constants depending on $\lambda_{\max}$,
\begin{equation}\label{eq:pairedcost}
\begin{aligned}
\E N&=O\!\left(KM+L\Bigl[\frac K\eps+\frac{\sumw}{\eps^2}\Bigr]\right),\\
\E m&\le H_K\,\E n .
\end{aligned}
\end{equation}
\end{theorem}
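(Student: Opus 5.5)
Coverage and cost are handled separately, and the cost bound is split into the three terms of \eqref{eq:pairedcost}.

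\emph{Coverage.} For a fixed budget $k$ and sign, the pair statistics $T_j$ and $D_j$ are sums over questions of independent terms of the form in Lemma~\ref{lem:paired}. Within a pair, the two selected values at question $x$ are independent $\Ber(p_k(x))$, because the two paths are fresh. The products
\[
\prod_{i\le j}\exp\Bigl\{\pm\lambda_i\bigl(T_i-2M\theta_k\bigr)-\psi(\lambda_i)D_i\Bigr\}
\]
are therefore nonnegative supermartingales with respect to the filtration generated by completed pairs. Summing $2p_k(x)$ over $x$ gives $2M\theta_k$, and the predictable bet $\lambda_i$ depends only on earlier pairs.

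Two points need care. First, path growth is adaptive. Since $\lambda_i$ is predictable, truncating a path at an already retired budget only stops the contributions for that $k$, which is a stopped supermartingale. For unretired budgets, the value $W_k$ on a path grown at least to $k$ has the correct conditional law under first-maximum selection. Second, Ville's inequality applied to each of the $2K$ processes at level $\alpha_S/(2K)$, followed by a union bound, gives time-uniform validity of \eqref{eq:pairedci}. Intersecting the intervals over time therefore preserves coverage.

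The final look uses the pooled count at a deterministic round. This count is a sum of independent Bernoulli variables with unequal means averaging to $\theta_k$. Lemma~\ref{lem:edgecp}, applied with $a=0.05\delta/(2K)$ together with a union bound, gives failure probability at most $0.05\delta$. The two pieces add to $\delta$. The round cap is deterministic because the Clopper--Pearson width at the final round depends only on $n$ and $\eps$, and that round is of order $\log(K/\delta)/(M\eps^2)+1$.

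\emph{Cost.} Answers come in three kinds. First, the first pair and the fixed rounding to whole rounds cost $O(KM)$, since each round visits all $M$ questions with paths of length at most $K$. Second, a budget $k$ stays open while its half-width exceeds $\eps$. With the bet tuned to $\hat v_j$, the half-width after $r$ pairs is roughly
\[
\frac{\sum\psi(\lambda)D+\log(2K/\alpha_S)}{2M\sum\lambda}.
\]
The quantity $D_j$ concentrates around $2M\vbar_k$, because the mean of $(A-B)^2$ is $2p(1-p)$. I would show that $\hat v_j\to\vbar_k$ fast enough, using a second supermartingale argument on $D_j$, or simply by bounding $\hat v$ above with high probability. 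Given this, budget $k$ retires after
\[
r_k \lesssim 1+L\,\frac{\vbar_k/\eps^2+1/\eps}{M}
\]
pairs, with constants depending on $\lambda_{\max}$. The $1/\eps$ term comes from the regime $\lambda=\lambda_{\max}$ and from the $\log$ term when $\vbar_k$ is small.

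Third, the answers at position $j$ of a path are generated in every round while some budget $k\ge j$ is open. Their total is
\[
2M\max_{k\ge j}r_k\lesssim M+L\Bigl(\frac{\max_{k\ge j}\vbar_k}{\eps^2}+\frac1\eps\Bigr).
\]
Summing over $j\le K$ gives $KM+L(K/\eps+\sumw/\eps^2)$, which is the tail-maximum structure of $\sumw$. The expectation also includes the low-probability event that $\hat v$ underestimates the variance; this adds at most the round cap times $KM$, multiplied by a polynomially small probability, which is absorbed.

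\emph{Labels.} Correctness is queried only at score records. The expected number of records on a path of length $\ell\le K$ is at most $H_\ell\le H_K$. This uses R\'enyi's $1/j$ record rate for continuous scores; with ties, counting only strict improvements gives no more records. Since paths are started independently of their future scores, Wald's identity gives $\E m\le H_K\E n$.

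\emph{Main obstacle.} The main obstacle is the retirement-time bound with adaptive bets. The $\lambda_j$ are plug-ins from disagreements in earlier pairs, started from the pseudo-variance $1/4$, so the early pairs are over-penalized. I would control this by showing that after $O(L)$ pairs $\hat v_j\le 2\vbar_k+O(L/(Mj))$ with high probability. The regret of the resulting bets relative to the oracle $\lambda^*=\eps/(\eps+\vbar_k)$ then costs only a logarithmic factor, which is absorbed into $L$.
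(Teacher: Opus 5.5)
Your route is the paper's: the same product supermartingale with predictable bets and Ville's inequality over the $2K$ sides (your stopped supermartingale for retired budgets is the paper's latent full path), and Lemma~\ref{lem:edgecp} at the deterministic final round. The cost part also matches: a high-probability bound on each budget's retirement, summed as $\sum_j\max_{k\ge j}n_k$, with the round cap on the exceptional event. Labels come from records and Wald's identity in both. The step that fails as written is how you resolve your main obstacle.

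An upper bound $\hat v_j\le2\vbar_k+O(L/(Mj))$ only keeps the bets from being too small, so that the mass $2M\sum_j\lambda_j$ grows. What stalls retirement is the opposite error. Pair $j$ adds about $2M\psi(\lambda_j)\vbar_k$ to the numerator of the radius and $2M\lambda_j$ to its denominator. Since $\psi(\lambda)\ge\lambda^2/2$, their ratio is at least $\lambda_j\vbar_k/2$. If $\hat v_j\le\vbar_k/4$ and $\vbar_k\ge4\eps/\lambda_{\max}$, then $\lambda_j\ge2\eps/\vbar_k$ and this ratio is at least $\eps$. So a persistent underestimate keeps the radius above $\eps$ however many pairs are run, and you need the lower tail of $\hat v_j$ as well. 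Your exceptional-event paragraph already names underestimation as the bad event, but the sketch never controls it. The paper gets both tails from Bernstein's inequality with a union over pairs and budgets. Once $At/z_k$ question visits have been made, with $z_k=\vbar_k+\eps$ and $t$ logarithmic, it has $|\hat v_j-\vbar_k|\le0.01\,z_k$. On these good pairs $\lambda_j\ge0.94\,\eps/z_k$ and $\psi(\lambda_j)\vbar_k\le0.57\,\eps\lambda_j$ hold together.

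Relatedly, concentration of each $D_j$ is not enough, because the penalty $P_J=\sum_{j\le J}\psi(\lambda_j)D_j$ has random predictable weights. The paper applies Ville's inequality to $\exp\{u[P_J-1.2\sum_{j\le J}2M\psi(\lambda_j)\vbar_k]\}$ with a fixed $u>0$, so the realized penalty stays within its compensator plus $O(t)$ at all times. The bad early pairs number of order $1+t/(z_kM)$, not $O(L)$. They add only a logarithmic amount to the penalty, plus $O(M\eps^2)$ for the first pair, and at most $2M+2At/z_k$ rows. Since $z_k\ge\eps$, these rows are within the retirement bound $O(M+(t+L)z_k/\eps^2)$, so \eqref{eq:pairedcost} is unaffected. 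A small correction: the pseudo-pair makes the first bets too small rather than over-penalized. That costs mass, essentially only in the first pair, which is already charged to the $KM$ term.
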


\begin{proof}[Proof of coverage]
Fix a budget $k$ and write $A_{xj}$, $B_{xj}$ for the selected-correctness values of the two paths at question $x$ in pair $j$. The bet $\lambda_j$ is fixed before either path of the pair is generated, and the paths of different questions are independent. Multiplying \eqref{eq:pairmgf} over questions and then over pairs shows that
\[
\exp\Biggl\{\begin{aligned}
&\sum_{j\le J}\lambda_{j}\Bigl(\sum_x(A_{xj}+B_{xj})-2M\theta_k\Bigr)\\
&-\sum_{j\le J}\psi(\lambda_{j})\sum_x(A_{xj}-B_{xj})^2
\end{aligned}\Biggr\}
\]
is a nonnegative supermartingale in $J$, and so is its mirror image. Ville's inequality at level $\alpha_S/(2K)$ for each of the $2K$ processes, and a union bound, give \eqref{eq:pairedci} at all pairs at once with probability at least $1-\alpha_S$, and running intersections keep this event. A budget that has retired needs no further values: attach a latent full path to every scheduled visit, and note that the audit reads the winner at budget $k$ only while $k$ is active, so no unobserved value enters the sequence. At the final round, Lemma~\ref{lem:edgecp} bounds each of the $2K$ tails by $a=\alpha_C/(2K)$ with $\alpha_C=0.05\,\delta$. The final interval is intersected with the betting interval; if the intersection is empty, which happens only outside the coverage event, a point is returned. Coverage holds with probability at least $1-\alpha_S-\alpha_C=1-\delta$.
\end{proof}

The cost bound is proved in Appendix~\ref{app:pairedcost}. The calculation behind it is short. Replace the penalty in \eqref{eq:pairedci} by its mean and suppose the bet uses the true $\vbar_k$. With $\lambda=\eps/(\eps+\vbar_k)$, the penalty part of the radius is $\psi(\lambda)\vbar_k/\lambda\le\eps/2$, and the other part, $L/(n\lambda)$ after $n$ question visits, falls below $\eps/2$ once
\begin{equation}\label{eq:nk}
n_k=2L\,(\vbar_k+\eps)/\eps^2,\qquad L=\log(2K/\alpha_S).
\end{equation}
An answer at position $j$ is generated only while some budget $k\ge j$ is unresolved. After a start-up of one complete pair of rounds, $2KM$ answers, the audit therefore generates about $\sum_j\max_{k\ge j}n_k=2L(K/\eps+\sumw/\eps^2)$ answers. Section~\ref{sec:costlaw} fits this shape to measured costs and finds coefficients $2.47$, $1.38$ and $1.75$ where the calculation gives $2$, $2$ and $2$. The start-up is small when $M\eps^2$ is small; Section~\ref{sec:losses} shows where it is not.

\paragraph{Explicit constants.} The constants hidden in \eqref{eq:pairedcost} come from the adaptive bet. A variant that also runs fixed dyadic bets turns the calculation into a proof with small constants.

\begin{proposition}[Paired audit with a grid of bets]\label{prop:grid}
Suppose the audit also runs, for every budget, the bets $\lambda_h=2^{-h-2}$ for $h=0,\ldots,H-1$ with $H=1+\lceil\log_2\{(1/4+\eps)/\eps\}\rceil$, each giving the interval $T/n\pm\{\psi(\lambda_h)D+L'\}/(\lambda_hn)$ after $n$ rows of complete pairs, where $T$ and $D$ are pooled success and disagreement counts and $L'=\log\{2K(H+1)/\alpha_S\}$, and intersects all its intervals. Let $n_F$ be the number of rows at the final round. Then for every $\eta\in(0,1)$,
\[
\begin{aligned}
\E N\le\min\Bigl\{Kn_F,\ &2KM+\bigl\{16L'+\log(K/\eta)\bigr\}\\
&\Bigl(\frac K\eps+\frac{\sumw}{\eps^2}\Bigr)+\eta Kn_F\Bigr\}.
\end{aligned}
\]
\end{proposition}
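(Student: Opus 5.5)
The bound $Kn_F$ is immediate: the audit stops at the deterministic final round, and each of the $n_F$ rows generates at most $K$ answers per question. Here we read a row as one complete pair across all questions, counted so that $Kn_F$ is the full fixed-design cost. For the second bound I will follow the calculation after \eqref{eq:nk}, but replace the idealized bet by the best grid bet and the mean penalty by a high-probability bound on $D$.

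\emph{Step 1: a good grid bet for each budget.} Put $w_k=\vbar_k+\eps$. Choose $h$ with $\lambda_h\in(\lambda^*/2,\lambda^*]$, where $\lambda^*=\min\{1/4,\eps/(4w_k)\}$. The choice of $H$ makes the grid reach down to $\eps/\{4(1/4+\eps)\}$, so such an $h$ exists. For $\lambda\le1/4$ we have $\psi(\lambda)\le\lambda^2/\{2(1-\lambda)\}\le2\lambda^2/3$.

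\emph{Step 2: concentration of the disagreements.} After $n$ rows, $D$ is a sum of $nM$ independent Bernoulli variables with mean $\E D=2n\sum_x p_k(x)\{1-p_k(x)\}=2nM\vbar_k$, counted per pair. A multiplicative Chernoff bound, union-bounded over the $K$ budgets, gives an event of probability at least $1-\eta$ on which
\[
D\le 2\E D+c\log(K/\eta).
\]
A further union over the candidate stopping counts $n$ would add only a logarithm, and I would absorb it there. On this event the half-width of the chosen grid interval is at most
\[
\frac{\psi(\lambda_h)(4\vbar_k nM+c\log(K/\eta))+L'}{\lambda_h nM}.
\]
The first part is at most $(8/3)\lambda_h\vbar_k\le(2/3)\eps$, using $\lambda_h\le\eps/(4\vbar_k)$. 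The remaining part is at most $\{\log(K/\eta)/4+L'\}/(\lambda_h nM)$. Because $\lambda_h\ge\lambda^*/2\ge\eps/(8w_k)$, it drops below $\eps/3$ once
\[
nM\ge\frac{24w_k\{L'+\log(K/\eta)\}}{\eps^2}.
\]
Calibrating the constants carefully, this becomes $n_k M\le\{16L'+\log(K/\eta)\}w_k/\eps^2$ plus one row. I expect this constant-tracking to be the main obstacle. The thresholds $1/4$ and $\eps/(4w)$ may have to be tuned, for instance to $\lambda^*=\min\{1/2,\eps/(2w)\}$, and the $\log(K/\eta)$ term may need to be charged only through the disagreement fluctuation with $\psi(\lambda_h)\le\lambda_h^2$, so that it appears with coefficient $1$.

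\emph{Step 3: summing the answers.} After the start-up pair, which costs $2KM$ answers, an answer at position $j$ is generated in a row only while some budget $k\ge j$ is still open. On the good event, budget $k$ retires within $n_k$ rows. So the number of answers is at most
\[
2KM+\sum_j\max_{k\ge j}n_kM\le 2KM+\{16L'+\log(K/\eta)\}\sum_j\max_{k\ge j}\frac{\vbar_k+\eps}{\eps^2}.
\]
The last sum equals $K/\eps+\sumw/\eps^2$ by the definition of $\sumw$. On the complement, which has probability at most $\eta$, the cost is capped by $Kn_F$, which gives the term $\eta Kn_F$. Taking the minimum with the trivial bound completes the argument. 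Intersecting with additional intervals only shrinks the band, and coverage is unaffected because $L'$ accounts for the union over the $H+1$ bets.
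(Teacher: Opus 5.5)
Your route is the paper's own: a grid bet in $(\eps/(8w_k),\eps/(4w_k)]$ with $w_k=\vbar_k+\eps$, a Chernoff bound on $D$ at one deterministic row count per budget with a union over the $K$ budgets, the tail-sum count over positions, and $\eta Kn_F$ on the bad event.

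\textbf{The constant is never derived.} The proposition is an explicit-constant statement, and your Step~2 does not produce its constant. It ends at $24w_k\{L'+\log(K/\eta)\}/\eps^2$ and then simply asserts $\{16L'+\log(K/\eta)\}w_k/\eps^2$. The loss comes from bounding $\psi(\lambda_h)\,c\log(K/\eta)$ by a constant before dividing by $\lambda_h n$; keep the factor of $\lambda_h$ instead.

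Count rows as paths. Then $D$ is a sum of independent Bernoulli variables with total mean $n\vbar_k$, so $\E e^{D/2}\le\exp\{(e^{1/2}-1)n\vbar_k\}$ and $\Prb(D>2n\vbar_k+2u)\le e^{-u}$. Using $\psi(\lambda)\le\lambda^2$, the radius is at most $2\lambda\vbar_k+2\lambda u/n+L'/(\lambda n)\le\eps/2+\{\eps u/(2w_k)+8w_kL'/\eps\}/n$. This is at most $\eps$ once $n\ge u/w_k+16w_kL'/\eps^2$, and $u/w_k\le w_ku/\eps^2$ because $w_k\ge\eps$. So the fluctuation of $D$ costs only $u/w_k$ rows, which is why $\log(K/\eta)$ enters with coefficient $1$. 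Taking $u=\log(K/\eta)$ makes the union over budgets cost $\eta$.

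\textbf{Rows are paths, not pairs.} A row in the statement is one path, i.e.\ one question visit. Under your reading of a row as a pair of rounds over all questions, the cap $N\le Kn_F$ is false; it would be $2MKn_F$. Your radius denominator $\lambda_hnM$ and your answer count $n_kM$ are each off by a factor $2$, and the two errors happen to cancel.

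\textbf{The $2KM$ term.} It is not a separate start-up charge. It is what rounding each $n_k$ up to a multiple of $2M$ (a whole pair) costs at the $K$ positions, and it already contains the first pair. As written, you charge the first pair separately and then drop the ``plus one row'' rounding from the final inequality, so the displayed bound does not follow. Charge $2KM$ once, for the rounding.

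\textbf{No union over stopping counts.} None is needed, and an extra logarithm could not be absorbed into a bound whose coefficient on $\log(K/\eta)$ is $1$. The event is used only at the deterministic count $n_k$. A budget still open there has a grid interval of width at most $2\eps$ and retires, so position $j$ is generated in at most $\max_{k\ge j}n_k$ rows.
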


\begin{proof}
Fix $k$ and write $v=\vbar_k$. The grid contains a bet with $\eps/\{8(v+\eps)\}<\lambda\le\eps/\{4(v+\eps)\}\le1/4$, and $\psi(\lambda)\le\lambda^2$ there. The count $D$ is a sum of independent Bernoulli variables with total mean $nv$, so $\E e^{D/2}\le\exp\{(e^{1/2}-1)nv\}$ and $\Prb(D>2nv+2u)\le e^{-u}$. Outside that event the radius of this bet is at most
\[
2\lambda v+\frac{2\lambda u}n+\frac{L'}{\lambda n}\le\frac\eps2+\frac1n\Bigl\{\frac{\eps u}{2(v+\eps)}+\frac{8(v+\eps)L'}\eps\Bigr\},
\]
which is at most $\eps$ once $n\ge(v+\eps)(16L'+u)/\eps^2$. Apply this at the $K$ deterministic row counts $n_k$ given by that bound, rounded up to a multiple of $2M$, with $u=\log(K/\eta)$. Outside an event of probability $\eta$, every budget $k$ is resolved within $n_k$ rows or at the final round. An answer at position $j$ is generated only while some budget $k\ge j$ is unresolved, so $N\le\sum_j\max_{k\ge j}n_k\le2KM+\{16L'+\log(K/\eta)\}\sum_j\max_{k\ge j}(\vbar_k+\eps)/\eps^2$, and $\sum_j\max_{k\ge j}(\vbar_k+\eps)=\sumw+K\eps$. Outside that event, $N\le Kn_F$.
\end{proof}

Taking $\eta=\eps$ gives the order of \eqref{eq:pairedcost} with the constant 16 in place of the calculation's 2.

\paragraph{One audit for the rate and for practice.} The start-up $KM$ is absent from Theorem~\ref{thm:frontier}, so the paired audit attains the rate \eqref{eq:frontN} only when $M\lesssim1/\eps+s/(K\eps^2)$. The multilevel audit attains the rate everywhere, but with poor constants. Running the two side by side keeps the best of both.

\begin{corollary}[Portfolio]\label{cor:portfolio}
Fix $\rho\in(0,1)$. Run the paired audit at level $(1-\rho)\delta$ and the multilevel audit of Theorem~\ref{thm:frontier}(iii) at level $\rho\delta$ on separate fresh answers, one path at a time, always advancing the audit whose answer count divided by its weight, $1-\rho$ or $\rho$, is smaller, and return the band of the first to finish. The result is a valid audit, and on every run
\[
N\le\min\Bigl\{\frac{N_{\rm paired}}{1-\rho},\frac{N_{\rm multilevel}}{\rho}\Bigr\}+K,
\]
where $N_{\rm paired}$ and $N_{\rm multilevel}$ are the answers each audit would use alone on the same answers.
\end{corollary}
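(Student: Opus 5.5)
Validity and the cost bound are two separate claims, and I would prove them separately.

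\emph{Validity.} The two audits use disjoint, independent streams of fresh answers. The interleaving rule only decides which audit takes its next path, and it depends only on the two answer counts. So, seen by itself, each audit runs exactly as it would alone on its own stream; the other audit's progress at most pauses it. Each audit is valid for every answer law and every stopping time. Theorem~\ref{thm:paired} gives this for the paired audit, since its band comes from Ville's inequality and a final-round binomial look. Theorem~\ref{thm:frontier}(iii) gives it for the multilevel audit, whose confidence sequences are likewise anytime-valid. Let $E_P$ be the event that the paired band, at whatever moment it is declared, covers all $\theta_k$. Let $E_M$ be the analogous event for the multilevel audit. Then $\Prb(E_P^c)\le(1-\rho)\delta$ and $\Prb(E_M^c)\le\rho\delta$. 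The returned band is whichever one finishes first. On $E_P\cap E_M$ it covers, so by a union bound the failure probability is at most $\delta$.

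One point needs care: the paired audit's final look is at a fixed round, not an anytime bound. It is still valid, because the paired audit reaches that round only on its own schedule, and the interleaving never changes what it computes. Its coverage is therefore a statement about its own stream, regardless of when in wall-clock order it happens. I would make this precise by coupling. Attach to each audit a latent infinite stream of paths, exactly as in the proof of Theorem~\ref{thm:paired}. Then each audit's output is a fixed function of its own stream.

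\emph{Cost.} Let $N_P(t)$ and $N_M(t)$ be the counts after $t$ steps. The rule always advances the audit with the smaller ratio $N_P/(1-\rho)$ or $N_M/\rho$. Suppose the paired audit finishes first, with total $N_{\rm paired}$. Consider the last time the multilevel audit was advanced. At that moment its ratio before the advance was at most the paired ratio, and the paired ratio was at most $N_{\rm paired}/(1-\rho)$. So before that path, $N_M\le\rho N_{\rm paired}/(1-\rho)$. That one path adds at most $K$ answers. Hence
\[
N\le N_{\rm paired}+\frac{\rho N_{\rm paired}}{1-\rho}+K=\frac{N_{\rm paired}}{1-\rho}+K .
\]
The symmetric argument applies if the multilevel audit finishes first. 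In either case the finishing audit's count is at most what it would use alone. By the coupling, it is exactly that count, and finishing first means the other term of the minimum is at least as large. This gives $N\le\min\{\cdots\}+K$ on every run.

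\emph{Main obstacle.} The delicate step is the overshoot accounting: a single path can contain up to $K$ answers, which is where the $+K$ comes from. I also need the rule to compare ratios before each advance, so that the non-finishing audit overshoots by at most one path. Everything else follows from the validity results already stated.
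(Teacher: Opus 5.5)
Your validity argument (each audit's band is a fixed function of its own stream, then a union bound over $(1-\rho)\delta$ and $\rho\delta$) and your last-advance overshoot argument are the paper's. The gap is in how you reach the minimum. Write $w_{\rm paired}=1-\rho$ and $w_{\rm multilevel}=\rho$, and let ``fin'' and ``oth'' denote the audit that finishes first and the other one. Applying the last-advance argument only to the audit that does not finish gives $N\le N_{\rm fin}/w_{\rm fin}+K$, a bound in the finisher's term alone.

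Your closing claim, that finishing first makes the other term of the minimum at least as large, is false. At the finisher's last advance you only know $N_{\rm fin}^{\rm before}/w_{\rm fin}\le N_{\rm oth}/w_{\rm oth}$. Its final path can therefore push $N_{\rm fin}/w_{\rm fin}$ up to $K/w_{\rm fin}$ beyond the other audit's ratio, while the other audit may have been one answer away from finishing alone. For example, take $\rho=1/2$, counts $N_P=K$ and $N_M=K+1$, and a last paired path of $K$ answers that finishes the paired audit, while the multilevel audit would have stopped alone at $K+2$. Then $N_{\rm paired}/(1-\rho)=4K$ but $N_{\rm multilevel}/\rho=2K+4$. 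Your argument yields only $N\le5K$, whereas the statement asserts $N\le3K+4$.

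The repair uses the same idea once more: apply the last-advance inequality to the finisher too. At its last advance its ratio was at most the other's, and the other's count has not decreased since. So $N_{\rm fin}\le w_{\rm fin}N_{\rm oth}/w_{\rm oth}+K$, hence $N=N_{\rm fin}+N_{\rm oth}\le N_{\rm oth}/w_{\rm oth}+K$. By your coupling, the non-finisher's count at termination is at most its standalone count. This is exactly the paper's formulation: after every step $N\le N_i/w_i+K$ holds for both $i$, and each current $N_i$ is at most what audit $i$ would use alone.

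You should also state that the portfolio terminates, since a valid audit must return a band. This holds because each audit stops by a deterministic cap and the rule keeps advancing both.
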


\begin{proof}
Write $w_1=1-\rho$ and $w_2=\rho$, and let $N_i$ be the answers audit $i$ has used so far. Each step generates one path, of at most $K$ answers, for the audit with the smaller ratio $N_i/w_i$. After every step, $N_j\le w_jN_i/w_i+K$ for $i\ne j$: at the last step at which audit $j$ advanced, $N_j/w_j\le N_i/w_i$; since then $N_i$ has not decreased, and that step added at most $K$ answers to $N_j$. Hence $N=N_1+N_2\le N_i/w_i+K$ for both $i$. Each audit's state depends only on its own answers, so until the first completion $N_i$ is at most the count the audit would use alone, and the bound follows. Each audit covers every $\theta_k$ with probability at least $1-w_i\delta$ whatever the other does, so both cover with probability at least $1-\delta$ and the returned band covers. Both stop by deterministic caps, so the portfolio does too.
\end{proof}

With $\rho=1/32$ the portfolio attains \eqref{eq:frontN} up to a factor 32 and uses at most $32/31$ of the answers of the paired audit at level $31\delta/32$, plus $K$. The bound concerns generated answers only.

\section{Other curves, other settings}\label{sec:beyond}

\subsection{Pass@\texorpdfstring{$k$}{k} and majority voting}\label{sec:ustat}

Coverage of the paired audit uses only that each path yields a correct-or-incorrect outcome at every budget, computed from the first $k$ answers, and that paths are independent given the question. Pass@$k$ and majority voting have that form, so the paired audit certifies them unchanged; only the label count differs, since pass@$k$ needs every answer before the first correct one checked and majority voting needs a grade for each distinct valid answer.

A path can also serve every budget at once through its subsets. A path of length $b\ge k$ contains $\binom bk$ subsets of size $k$, and each is distributed as $k$ fresh answers. Averaging a curve's outcome over all of them gives an unbiased estimate $U_{x,k}$ of $p_k(x)$, a U-statistic \citep{hoeffding1948}. For best-of-$k$, sorting the path by score gives the average in closed form: a group of $d$ tied answers with $c$ answers strictly below it contributes its average correctness times $\{\binom{c+d}k-\binom ck\}/\binom bk$, which matches first-maximum selection in expectation \citep{webgpt,gao}. For pass@$k$ with $c$ correct answers on the path, $U_{x,k}=1-\binom{b-c}k/\binom bk$ \citep{codex}. For majority voting we average over random subsets, an incomplete U-statistic that is still unbiased.

These averages are bounded but not binary, so the paired inequality does not apply. A bounded-mean inequality does: for $Z\in[-1,1]$ and $0\le\lambda<1$, $e^{\lambda z-\psi(\lambda)z^2}\le1+\lambda z$ on $[-1,1]$, hence $\E e^{\lambda(Z-\E Z)-\psi(\lambda)Z^2}\le1$. With $Z=U_{x,k}-c_{x,k}$ for a prediction $c_{x,k}\in[0,1]$ built from earlier rounds, multiplying over the questions of a round and over rounds gives two nonnegative supermartingales per budget, and Ville's inequality gives simultaneous bands. We call this the all-subsets audit. It needs the correctness of every answer on the path, records or not, so it uses more labels than the paired audit unless one grade can serve every repeat of an answer (Section~\ref{sec:luna}).

\paragraph{First-success stopping for pass@\texorpdfstring{$k$}{k}.} For pass@$k$ alone a path can stop at its first correct answer. With $H=\min\{j:Y_j=1\}$, or $K+1$ if none of the first $K$ answers is correct, the indicator $\ind\{H\le k\}$ is the pass@$k$ outcome of the path at every $k\le K$. On $n$ paths at questions drawn independently and uniformly from the list, the stopping positions are independent and identically distributed, and the Dvoretzky--Kiefer--Wolfowitz inequality with the constant of \citet{massart1990} gives one band of radius $\{\log(2/\delta)/(2n)\}^{1/2}$ for every $k$ at once \citep{dkw1956}. A path at question $x$ uses $\sum_{j<K}\{1-p_1(x)\}^j$ answers in expectation and needs the correctness of each of them.

\subsection{One grade per distinct answer}

On multiple-choice or short-answer benchmarks the grader is a deterministic function of the question and the extracted answer. Reusing one grade for every repeat of a question--answer pair changes no observation of any audit: bands, coverage and answer counts are identical on every run, and only the number of distinct grader calls falls. The same holds for any curve computed from the extracted answers. In the MMLU-Pro study below, fewer than 400 distinct grader calls certify each curve.

\subsection{A population of questions}\label{sec:panel}

The target so far is the listed questions. For a population of unseen questions the between-question variance returns, and Proposition~\ref{prop:joint} gives its price. A fixed list can still serve a population. Draw a panel of $m$ questions independently from the population, audit the panel as a fixed list at level $\delta$, and add
\[
r_m=\Bigl\{\frac{\log(2K/\alpha)}{2m}\Bigr\}^{1/2}
\]
to every radius. By Hoeffding's inequality \citep{hoeffding1963} and a union over budgets and sides, the panel average of $p_k$ is within $r_m$ of the population value at every $k$ with probability at least $1-\alpha$, so the widened band covers the population curve with probability at least $1-\delta-\alpha$. The audit then pays the within-question price for the panel and the population pays once, through $m$.

\subsection{Answers that depend on earlier answers}\label{sec:dependent}

The theory above assumes that answers on a path are independent given the question. Agents that revise, search or reuse context break this. Coverage of a curve does not need independence within a path. Let $Z_1,\ldots,Z_n$ be independent complete trajectories from one fixed generation policy whose budgets are prefixes of each other, so that later answers may depend on earlier ones. Let $W_k(Z)$ be the correctness of the first highest-scoring answer among the first $k$, and
\[
\begin{aligned}
V&=\sum_{k=2}^K\Prb\{W_k\ne W_{k-1}\},\\
R&=\E\{\text{number of strict score records in }Z\}.
\end{aligned}
\]
Since the winner changes only at a strict record, $0\le V\le R-1\le K-1$.

\begin{proposition}[Bands for dependent trajectories]\label{prop:dependent}
For a universal constant $C$,
\[
\E\max_{k\le K}\Bigl|\frac1n\sum_{i=1}^nW_k(Z_i)-\E W_k\Bigr|\le C\sqrt{\frac{\log(2+V)}n},
\]
and with probability at least $1-\delta$ the maximum exceeds this bound by at most $\sqrt{\log(1/\delta)/(2n)}$. With $B$ independent vectors $\sigma^{(b)}$ of Rademacher signs and
$A_B=B^{-1}\sum_{b\le B}\max_{k\le K}|n^{-1}\sum_i\sigma_i^{(b)}W_k(Z_i)|$, the radius
\[
2A_B+2\sqrt{\frac{\log(3/\delta)}{2B}}+3\sqrt{\frac{\log(3/\delta)}{2n}}
\]
covers every $\E W_k$ simultaneously with probability at least $1-\delta$, without knowledge of $V$.
\end{proposition}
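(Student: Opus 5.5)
We treat the statement as a bound on the supremum of an empirical process indexed by the $K$ budgets. The plan has three steps: a chaining argument, with a metric taken from the switch probabilities, for the expectation; McDiarmid for the deviation; and symmetrization with a Monte Carlo estimate for the data-driven radius.

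\emph{Geometry of the index set.} For $k<\ell$ the two outcomes differ only if the winner switches somewhere between them, and $|W_\ell-W_k|\le\sum_{j=k+1}^{\ell}|W_j-W_{j-1}|$. Hence
\[
d(k,\ell)^2=\E(W_k-W_\ell)^2=\Prb\{W_k\ne W_\ell\}\le F(\ell)-F(k),
\qquad F(k)=\sum_{j=2}^{k}\Prb\{W_j\ne W_{j-1}\},
\]
with $F(K)=V$. This is the only place the trajectory structure enters, and it needs no independence within a path.

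\emph{Covering numbers.} Cut $[0,V]$ into intervals of length $r^2$ and keep one budget per nonempty cell. Every budget is then within $d$-distance $r$ of a kept one, so $N(r)\le 1+V/r^2$. The process $X_k=n^{-1}\sum_i\{W_k(Z_i)-\E W_k\}$ has increments that are averages of $n$ independent variables bounded in $[-1,1]$ with variance at most $d(k,\ell)^2$. These increments are therefore sub-gamma with variance proxy $d^2/n$ and scale $1/n$, by Bernstein.

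\emph{Bounding the supremum.} Generic chaining or Dudley's bound with mixed tails gives roughly
\[
\E\sup_k|X_k|\lesssim n^{-1/2}\int_0^1\sqrt{\log N(r)}\,dr+n^{-1}\int_0^1\log N(r)\,dr.
\]
The first integral is $O(\sqrt{\log(2+V)})$: compute $\int_0^1\sqrt{\log(1+V/r^2)}\,dr$, splitting at $r=1/\sqrt{1+V}$. Add the base point $|X_1|\le 1/\sqrt n$ in expectation.

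The Bernstein part is the main obstacle. It contributes $\log(2+V)/n$, which is at most $\sqrt{\log(2+V)/n}$ only when $n\ge\log(2+V)$. If $n<\log(2+V)$, the trivial bound $\sup_k|X_k|\le1\le\sqrt{\log(2+V)/n}$ holds anyway, so we absorb this case into $C$. Truncating the chain at scale $r\approx n^{-1/2}$ should also work, but the case split is simpler.

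\emph{Deviation.} The maximum over $k$ changes by at most $1/n$ when one trajectory $Z_i$ is replaced. McDiarmid's inequality then gives exceedance at most $\sqrt{\log(1/\delta)/(2n)}$ with probability at least $1-\delta$.

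\emph{Data-driven radius.} Split $\delta$ into three parts of $\delta/3$ each.

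First, symmetrization gives $\E\sup_k|X_k|\le2\,\E R_n$, where $R_n=\E_\sigma\max_k|n^{-1}\sum_i\sigma_iW_k(Z_i)|$ is the conditional Rademacher average.

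Second, $R_n$ is $1/n$-bounded-difference in the $Z_i$, so McDiarmid gives $\E R_n\le R_n+\sqrt{\log(3/\delta)/(2n)}$.

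Third, given the data, $A_B$ is an average of $B$ independent terms in $[0,1]$ with mean $R_n$. Hoeffding gives $R_n\le A_B+\sqrt{\log(3/\delta)/(2B)}$.

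Finally, the McDiarmid deviation for $\sup_k|X_k|$ adds another $\sqrt{\log(3/\delta)/(2n)}$. Collecting the terms gives
\[
2A_B+2\sqrt{\frac{\log(3/\delta)}{2B}}+2\sqrt{\frac{\log(3/\delta)}{2n}}+\sqrt{\frac{\log(3/\delta)}{2n}},
\]
which is exactly the stated radius. A union bound over the three events gives coverage at least $1-\delta$, and $V$ is never used.
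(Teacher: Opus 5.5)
You have a genuine gap in the expectation bound, at the step where you invoke Dudley's bound with mixed tails using the $L_2(P)$ covering numbers $N(r)$ in both integrals. Your deviation step and your computable-radius argument are correct and match the paper's proof term for term; that argument is symmetrization, bounded differences for $\sup_k|X_k|$ and for $R_n$, Hoeffding for $A_B$, and a union over three events.

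In Bernstein's inequality for an increment $X_k-X_\ell$, the linear term has scale $\|W_k-W_\ell\|_\infty/n$, not $d(k,\ell)/n$. For $\{0,1\}$-valued outcomes this equals $1/n$ for every pair of budgets that can disagree, however small $d(k,\ell)$ is. Mixed-tail chaining therefore charges the sub-exponential part in a metric in which all distinct budgets sit at distance of order $1/n$, and that costs about $\log K/n$. The inequality you wrote does not hold in general with $L_2(P)$ covering numbers in the second integral. Take indicators of $K$ disjoint events of probability $1/K$: one ball of radius $\sqrt{2/K}$ covers the class, yet the expected supremum is at least $1/n-1/K$. That is far above what your formula gives once $K\gg n\log K$.

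Carried out correctly, your chaining proves only $C\{\sqrt{\log(2+V)/n}+\log K/n\}$. Your case split disposes of $n<\log(2+V)$ but nothing more. For bounded $V$ and large $K$, the extra term exceeds $C\sqrt{\log(2+V)/n}$ for all $n$ up to about $\log^2K/\log(2+V)$. That is exactly the dependence on $K$ the proposition is meant to remove. Truncating the chain at $r\approx n^{-1/2}$ does not fix this by itself, because a covering gives no control of $|X_k-X_{\pi(k)}|$ uniformly over the budgets attached to one center.

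The missing idea is bracketing, which is what the paper uses. Your partition of the budgets into runs of consecutive $k$ whose $F$-values differ by less than $r^2$ already yields brackets. The pointwise minimum and maximum of $W_k$ over a run bracket every member. By your own inequality $|W_\ell-W_k|\le\sum_j|W_j-W_{j-1}|$, the bracket width is nonzero only if a switch occurs inside the run, so its squared $L_2$ norm is below $r^2$. Hence $1+V/r^2$ bounds the $L_2(P)$ bracketing numbers.

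The bracketing maximal inequality for classes with a bounded envelope (Theorem 2.14.2 of van der Vaart and Wellner) controls the large-deviation part by truncating bracket widths, not through a sup-norm net. It gives $C\sqrt{\log(2+V)/n}$ with no $\log K$. Its remainder term is nonzero only when $n$ is at most of order $\log(2+V)$, where your trivial bound already applies. With that step replaced, the rest of your proof goes through unchanged.
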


\begin{proof}
Let $\tau_1=0$ and $\tau_k=\sum_{j=2}^k\Prb(W_j\ne W_{j-1})$. For $0<r\le1$, group consecutive budgets by $\lfloor\tau_k/r^2\rfloor$; there are at most $1+\lfloor V/r^2\rfloor$ groups. Within a group $[a,b]$, the pointwise minimum and maximum of the $W_k$ form a bracket with
$\E\{\max_{a\le k\le b}W_k-\min_{a\le k\le b}W_k\}^2\le\sum_{j=a+1}^b\Prb(W_j\ne W_{j-1})<r^2$. The bracketing entropy integral is at most $\int_0^1\sqrt{\log(2+V/r^2)}\,dr\le\sqrt{\log(2+V)}+\int_0^1\sqrt{2\log(1/r)}\,dr$, and the bracketing maximal inequality for bounded classes \citep[Theorem~2.14.2]{vdvw} gives the bound in expectation. Changing one trajectory moves the maximum by at most $1/n$, so the bounded-differences inequality \citep{mcdiarmid1989} gives the high-probability statement. For the computable radius, symmetrization bounds the expected maximum error by twice the expected Rademacher supremum. Both the maximum error and the conditional expected Rademacher supremum change by at most $1/n$ when one trajectory changes, and two bounded-difference events account for $3\sqrt{\log(3/\delta)/(2n)}$. Given the trajectories, each simulated supremum lies in $[0,1]$, and Hoeffding's inequality adds $2\sqrt{\log(3/\delta)/(2B)}$. A union bound over the three events finishes the proof.
\end{proof}

A complete trajectory costs $K$ answers and, when generation needs no labels, at most its number of records in queries. The proposition gives coverage and a sample size from any upper bound on $V$. It does not optimize acquisition, and it covers one fixed policy.

\subsection{The price of unknown score percentiles}\label{sec:oracle}

Every audit above treats the verifier score as an ordinal quantity: only the order of the scores on a path matters. If the population distribution of scores at a question were known, each answer's percentile would be known too, and labels could be spent exactly where the winners of each budget land \citep{fitas2026}. How much is that knowledge worth? We answer for one question and a mean-squared-error target, where the answer is exact.

All answers are independent and identically distributed from one score--correctness law. In the \emph{oracle} experiment the auditor is given the score distribution, so it observes each answer's percentile; in the \emph{unknown} experiment it observes scores only. In both it may pool answers across budgets and acquire up to $C$ answers and $T$ correctness labels adaptively, each answer labeled at most once. Let
\[
R=\inf_{\mathcal A}\sup_P\max_{k\le K}\E_P(\widehat\theta_k-\theta_k)^2
\]
be the minimax worst-budget mean squared error, with $R_{\rm oracle}$ and $R_{\rm unknown}$ its values in the two experiments. We allow caps that hold on every run and caps that hold in expectation uniformly over laws.

\begin{theorem}[Unknown score percentiles]\label{thm:oracle}
If $K\to\infty$, $T/\log K\to\infty$, $C/K\to\infty$ and $C\ge T$, then under either kind of cap
\[
\begin{aligned}
R_{\rm oracle}&=(1+o(1))\max\Bigl\{\frac{\log K}{16T},\frac K{8C}\Bigr\},\\
R_{\rm unknown}&=(1+o(1))\max\Bigl\{\frac{\log K}{16T},\frac K{2eC}\Bigr\}.
\end{aligned}
\]
Hence $R_{\rm unknown}/R_{\rm oracle}\to4/e$ when
\[
\limsup C\log K/(KT)\le2,
\]
and $R_{\rm unknown}/R_{\rm oracle}\to1$ when
\[
\liminf C\log K/(KT)\ge8/e.
\]
\end{theorem}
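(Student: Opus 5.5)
The plan treats the risk as the maximum of two decoupled parts, a label part and a generation part, and computes each exactly to first order in both experiments. By a Rényi-type reduction, $\theta_k=\int_0^1 k u^{k-1}\mu(u)\,du$ for a single question, where $\mu(u)$ is the conditional correctness at score percentile $u$. Estimation error then has two sources. The first is noise in learning $\mu$ on the upper percentiles where the weights $ku^{k-1}$ concentrate; it is controlled by labels. The second is error in locating percentiles; it is zero in the oracle experiment and is controlled by generated answers in the unknown one. The proof therefore shows matching upper and lower bounds of the form $\max\{\text{label term},\text{generation term}\}\cdot(1+o(1))$. Two facts make the maximum, rather than the sum, correct to first order: $T/\log K\to\infty$ and $C/K\to\infty$ let each resource be split so that the non-binding one is spent with negligible loss, and $C\ge T$ leaves answers to spare.

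\emph{Label term $\log K/(16T)$.} For the lower bound, take the two-point (or Bayesian) prior behind the $(1+\lfloor\log_{16}K\rfloor)$ label bound quoted after Theorem~\ref{thm:frontier}. Make $\mu$ piecewise Bernoulli with mean near $1/2$ on the disjoint dyadic-in-$\log$ percentile scales $1-1/k$, for $k$ ranging over a geometric grid. A van Trees or Bayes-risk argument shows that any allocation of $T$ labels across the scales leaves some budget with variance at least $\tfrac14\cdot(\#\text{scales})/T$ up to the Beta weight normalization. Optimizing the grid ratio gives the constant $1/16$ times $\log K$. For the upper bound, place labels at percentiles with density proportional to $1/(1-u)$ on $[1-1/K,1-\text{const}]$. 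This is an importance-sampling estimator, which gives every $k$ variance $\le(1+o(1))\log K/(16T)$; in the oracle experiment the design is available directly.

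\emph{Generation term.} In the oracle experiment, a $k$-subset argument on $C$ answers gives variance $\approx k\,\Var(\cdot)/C$, and the worst case $\Var\le1/4$ over mixtures yields $K/(8C)$ after the Hoeffding projection variance $v_k$ of Section~\ref{sec:instance}, maximized over laws. The lower bound comes from a two-point law perturbing $\mu$ near the top $1/K$ of percentiles, with Fisher information counted per answer. In the unknown experiment the extra cost is that only empirical ranks are seen. The estimator is the all-subsets U-statistic of Section~\ref{sec:ustat}, whose first-order Hoeffding term has maximal variance $K/(2eC)$. The factor $1/e$ arises as $\sup_k$ of $k\cdot(1-1/k)^{k}\ldots$, i.e.\ from $\max_u k u^{k-1}(1-u)$-type integrals, which tend to $e^{-1}$. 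For the lower bound, take a local asymptotic-minimax perturbation that shifts the score distribution's upper tail. This cannot be distinguished from a change of $\mu$ without knowing the percentiles, and an LAN computation gives the efficient variance, matching $K/(2eC)$. Expectation-type caps are handled by a truncation argument showing that the limiting constants are unchanged.

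The two ratio regimes then follow by comparing terms. The generation term dominates in both experiments when $C\log K/(KT)\le2$, giving the ratio $(1/2e)/(1/8)=4/e$. The label term dominates both when $C\log K/(KT)\ge8/e$, giving ratio $1$. The main obstacle is the sharp unknown-percentile lower bound with constant exactly $1/(2e)$. Here the plan is to construct a least-favorable one-dimensional submodel in which the score tail and the correctness curve move together so that the rank statistics are sufficient, and then to apply Hájek–Le Cam convolution bounds uniformly in $k\le K$ as $K\to\infty$.
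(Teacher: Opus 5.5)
The decomposition into a label term and a generation term matches the paper, and so does the closing comparison of the two ratio regimes. The gap is that the proposal never builds an estimator for the unknown experiment that respects both caps, and your reason for getting a maximum rather than a sum fails where it matters. The all-subsets U-statistic needs the labels of all $C$ answers, while $T$ may be far smaller. The $1/(1-u)$ label design needs percentiles that the unknown experiment hides. ``Spending the non-binding resource with negligible loss'' also says nothing when $C\log K/(KT)$ is of order one. At $C\log K/(KT)=2$ the two oracle terms are equal, so a bound that adds the answer-limited and label-limited variances overshoots $R_{\rm oracle}$ by a factor two, exactly at the boundary of the $4/e$ regime.

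You are missing two ingredients that the paper supplies.
\begin{enumerate}
\item \emph{Rank-based subsampling.} Sort a pool of $n$ answers by score and label the rank-$r$ answer independently with probability $tQ_r$, where $Q_r$ is the design mass of the $r$th rank bin. Use Horvitz--Thompson weights $p_{k,n}(r)=\binom{r-1}{k-1}/\binom nk$. The domination $p_{k,n}(r)\le D_{K,n}\{(r/n)^k-((r-1)/n)^k\}$, with $D_{K,n}\to1$, carries the percentile design's label variance over to observed ranks.
\item \emph{A saturated top tail.} Label every answer on a top tail of width of order $\sqrt{\log K}/K$, which costs $o(T)$ labels. Budgets above about $K/(\log K)^{1/4}$ have their winner in that tail, so they pay only the answer-limited variance. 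For smaller budgets that variance is negligible against the label term.
\end{enumerate}
This separation of budgets, not a split of resources, turns the sum into a maximum. It is needed in the oracle experiment too, since labels can only be placed on generated answers.

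The theorem is about exact constants, and several of your steps would give wrong ones.
\begin{enumerate}
\item[(i)] Charging each budget only for the label noise in its own disjoint band cannot reach $1/16$ for any grid ratio. With bands $[a/k,ra/k)$ in $t=-\log u$, it gives at most $\max_{a,r}(e^{-a}-e^{-ra})^2/(4\log r)\approx0.055$ times $\log K/T$. The sharp constant needs cells whose mesh in $\log t$ tends to zero, together with a least favorable mixture over budgets. The paper applies van Trees with independent priors per cell, then Cauchy--Schwarz with weights $\lambda_k\propto1/k$, which gives $\inf_q\max_k\sum_jp_{kj}^2/q_j\ge(1-o(1))\log K/4$.
\item[(ii)] Your label design on $[1-1/K,1-\text{const}]$ leaves two regions unlabelled. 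One is the bottom of $[0,1]$, where the budget-1 winner is uniform. The other is the top $1/K$, which holds the budget-$K$ winner with probability about $1-e^{-1}$. Extended to $[0,1-h]$, the density proportional to $1/(1-u)$ has second moment $\tfrac{k}{2(2k-1)}\log(1/h)$, which is twice the target at $k=1$. The paper therefore adds a small envelope component proportional to $\max_{j\le\sqrt{\log K}}ju^{j-1}$.
\item[(iii)] In the oracle experiment, a $k$-subset or Hoeffding-projection argument maximized over laws gives $k/(2e)$ per answer, which is the unknown rate. The rate $K/(8C)$ comes instead from the linear estimator $\tfrac12+C^{-1}\sum_iw_k(U_i)(Y_i-\tfrac12)$ with $w_k(u)=ku^{k-1}$. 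Its variance is at most $(4C)^{-1}\int_0^1w_k^2=k^2/\{4(2k-1)C\}$.
\item[(iv)] In the unknown experiment, the crux is a uniform bound: no score--correctness law makes $\Var(U_{C,k})$ exceed $k/(2eC)+(k/C)^2+1/C$. Your heuristic $\max_uku^{k-1}(1-u)\to e^{-1}$ neither produces $1/(2e)$ nor shows extremality. The extremal law is a threshold: correct exactly above percentile $1-1/(2k)$. The paper proves extremality by expressing each overlap covariance through $\int z^{-1-\gamma}G(z)\{z-G(z)\}\,dz$, with $G(z)=\int_0^zf(v^{1/k})\,dv$. This integrand is concave in $G$, so its minimum over the admissible range sits at the threshold endpoint.
\end{enumerate}

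For the matching lower bound you do not need LAN theory or uniformity in $k$, since only $k=K$ matters. Take the same threshold family, with a correct top band of mass $\lambda/K$, so that each answer reveals one $\Ber(\lambda/K)$ bit. Van Trees at $\lambda=1/2$ then gives $K/(2eC)$ in a few lines.
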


\begin{figure*}[t]
\centering\includegraphics[width=.92\columnwidth]{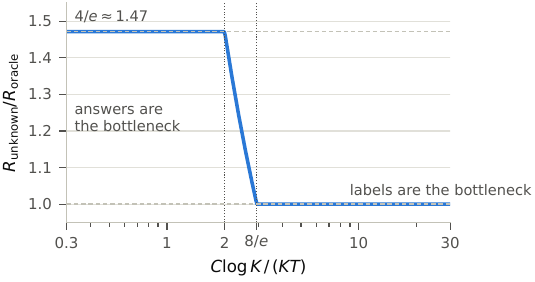}
\caption{Theorem~\ref{thm:oracle}: the leading ratio of the minimax errors without and with known score percentiles, as a function of the balance $C\log K/(KT)$ between generated answers $C$ and labels $T$.}
\label{fig:percentiles}
\end{figure*}

With $x=C\log K/(KT)$, the label term dominates the oracle rate when $x\ge2$ and the unknown rate when $x\ge8/e$, which gives the two limits and the curve of Figure~\ref{fig:percentiles}. An estimator that uses only the observed order of the scores attains the unknown rate. The lower bounds allow any adaptive acquisition and delayed labeling. So $4/e\approx1.47$ is the exact asymptotic price of not knowing the percentiles when answers are the bottleneck. When labels are, the knowledge is worth nothing to first order. The proof, in Appendix~\ref{app:oracle}, rests on a variance bound for the complete-pool U-statistic that keeps every order of overlap between subsets and holds uniformly over score--correctness laws,
\begin{equation}\label{eq:ustatbound}
\Var(U_{n,k})\le\frac k{2en}+\Bigl(\frac kn\Bigr)^2+\frac1n,
\end{equation}
where $U_{n,k}$ averages the selected label over all subsets of size $k$ of $n$ answers. The same constant $4/e$ appears in the work of \citet{fitas2026} with a different meaning: there it compares an envelope design with the variance-optimal design, both with known percentiles.

\subsection{Spending labels when percentiles are known}\label{sec:design}

When the percentiles are known, a band of fixed width calls for a different allocation of labels than a small mean squared error does. Let $P$ be a known $K\times d$ matrix whose row $k$ gives the probability that the budget-$k$ winner falls in each of $d$ score cells. For a proposal $q$ on the cells, draw $T$ independent cells, query their labels, and estimate
\[
\widehat\theta_k=\tfrac12+\frac1T\sum_{i=1}^T\frac{P_{k,I_i}}{q_{I_i}}\Bigl(Y_i-\tfrac12\Bigr).
\]
With $V_k(q)=\sum_iP_{ki}^2/q_i$, $W_k(q)=\max_iP_{ki}/q_i$ and $x=\log(2K/\delta)$, Bernstein's inequality gives each budget the radius
\[
r_k(T,q)=\frac{(W_k+1)x}{6T}+\sqrt{\frac{V_kx}{2T}+\Bigl(\frac{(W_k+1)x}{6T}\Bigr)^2},
\]
and $r_k\le\eps$ holds exactly when $T\ge x\{V_k(q)/(2\eps^2)+(W_k(q)+1)/(3\eps)\}$. The smallest sufficient number of labels is therefore governed by the convex objective
\[
C_\rho(q)=\max_k\{V_k(q)+\rho W_k(q)\},\qquad\rho=2\eps/3,
\]
not by $V_k$ or $W_k$ alone. Writing $B_{(k,i),j}=P_{kj}^2+\rho P_{ki}\ind\{j=i\}$ gives $C_\rho(q)=\max_{k,i}\sum_jB_{(k,i),j}/q_j$, and convex duality gives the identity
\[
\min_{q\in\Delta_d}C_\rho(q)=\max_{\lambda\in\Delta_{Kd}}\Bigl[\sum_j\Bigl(\sum_{k,i}\lambda_{ki}B_{(k,i),j}\Bigr)^{1/2}\Bigr]^2,
\]
with the inner minimum at $q_j$ proportional to the square root for fixed $\lambda$. Strict feasibility and divergence at any zero coordinate give strong duality, so any feasible $q$ and any $\lambda$ bracket the optimum. Table~\ref{tab:design} compares three proposals on two complete pools. The fixed-width proposal saves 8.50\% and 4.74\% of the labels of the envelope proposal $q\propto\max_kP_{k\cdot}$, and 3.26\% and 11.68\% of those of the variance-optimal proposal. The variance objective $\max_kV_k$ can have several near-minimizers with different label counts, and that column reports the one our solver returns.

\begin{table*}[t]
\caption{Sufficient label draws for a band of half-width $1/32$ at level 95\% over $K=100$ budgets with known percentiles, on two complete HumanEval+ pools scored by Llama-3.1-70B unit tests \citep{coderm,evalplus}. The designs receive the same pool; answer generation is not charged. Last column: relative gap between the feasible fixed-width proposal and its dual lower bound.}\label{tab:design}
\centering\small
\begin{tabular}{lrrrr}
\toprule
solutions from & envelope & variance-optimal & fixed-width & primal--dual gap\\
\midrule
Llama-3-8B & 6,681 & 6,319 & 6,113 & 0.215\%\\
Llama-3-70B & 5,398 & 5,822 & 5,142 & 0.233\%\\
\bottomrule
\end{tabular}
\end{table*}

\section{Experiments}\label{sec:exp}

\subsection{Stored score pools}\label{sec:pools}

The primary evidence is 185 score pools from \citet{weaver2025}. Eight answer sets (GPQA, MATH500, MMLU and MMLU-Pro questions, each answered by Llama-3.1-8B-Instruct and Llama-3.1-70B-Instruct, with 100 answers per question) are each scored by 19 to 27 reward models and verifiers; one answer set and one scorer make one pool. An audit draws answers uniformly with replacement from a question's stored answers, so every pool defines an exact answer law whose curve we compute in closed form, ties included, and every miss is visible. The questions of each benchmark were split in half once, before any tuning. Twenty-six settings of stratified betting audits, three paired and twenty-three per-row, were compared on the first halves and the best was frozen. The held-out pools use the other halves, 250 to 360 questions each. Thirty-two further pools (Weaver's combined score, CodeRM unit tests, CodeContests) come from sources seen during development and are reported separately. We run five audits per pool at $K\in\{64,256,1024\}$ with $\eps=1/32$ and $\delta=0.05$. Appendix~\ref{app:details} gives licenses, the protocol and every competing design.

The curves themselves are flat where it matters (Figure~\ref{fig:plateau}): the median held-out curve comes within $1/32$ of its best value by budget 10, 126 of the 185 by budget 16 and 174 by budget 64. At $K=64$, no curve among all 217 pools sits more than $1/32$ below its best value over budgets up to 64.

\begin{figure*}[t]
\centering\includegraphics[width=.92\columnwidth]{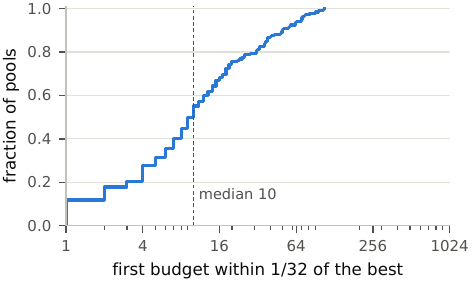}
\caption{First budget at which the exact curve of a held-out pool comes within $1/32$ of its best value over budgets up to 1024.}
\label{fig:plateau}
\end{figure*}

\paragraph{The competition.} The comparator on each pool is the cheapest of three certified audits, chosen with hindsight on that pool: an exact-binomial audit that stops revealing paths once every budget is settled, nested exact-binomial looks with budget retirement, and a rank-based stopping audit (Appendix~\ref{app:comparators}). All three draw questions at random. At $K=64$ and 256 we also ran a fixed Hoeffding design, a fixed exact-binomial design and the record design of \citet{fitas2026}; none of them is cheaper than the best of the three on any of the 370 held-out cases.

\subsection{The held-out comparison}\label{sec:heldout}

\begin{table*}[t]
\caption{Paired audit on the 185 held-out pools, relative to the cheapest competing certified audit on each pool (median ratio of five-run means). Brackets: 95\% bootstrap interval over the eight answer sets. Labels are compared with the cheapest competitor in labels.}\label{tab:pools}
\centering\small
\begin{tabular}{rcccc}
\toprule
$K$ & answers & interval & pools cheaper & labels\\
\midrule
64 & 0.74 & [0.67, 0.88] & 175/185 & 0.73\\
256 & 0.66 & [0.59, 0.73] & 175/185 & 0.68\\
1024 & 0.53 & [0.47, 0.58] & 173/185 & 0.61\\
\bottomrule
\end{tabular}
\end{table*}

\begin{figure*}[t]
\centering\includegraphics[width=\textwidth]{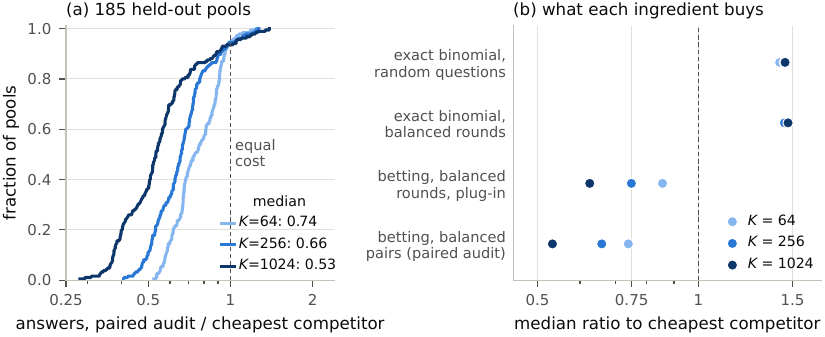}
\caption{(a) Distribution of the ratio of generated answers, paired audit to the cheapest competing certified audit, over the 185 held-out pools. Left of the dashed line the paired audit is cheaper than every competitor on that pool. (b) Median ratio for the designs of the ablation in Section~\ref{sec:ablation}; every design uses the same retirement of resolved budgets.}
\label{fig:heldout}
\end{figure*}

Table~\ref{tab:pools} and Figure~\ref{fig:heldout}a give the result. The paired audit is cheaper on at least 173 of 185 pools at every horizon and on all eight answer sets, and the saving grows with $K$. It also needs fewer question visits (median ratios 0.84, 0.72 and 0.67) and fewer labels. Across all 217 stored pools, including the 32 seen during development, it missed none of its 3,255 runs against the exact curves. The competitors missed 17 times in their 13,020 runs, all within their $\delta$. On the 32 development pools the median ratios are 0.52, 0.43 and 0.36.

\subsection{Where the saving comes from}\label{sec:ablation}

Figure~\ref{fig:heldout}b takes the audit apart. Balance alone buys nothing: nested exact-binomial looks cost the same on balanced rounds as on random questions (ratio of the two 1.01, 1.00 and 1.03), because an exact-binomial interval cannot see a smaller variance. A stratified betting sequence with per-question plug-in centering \citep{betting} captures much of the saving, and the paired inequality takes the rest. The paired audit beats that sequence, and a version of it with the paired audit's bet and cap, on every one of the 185 pools at every horizon; against the better of the two its median saving is 9\%, 11\% and 17\%.

\subsection{When the audit loses, and why the saving grows}\label{sec:losses}

\begin{figure*}[t]
\centering\includegraphics[width=\textwidth]{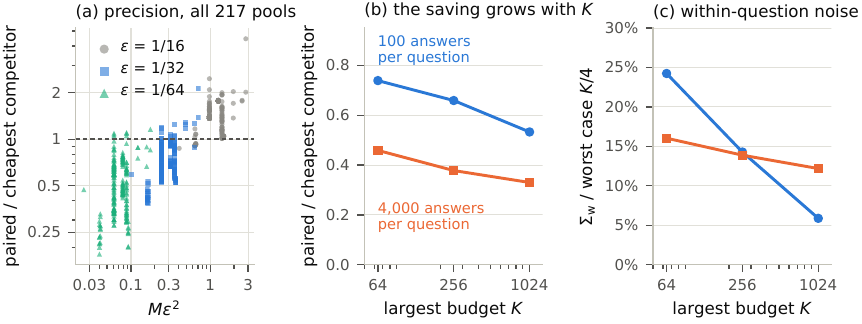}
\caption{(a) Cost ratio at $K=64$ for every stored pool and three precisions, against $M\eps^2$; the start-up of two complete rounds decides the ratio at the right. (b) Median ratio on the held-out pools, whose questions have 100 stored answers, and on a pool with 4,000 answers per question built from the MMLU-Pro reference stream. (c) $\sumw$ as a fraction of its worst case $K/4$ in the same two settings (held-out: median).}
\label{fig:regimes}
\end{figure*}

The ten held-out pools where the paired audit loses at $K=64$ are those with little variance to remove. Their median within-question share of variance at budget 64 is 0.45, against 0.24 for the rest, and across all 185 pools the Spearman correlation between that share and the cost ratio is 0.82.

Precision matters in the way the cost law says (Figure~\ref{fig:regimes}a). We reran the frozen audit at $K=64$ with $\eps=1/16$ and $\eps=1/64$, after recording the prediction that the start-up of two complete rounds would dominate at coarse precision and fade at fine precision. At $\eps=1/16$ two complete rounds already exceed what a fixed design needs, and the paired audit loses on every held-out pool (median ratio 1.40). At $\eps=1/64$ it wins on 179 of 185 (median 0.51).

Why does the saving grow with $K$? Part of the answer is the pools. A stored question has only 100 answers, so at budget $k$ its top-scored answer appears with probability $1-0.99^k$: 0.47 at $k=64$ and 0.92 at 256. Large budgets then redraw a small set, and $\sumw$ falls from 24\% of its worst case at $K=64$ to 6\% at $K=1024$ (Figure~\ref{fig:regimes}c). To separate this effect from the audit, we built a pool from the first 4,000 reference answers per question of the MMLU-Pro study below, where the top answer appears with probability 0.016, 0.062 and 0.226 at the three horizons and $\sumw$ stays between 12\% and 16\% of its worst case. The paired audit still wins by more at larger $K$. Its ratio to the cheapest competitor is 0.46, 0.38 and 0.33 (Figure~\ref{fig:regimes}b), with no miss in 75 runs.

\subsection{The cost law}\label{sec:costlaw}

\begin{figure*}[t]
\centering\includegraphics[width=\textwidth]{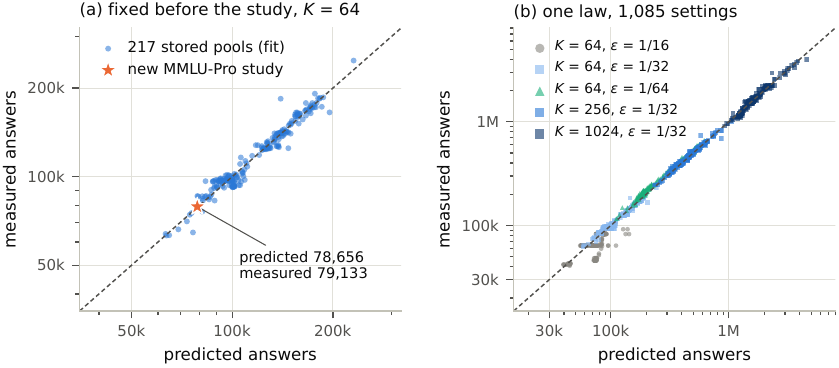}
\caption{The cost law. (a) $N\approx2.16\,KM+15.13\,K/\eps+12.88\,\sumw/\eps^2$, fitted to two paired audits per stored pool at $K=64$ and $\eps=1/32$ ($R^2=0.958$), against measured cost. The star is the new MMLU-Pro study: the coefficients were fixed before any of its answers existed, and its $\sumw$ comes from the separate reference stream. (b) One law $aKM+L(bK/\eps+c\,\sumw/\eps^2)$ for all horizons and precisions ($R^2=0.995$).}
\label{fig:costlaw}
\end{figure*}

The audit's cost follows the three terms of Theorem~\ref{thm:paired}. Across 1,085 pool, horizon and precision settings, a nonnegative least-squares fit of $aKM+L(bK/\eps+c\,\sumw/\eps^2)$ with $L=\log(2K/\alpha_S)$ has $R^2=0.995$ and median relative error 4.7\% (Figure~\ref{fig:costlaw}b). Replacing $\sumw$ by $K\max_k\vbar_k$, the worst-case envelope of the same variance, lowers $R^2$ to 0.906 and raises the median error to 18.1\%, so the tail sum is the right quantity. Fitted at $K=64$ alone, the law predicts the settings with $K=256$, $K=1024$ and $\eps=1/64$ with median errors 2.3\%, 5.2\% and 7.0\%, and those with $\eps=1/16$, where the start-up dominates, with 11.3\%.

The step that matters most is prospective (Figure~\ref{fig:costlaw}a). An earlier fit, on two audits per pool at $K=64$, gave $N\approx2.16\,KM+15.13\,K/\eps+12.88\,\sumw/\eps^2$ ($R^2=0.958$, median error 2.7\%), and these coefficients went into the analysis of the MMLU-Pro study before any of its answers was generated. Given the study's $\sumw=2.57$, computed from its separate reference stream, the law predicted 78,656 answers; the audit used 79,133. Resampling the 100 question-level reference curves 2,000 times puts the prediction's 5--95\% range at 69,096 to 89,152 answers.

\subsection{A newly generated study}\label{sec:luna}

We fixed the protocol first: 100 MMLU-Pro questions \citep{mmlupro2024} in mathematics, physics, chemistry and engineering; the \texttt{gpt-6-luna} model at temperature 1 with eight answers per request; the mean token log-probability of an answer as its verifier score; three independent audit streams and a separate reference stream. The model produced 716,384 answers for \$17.24, 409,448 of them for the reference. Every audit reads its stream in generation order, so a run costs exactly what it would have cost with answers generated on demand. Table~\ref{tab:luna} gives the cost of each certified curve and Figure~\ref{fig:mmlupro} the bands of the first stream.

\begin{table*}[t]
\caption{MMLU-Pro study, $M=100$, $K=64$, $\eps=1/32$, $\delta=0.05$: means over three audit streams. Grader calls reuse one grade per distinct question--answer pair; dollars are the generation cost at the price we paid per answer. The designs below the rule certify best-of-$k$ with fixed sample sizes.}\label{tab:luna}
\centering\small
\begin{tabular}{llrrrr}
\toprule
curve & audit & answers & labels & grader calls & cost (\$)\\
\midrule
best-of-$k$ & paired & 79,133 & 6,160 & 256 & 1.91\\
best-of-$k$ & all-subsets & 78,933 & 55,590 & 385 & 1.91\\
pass@$k$ & paired & 60,467 & 5,943 & 285 & 1.46\\
pass@$k$ & all-subsets & 49,067 & 49,067 & 385 & 1.18\\
majority voting & paired & 62,267 & 58,999 & 356 & 1.50\\
majority voting & all-subsets & 49,067 & 49,067 & 385 & 1.18\\
\midrule
best-of-$k$ & fixed exact binomial & 192,000 & & & 4.61\\
best-of-$k$ & record design \citep{fitas2026} & 257,216 & & & 6.18\\
best-of-$k$ & fixed Hoeffding & 262,400 & & & 6.31\\
\bottomrule
\end{tabular}
\end{table*}

\begin{figure*}[t]
\centering\includegraphics[width=\textwidth]{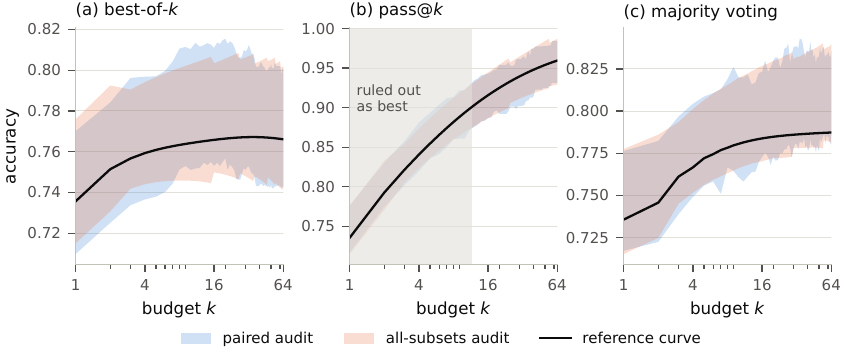}
\caption{Certified curves for 100 MMLU-Pro questions (first audit stream): simultaneous bands of half-width at most $1/32$ at level 95\% from the paired audit and the all-subsets audit, with reference curves estimated from 409,448 separately generated answers. Shaded in (b): budgets that the pass@$k$ bands rule out as a best budget.}
\label{fig:mmlupro}
\end{figure*}

The best-of-$k$ curve costs 79,133 answers, 41\% of the fixed exact-binomial design, and \$1.91 against \$4.61. Pass@$k$ and majority voting are cheaper still with the all-subsets audit, and because the grader is deterministic, fewer than 400 distinct grader calls certify each curve. All 18 bands contain the reference curves at every budget. The closest band edge is a majority-voting band at budget 55, $5.3\times10^{-4}$ from the reference value, below the reference's own bootstrap standard error of $7.9\times10^{-4}$ there.

\paragraph{What the bands say.} The three curves tell different stories. Pass@$k$ climbs from 0.736 to 0.960, and in every stream its bands rule out every budget below 12 as a best budget (Proposition~\ref{prop:decisions}(i)). Best-of-$k$, with the model's own log-probability as the verifier, gains only three points, from 0.736 to 0.767 on the reference curve, and majority voting reaches 0.787. Most of what sampling could buy on these questions is out of the verifier's reach, as \citet{stroebl2024} found for imperfect verifiers, and the bands certify it. In every stream, whichever audit certified each curve, the pass@$k$ band at budget 64 lies at least 0.125 above the best-of-$k$ band, so by Proposition~\ref{prop:decisions}(iv) pass@64 exceeds best-of-64 accuracy by more than 12 points with probability at least 90\%.

\floatbarrier
\section{Related work}\label{sec:related}

\paragraph{Test-time scaling.} Repeated sampling with a selection rule is a standard way to trade computation for accuracy. Pass@$k$ was introduced for code generation with an unbiased estimator \citep{codex}, large-scale sampling and filtering produced competition-level programs \citep{alphacode}, best-of-$n$ selection with a reward model has an unbiased estimator of its own \citep{webgpt,gao}, and majority voting over sampled reasoning paths is self-consistency \citep{selfconsistency}. \citet{largelanguagemonkeys} report how coverage grows with the number of samples, and \citet{weaver2025} combine weak verifiers to close part of the gap between generation and verification. \citet{stroebl2024} show that verifier errors bound what resampling can gain, which is why the budget choice matters. \citet{kazdan2025} predict pass@$k$ at large $k$ from a beta-binomial model of per-question success rates and give harder questions more samples, and \citet{mars2026} stop parallel reasoning traces once the majority vote is unlikely to change. These works report or predict curves; they do not certify them.

\paragraph{Statistics of model evaluation.} \citet{miller2024} splits the variance of benchmark accuracy by the law of total variance and recommends several answers per question. Its target is a population of unseen questions, which keeps the between-question term that a fixed list removes. Prediction-powered inference \citep{angelopoulos2023}, active testing \citep{kossen2021} and anytime-valid evaluation with e-processes \citep{celeus2026} save labels when estimating a single score; bands for tuning curves \citep{lourie2024} cover the best score among trials, not the correctness of a verifier's choice. Here the object is a whole curve produced by paths of generated answers, and generation, not labeling, is the main cost.

\paragraph{Sequential inference and sampling design.} Our bands are confidence sequences \citep{howard2021,betting} built from the penalized exponential inequality of \citet{fan2015} and Ville's inequality \citep{ville1939}. Estimating a variance from squared differences of pairs is classical \citep{wangramdas2025}; the paired inequality is its exact form for Bernoulli draws. Balanced rounds are stratified sampling with questions as strata, and the allocation behind $\Gamma$ is Neyman allocation for a least favorable mixture of budgets \citep{cochran1977}, learned from a pilot as in adaptive stratified sampling \citep{carpentier2015}. The upper bound of Theorem~\ref{thm:frontier} is multilevel Monte Carlo \citep{giles2015,rheeglynn2015} with a coupling of nested winners. The all-subsets estimates are U-statistics \citep{hoeffding1948}, and the label counts rest on the theory of records \citep{renyi}. The lower bounds use the change-of-measure argument for adaptive experiments \citep{kaufmann2016} and the guess-and-verify argument against instance optimality of \citet{narayanan2024}.

\paragraph{The closest work.} \citet{fitas2026} studies the same curve for a population of questions. That paper proves that auditing all widths up to $N$ needs labels in proportion to $1+\log N$ and generated answers in proportion to $N$, gives a known-percentile design and a record design with simultaneous bands, and asks whether simultaneous confidence needs an extra logarithm. Our label lower bounds reuse its disjoint score scales. For fresh questions and fixed-width bands under deterministic caps, Proposition~\ref{prop:joint} shows that simultaneity costs generated answers no extra logarithm, and for one question and mean squared error, Theorem~\ref{thm:oracle} gives the leading constants and the exact price of unknown percentiles. The fixed-list results of Sections~\ref{sec:theory}--\ref{sec:audit} have no counterpart there.

\section{Discussion}\label{sec:discussion}

A certified scaling curve on a fixed benchmark costs three things: calibration of the score tail, the identity of the questions, and within-question noise summed along the curve. The last term is where a random audit overpays, because most of the variance it pays for lies between questions, and on measured benchmarks the within-question sum is a fraction of its worst case. Revisiting questions in balanced pairs and retiring budgets as they resolve turns that into a practical audit, and a cost law with three terms tells in advance what it will spend.

Theorem~\ref{thm:instance} marks where the next saving lies. At a single benchmark the within-question term shrinks to $\Gamma$, about a tenth of $\sumw$ on our data, and most of that saving comes from not spending answers on questions the model almost always gets right or almost always gets wrong. An audit that learns this allocation reaches $\Gamma$ as the precision grows. At the precisions of Section~\ref{sec:exp} its pilot costs more than the paired audit's whole bill, and Proposition~\ref{prop:priced} rules out a free version, so the open problem is an audit that learns the allocation as it goes, without a separate pilot, and pays for learning only what it uses.

The results have a definite scope. The fixed-list target is the listed questions; for a population the between-question variance returns, and a random panel (Section~\ref{sec:panel}) is the way back. The paired audit pays a start-up of two complete rounds, which is why it loses at coarse precision on a large benchmark, and the cost law says in advance when that happens. The price of unknown percentiles is a mean-squared-error statement for one question. Dependent trajectories are covered for one fixed generation policy.

In practice the recommendation is short. Report a simultaneous band, not a curve; choose budgets from the band with Proposition~\ref{prop:decisions}; revisit every question in pairs of rounds; and use the cost law with a rough value of $\sumw$ to budget the audit before generating anything. On our 100-question MMLU-Pro study with 64 budgets, the best-of-$k$ band cost \$1.91 of generation.

\paragraph{Data.} The stored pools come from public releases: the Weaver collection on Hugging Face \citep{weaver2025}, released under the MIT license according to its dataset cards; the execution results released with the CodeRM repository \citep{coderm}; and the CodeContests test split \citep{alphacode}, whose DeepMind-provided non-code materials are CC BY 4.0 (third-party materials may have separate terms). MMLU-Pro \citep{mmlupro2024} is released under the MIT license. The MMLU-Pro answers were generated for this study with the protocol of Appendix~\ref{app:luna}.

\vspace{1.6ex plus .5ex minus .2ex}\noindent\begin{minipage}{\columnwidth}
\paragraph{Use of AI tools.} Generative AI tools assisted with code implementation, experimental design and analysis, literature review, figure preparation, and drafting and editing this manuscript. The authors are responsible for the claims, proofs, results, references, and final text.
\end{minipage}

\onecolumn
\appendix
\pdfbookmark[0]{Appendix}{appendix}
\begin{center}{\LARGE\bfseries Appendix}\end{center}
\vspace{1ex}
\noindent The appendix gives the proofs that do not appear in the main text (Appendices~\ref{app:proofs}--\ref{app:oracle}) and the details of the experiments (Appendix~\ref{app:details}).

\section{Proofs: conventions}\label{app:proofs}

Throughout, a valid audit returns intervals of width at most $2\eps$ that cover every $\theta_k$ at once with probability at least $1-\delta$ at every law. Two laws $P,Q$ with $|\theta_k(P)-\theta_k(Q)|>2\eps$ for some $k$ are separated by every valid audit: the event that its interval at budget $k$ contains $\theta_k(P)$ has probability at least $1-\delta$ under $P$ and at most $\delta$ under $Q$, so this binary test forces the Kullback--Leibler divergence between the two laws of the whole transcript to be at least $\kappa_\delta=\kl(1-\delta,\delta)$ \citep{kaufmann2016}. The audit's own choices carry no information about the law, so the chain rule charges each observation its conditional divergence. When $Q$ changes only the answer laws $P_x$, the transcript divergence is therefore at most $\sum_xC(x)\KL(P_x\|Q_x)$, where $C(x)$ is the expected number of answers generated at $x$ under $P$; revealing every correctness value for free can only increase it. We refer to this as testing.

\section{Proof of Proposition~\ref{prop:joint}}\label{app:joint}

In this appendix every path starts at a fresh question $X$ drawn independently from a population, and the target is $\theta_k=\E_Xp_k(X)$. The midpoint of any valid width-$2\eps$ interval estimates $\theta_k$ to within $\eps$, so a valid audit yields estimates with uniform error at most $\eps$ with probability $1-\delta$. Caps are deterministic, and an audit that reaches a cap without an answer has failed.

\subsection{A hard family}
Work with uniform score percentiles and the coordinate $t=-\log u$, in which the winner of $k$ answers has density $ke^{-kt}$. Put $J=1+\lfloor\log_{16}K\rfloor$, $k_j=16^j$ and $I_j=[1/k_j,2/k_j]$; these intervals are disjoint. For signs $v\in\{-1,1\}^J$ and $a=8\eps\le1/4$, let correctness at score coordinate $t$ be $\Ber(1/2+av_j)$ when $t\in I_j$ and $\Ber(1/2)$ otherwise. At the budgets $k_i$,
\begin{equation}\label{eq:signmatrix}
\theta(v)=\tfrac12\mathbf 1+aAv,\qquad A_{ij}=e^{-k_i/k_j}-e^{-2k_i/k_j}.
\end{equation}
The diagonal entries are $e^{-1}-e^{-2}>0.2325$. Using $e^{-z}-e^{-2z}\le z$, the entries for finer bands sum to at most $\sum_{d\ge1}16^{-d}=1/15$ in each row, and the entries for coarser bands to at most $e^{-16}/(1-e^{-16})$. Every row is therefore diagonally dominant by more than $0.16$, and $\lVert A^{-1}\rVert_\infty\le6.25$ \citep{varah}. An estimate of the curve with error at most $\eps$ recovers every sign by rounding $a^{-1}A^{-1}(\widehat\theta-\mathbf 1/2)$, because the coordinate error is at most $6.25\,\eps/a<1$.

\subsection{Correctness queries}
Give the signs a uniform prior and let the auditor choose, adaptively, a percentile at which to see a fresh correctness label, up to $m$ times; this is at least as informative as querying labels of generated answers. Action probabilities given the past carry no information about the signs, and a label in band $I_j$ depends only on $v_j$, so the posterior factorizes over coordinates. Let $L_j$ be the posterior log-odds of $v_j$, $e_j=1/(1+e^{|L_j|})\ge\frac12e^{-|L_j|}$ and $p(T)=\prod_j(1-e_j)$, the largest posterior probability of a sign vector. A valid audit decodes all signs with Bayes probability at least $1-\delta$, so $\E p(T)\ge1-\delta$, and by Markov's inequality $p(T)\ge1-2\delta$ with probability at least $1/2$. On that event $\sum_je_j\le-\log p(T)\le4\delta$ and $\sum_je^{-|L_j|}\le8\delta$, and Jensen's inequality gives
\begin{equation}\label{eq:oddsneed}
\E\sum_j|L_j|\ge\frac J2\log\frac J{8\delta}.
\end{equation}
Let $N_j$ count the labels seen in band $j$, $\ell=\log\{(1/2+a)/(1/2-a)\}\le8a$ and $d=2a\ell\le16a^2$. Under the true signs, $v_jL_j=dN_j+M_j$ with $M_j$ a martingale and $\E M_j^2\le\ell^2\E N_j$, and $\sum_jN_j\le m$. Hence
\begin{equation}\label{eq:oddssupply}
\E\sum_j|L_j|\le16a^2m+8a\sum_j\sqrt{\E N_j}\le16a^2m+8a\sqrt{Jm}.
\end{equation}
With $b=\log\{J/(8\delta)\}\ge\log2$ and $z=a^2m/J$, the two displays give $16z+8\sqrt z\ge b/2$, which fails when $z<b/1024$. Therefore
\[
m\ge\frac{J}{65536\,\eps^2}\log\frac J{8\delta}.
\]
The argument allows adaptive allocation and adaptive stopping under a deterministic label cap.

\subsection{Question visits}
Now let questions differ. Under signs $v$, question $i$ carries independent bits $X_{ij}\sim\Ber(1/2+av_j)$, $j<J$; an answer whose score coordinate lies in $I_j$ has correctness $X_{ij}$, and every other answer an independent fair bit. Answer pairs stay independent and identically distributed given the question, and averaging over questions gives exactly \eqref{eq:signmatrix}. Reveal every bit of every visited question for free. The data then split over coordinates, and each coordinate is a test between $\Ber(1/2+a)$ and $\Ber(1/2-a)$ from $n$ draws. By the Bretagnolle--Huber inequality \citep{tsybakov2009} and $\kl\{\Ber(1/2+a),\Ber(1/2-a)\}=2a\ell\le16a^2$, the Bayes error of one coordinate is at least $e_n=\frac14e^{-16a^2n}$. Decoding all coordinates succeeds with probability at most $(1-e_n)^J$; requiring at least $1-\delta$ gives $e_n\le1-(1-\delta)^{1/J}\le2\delta/J$, hence
\[
n\ge\frac1{1024\,\eps^2}\log\frac J{8\delta}.
\]
An actual audit sees no more per question, so the bound holds for it.

\subsection{Generated answers}
Two laws that differ only for score coordinates in the band $B=[1/K,2/K]$, where correctness is $\Ber(1/2\pm a)$, have best-of-$K$ targets that differ by $2a\int_{1/K}^{2/K}Ke^{-Kt}dt=2a(e^{-1}-e^{-2})>2\eps$ when $a=8\eps$. A generated answer lands in $B$ with probability at most $1/K$, and its label then has divergence at most $16a^2$. Reveal every label for free. With identical questions, the next answer has the same law whatever the audit does, so over a deterministic cap of $N$ answers, padded with uninformative draws after stopping, the transcript divergence is at most $16Na^2/K$, and the Bretagnolle--Huber inequality gives $N\ge cK\eps^{-2}\log(1/\delta)$. This bound uses one hard bit. A union over the $J$ bands does not raise it, because a single stream of answers visits all bands at once, and we claim no extra $\log J$ for generated answers.

\subsection{The record audit}\label{app:records}
\paragraph{Records.} Order the $K$ answers of a path by generation time and let $R_k$ be the rank of answer $k$ among the first $k$. The map from orderings to $(R_1,\ldots,R_K)$ is a bijection onto $\prod_k\{1,\ldots,k\}$, so when scores do not tie the ranks are independent and uniform, and the record indicators $\ind\{R_k=k\}$ are independent $\Ber(1/k)$, whatever the question \citep{renyi,stepanov}. With ties, the strict records are a subset of the records of the order broken by independent uniform keys, which have this law, so every bound below on the number of queries still holds. Querying the first answer and every later record, and carrying the last queried label forward, gives $W_k$ for every $k$. For $n$ paths the average $\widehat\theta_k$ of $W_k$ is unbiased with variance at most $1/(4n)$, and the number of queries $Q$ has mean $nH_K$ and variance at most $nH_K$; by Bernstein's inequality $Q\le nH_K+\sqrt{2nH_Kx}+2x/3$ with probability $1-e^{-x}$.

\paragraph{Uniform accuracy.} For $k\le\ell$ the winners at $k$ and $\ell$ coincide when the best of the first $\ell$ answers lies among the first $k$, which has probability at least $k/\ell$; so $\E(W_k-W_\ell)^2\le1-k/\ell\le\log(\ell/k)$. Cut $[0,\log K]$ into pieces of length at most $r^2$ and group the budgets whose logarithms fall in the same piece. For a group, the pointwise minimum and maximum of its $W_k$ bracket every member, and $\E(\max-\min)^2\le r^2$: if the earliest and the latest budget of the group select the same answer, every budget between them selects it too, so the bracket is nonzero only if the winner changes inside the group, which has probability at most $1-k_{\min}/k_{\max}\le r^2$. So the class $\{W_k:k\le K\}$ has $L_2$ bracketing number at most $2+\lceil\log K/r^2\rceil$. The bracketing maximal inequality \citep[Theorem~2.14.2]{vdvw} with envelope $1$ gives $\E\sup_k|\widehat\theta_k-\theta_k|\le C\sqrt{\log(2+\log K)/n}$, since
$\int_0^1\sqrt{\log(2+\log K/r^2)}\,dr\le\sqrt{\log(2+\log K)}+\int_0^1\sqrt{2\log(1/r)}\,dr$. Changing one path moves the supremum by at most $1/n$, so McDiarmid's inequality adds $\sqrt{\log(1/\delta)/(2n)}$ with probability $1-\delta$.

\paragraph{Nested paths.} Let $D=\lceil\log_2K\rceil$, $b_0=1$, $b_j=\min(2^j,K)$, and
\[
\delta_j=\frac{\delta}{4(D+1)}+\delta2^{j-D-2},\qquad n_j=\bigl\lceil C\eps^{-2}\log(2/\delta_j)\bigr\rceil,\qquad 0\le j\le D.
\]
The first $n_j$ paths are extended to length $b_j$. The budgets in $(b_{j-1},b_j]$ span a logarithmic range of at most $\log2$, so their bracketing integral is a constant, and the argument above bounds their uniform error by $\eps$ except with probability $\delta_j$ for a suitable $C$. The blocks share paths, but a union bound needs no independence, and $\sum_j\delta_j=\delta(3/4-2^{-D-2})<3\delta/4$. Because the $\delta_j$ increase, the $n_j$ decrease, and the number of paths is $n_0\le1+C\eps^{-2}\log\{8(D+1)/\delta\}$. With $d_0=1$ and $d_j=b_j-b_{j-1}$, the number of answers is $N_0=\sum_jd_jn_j$. Since $\log(2/\delta_j)\le\log(8/\delta)+(D-j)\log2$ and $\sum_jd_j(D-j)=K-1$ when $K=2^D$ (and less than $2K$ otherwise, the last block having coefficient zero),
\[
N_0\le K+C\eps^{-2}\{K\log(8/\delta)+2K\log2\}\le K+CK\eps^{-2}\log(32/\delta),
\]
with no hidden $\log\log K$. Path lengths are fixed in advance, so record indicators stay independent; the expected number of queries is $\mu=n_0+\sum_{j\ge1}n_j(H_{b_j}-H_{b_{j-1}})\le n_0H_K$. The audit refuses any query beyond $m_0=\lceil\mu+\sqrt{2\mu\log(4/\delta)}+\frac23\log(4/\delta)\rceil$, which Bernstein's inequality exceeds with probability at most $\delta/4$; without overflow the capped and uncapped audits agree. The total failure probability is below $\delta$, and the three caps are $n_0=O\{\eps^{-2}\log(J/\delta)\}$, $m_0=O(n_0H_K)=O\{J\eps^{-2}\log(J/\delta)\}$ and $N_0=O\{K\eps^{-2}\log(1/\delta)\}$. At $K=1$ everything reduces to estimating a binomial mean.

\section{Proof of Theorem~\ref{thm:frontier}}\label{app:frontier}

\subsection{The upper bound}\label{app:upper}
Let $K\ge2$, $J=\lceil\log_2K\rceil$, $a_\ell=2^{\ell-1}$ and $b_\ell=\min(2^\ell,K)$ for $\ell=1,\ldots,J$. The audit spends $\delta/2$ on an estimate of $\theta_1$ to half-width $\eps/2$ and $\delta/2$ on corrections for $\theta_k-\theta_1$ to total half-width $\eps/2$.

\paragraph{The first budget.} Two valid procedures run side by side, each with half of the error budget, and the audit stops with whichever reaches half-width $\eps/2$ first. The first visits the questions in independent random orders, one answer per question, and uses a fixed-look Chernoff--Kullback--Leibler interval. At any fixed number $n=aM+b$ of answers, the moment generating function of the success count is at most that of $\Bin(n,\theta_1)$: for the $a$ complete orders by the arithmetic--geometric mean inequality applied to $\prod_x\{1+p_1(x)(e^\lambda-1)\}$, and for the $b$ answers of an incomplete random order by Maclaurin's inequality for elementary symmetric polynomials. Chernoff intervals are therefore valid before the first complete round, and a fixed Hoeffding horizon caps this procedure at $O(\eps^{-2}\log(1/\delta))$ answers. Domination of moment generating functions alone does not give exact binomial tails, which is why this procedure uses Chernoff intervals. The second procedure runs the paired sequence of Section~\ref{sec:audit} on complete pairs of rounds with a geometric grid of bets. With $v_1=\vbar_1$ and $Q$ logarithmic in $1/(\eps\delta)$, a grid bet between $\eps/\{8(v_1+\eps)\}$ and $\eps/\{4(v_1+\eps)\}$ reaches half-width $\eps/2$ within $2M+Q(\eps^{-1}+v_1\eps^{-2})$ rows on a high-probability event: over $n$ rows the disagreement count $D$ has mean $nv_1$ and $\Prb\{D>2nv_1+2u\}\le e^{-u}$, and substituting into the radius with $u=\log(1/\eta)$ gives the count. The deterministic cap bounds the contribution of the complementary event by $\eta\,O(\eps^{-2}\log(1/\delta))$; take $\eta=\eps$. The cost of the first budget is at most twice the smaller of the two, $\widetilde O(M_\eps+1/\eps+\vbar_1/\eps^2)$ answers and correctness queries, without an additive $M$ when $M\gg\eps^{-2}$.

\paragraph{Corrections.} An observation at level $\ell$ draws a question uniformly at random and a fresh path of length $b_\ell$, and records, for every budget $k$,
\[
D_{\ell,k}=\begin{cases}0,&k\le a_\ell,\\ W_k-W_{a_\ell},&a_\ell<k\le b_\ell,\\ W_{b_\ell}-W_{a_\ell},&k>b_\ell.\end{cases}
\]
For $k\in(a_L,b_L]$ the levels $\ell<L$ contribute $W_{2^\ell}-W_{2^{\ell-1}}$ and level $L$ contributes $W_k-W_{a_L}$, so $\sum_\ell\E D_{\ell,k}=\theta_k-\theta_1$. Since $b_\ell\le2a_\ell$, Lemma~\ref{lem:coupling} averaged over questions gives $\E D_{\ell,k}^2\le\vbar_{a_\ell}$ for every $k$. Each level runs its own confidence sequence for all of its coordinates and stops when every coordinate has half-width at most $e=\eps/(2J)$; a union over levels, coordinates and sides spends $\delta/2$. Adding the level intervals to the first-budget interval gives half-width at most $\eps$ at every budget, powers of two or not.

One explicit sequence uses the bets $\lambda_h=2^{-h-2}$ for $h=0,\ldots,H$, $H=\lceil\log_2\{(1+e)/e\}\rceil$. For $Z\in[-1,1]$ and $0\le\lambda<1$, $\E e^{\lambda(Z-\E Z)-\psi(\lambda)Z^2}\le1$, because $e^{\lambda z-\psi(\lambda)z^2}\le1+\lambda z$ on $[-1,1]$ and $e^{-\lambda\E Z}(1+\lambda\E Z)\le1$. Ville's inequality then gives, for each bet and each side, an interval valid at all times, and a union over the grid adds $\log(H+1)$ to the logarithmic factor $L$. The radius after $T$ observations with bet $\lambda$ is $\{\psi(\lambda)\sum_{i\le T}Z_i^2+L\}/(\lambda T)$. Write $m_2=\E Z^2\le\vbar_{a_\ell}$. Every interval $[x,2x]$ with $e/\{8(1+e)\}\le x\le1/8$ contains a grid bet, so the grid holds a $\lambda\in[e/\{8(m_2+e)\},e/\{4(m_2+e)\}]$; also $\psi(\lambda)\le\lambda^2$ for $\lambda\le1/2$. On the event $\sum_{i\le T}Z_i^2\le2Tm_2+2L$, which Bernstein's inequality gives with probability at least $1-e^{-L}$ at any fixed $T$, the radius is at most
\[
2\lambda m_2+\frac{2\lambda^2L+L}{\lambda T}\le\frac e2+\frac{L}{2T}+\frac{8L(m_2+e)}{eT},
\]
which is at most $e$ once $T\ge34\,L(m_2/e^2+1/e)$. Outside that event the smallest bet gives a deterministic cap of $O(L/e^2)$ observations, and choosing the failure probability of order $e$ makes its contribution $O(L/e)$. Hence
\[
\E T_\ell=O\bigl\{L\bigl(J^2\vbar_{a_\ell}/\eps^2+J/\eps\bigr)\bigr\}.
\]
An observation at level $\ell$ costs $b_\ell$ answers and at most $H_{b_\ell}$ record queries in expectation.

\paragraph{Summing the levels.} For $\ell\ge3$ the $a_\ell/2$ indices $j\in(a_\ell/2,a_\ell]$ satisfy $\max_{k\ge j}\vbar_k\ge\vbar_{a_\ell}$; for $\ell=1,2$ the single index $j=a_\ell$ does. These index blocks are disjoint, so $\sum_\ell\max(1,a_\ell/2)\vbar_{a_\ell}\le\sumw$, and $b_\ell\le4\max(1,a_\ell/2)$ gives
\[
\sum_\ell b_\ell\vbar_{a_\ell}\le4\sumw,\qquad \sum_\ell b_\ell<3K .
\]
The corrections therefore use $\widetilde O(K/\eps+s/\eps^2)$ answers. For queries, each $\vbar_{a_\ell}$ is at most $a_*=\min(v,s)$, and the same blocks give $\vbar_{a_\ell}\le4s/2^\ell$. Summing the smaller bound, at most $2+\log_2(s/a_*)$ levels contribute $a_*$ each and the rest form a geometric series below $2a_*$, so
\[
\sum_\ell\vbar_{a_\ell}\le a_*\{4+\log_2(s/a_*)\}.
\]
The corrections thus use at most a polylogarithmic factor times $H_K/\eps+a_*\{1+\log(s/a_*)\}/\eps^2$ queries, and the first budget adds $\widetilde O(M_\eps+1/\eps+a_*/\eps^2)$. The audit is never told $v$ or $s$, and it attains both upper bounds at once.

\subsection{Lower bounds on generated answers}\label{app:candlower}
The alternatives below may lie outside $\mathcal C(M,K,v,s)$, which is allowed because a valid audit must cover every law. Each bound is proved by testing, as described in Appendix~\ref{app:proofs}.

\emph{Calibration, at every law.} Let $P$ be any law, $a=\max(\theta_K,1-\theta_K)\ge1/2$, and $y=0$ if $\theta_K\ge1/2$ and $y=1$ otherwise. If the scores of every $P_x$ are bounded above, the alternative $Q_x=(1-\beta)P_x+\beta\,\delta_{(s_x^+,y)}$ adds an answer with correctness $y$ and a score $s_x^+$ above the support of $P_x$. Then $p_K^Q(x)=(1-\beta)^Kp_K(x)+\{1-(1-\beta)^K\}y$, so the target moves by $\{1-(1-\beta)^K\}a$, which exceeds $2\eps$ once $K\{-\log(1-\beta)\}>-\log(1-2\eps/a)$. Since $dP_x/dQ_x\le1/(1-\beta)$, one generated answer has divergence at most $-\log(1-\beta)$, and questions and the audit's choices are otherwise unchanged, so testing gives $\E_PN\ge\kappa_\delta/\{-\log(1-\beta)\}$. Letting $\beta$ decrease to the threshold,
\[
\E_PN\ge\frac{\kappa_\delta K}{-\log(1-2\eps/a)}\ge\frac{\kappa_\delta(1-4\eps)K}{4\eps}\ge\frac{7\kappa_\delta K}{32\eps},
\]
by $-\log(1-z)\le z/(1-z)$ and $a\ge1/2$. If the scores are not bounded above, place the new answer at a score $c_x$ that is not an atom of $P_x$ and has $P_x(S>c_x)\le\gamma/K$. The new answer then wins whenever it is among the $K$ draws, except on an event of probability at most $\gamma$, the target moves by at least $\{1-(1-\beta)^K\}a-\gamma$, the divergence is still at most $-\log(1-\beta)$, and letting $\gamma\to0$ gives the same bound. The bound holds at every law, and in particular at every law of $\mathcal C(M,K,v,s)$.

\emph{Within-question noise.} Take identical questions and two separated score bands. An answer lands in the upper band with probability $q$ and is then correct; otherwise it is incorrect. Choose $q$ so that $p_K=t\le1/2$ and $Kt(1-t)=s$. Then $\vbar_k=p_k(1-p_k)$ increases in $k$, so $\sumw=s$ and $\max_k\vbar_k=s/K\le v$. If $s\ge8K\eps$, then $t\ge8\eps$; lowering $q$ so that $p_K$ falls by $3\eps$ keeps $q$ within a constant factor, and the band indicator of one answer has divergence $O\{\eps^2/(Kt)\}$. Testing gives $\E N=\Omega_\delta(Kt/\eps^2)=\Omega_\delta(s/\eps^2)$. If $s<8K\eps$, then $s/\eps^2<8K/\eps$ and the calibration bound already covers it.

\emph{Telling questions apart.} Give each question a deterministic correctness bit, with continuous scores independent of the bits. Let $\eps\le1/128$ and $21\le M\le\pi/(256\eps^2)$, and draw the bits independently and uniformly. If at least $M/2$ bits are unrevealed, the conditional probability that any interval of width $2\eps$ contains $\theta_1$ is at most
\[
(2\eps M+1)\max_j\Prb\{\Bin(u,1/2)=j\}\le4\eps\sqrt{M/\pi}+2/\sqrt{\pi M}\le\tfrac12 .
\]
Averaging the coverage requirement over the prior, a valid audit reveals more than half the bits with probability at least $1-2\delta\ge1/2$, so $\E N\ge M/4$. For $M<21$ we have $2\eps M<1$, a single unrevealed bit keeps coverage at most $1/2$, and $\E N\ge M/2$. For $M>m=\lfloor\pi/(1024\eps^2)\rfloor$, split $rm$ of the questions into $m$ known groups of $r=\lfloor M/m\rfloor$, give each group one unknown bit, and make the remaining questions incorrect. The groups carry mass $w=rm/M\ge1/2$, so a width-$2\eps$ interval for $\theta_1$ gives a width-$4\eps$ interval for the average of the $m$ bits, and $m\le\pi/\{256(2\eps)^2\}$; the same argument at precision $2\eps$ gives $\E N\ge m/4=\Omega(\eps^{-2})$. One generated answer reveals at most one group's bit, and scores reveal none. When $1/128<\eps\le1/32$, $M_\eps\le128/\eps$ and the calibration bound covers this term. Every revealed bit also needs a correctness query.

The largest of the three bounds is at least a third of their sum, which proves the lower half of \eqref{eq:frontN}.

\subsection{Lower bounds on correctness queries}\label{app:label}
\emph{Every continuous-score law.} Let $P$ be any law with continuous conditional score distributions $F_x$. For $j=0,\ldots,\lfloor\log_{16}K\rfloor$ put $k_j=16^j$ and
\[
B_j(x)=\bigl\{s:(4k_j)^{-1}\le-\log F_x(s)<4/k_j\bigr\}.
\]
These bands are disjoint for each $x$, and the winner at budget $k_j$ falls in $B_j(x)$ with probability $c_0=e^{-1/4}-e^{-4}>3/4$. Averaged over questions, the winner mass of $B_j$ splits into a part carrying label $1$ and a part carrying label $0$, and one of them, say the part $q_y$ with label $y$, is at least $c_0/2$. The alternative $Q_j$ changes only labels inside the bands $B_j(x)$: independently, each answer there with label $y$ has its label replaced by $1-y$ with probability $\alpha_j=3\eps/q_y$. The winner at budget $k_j$ then changes label with probability exactly $\alpha_jq_y=3\eps$, so $\theta_{k_j}$ moves by $3\eps$, and $\alpha_j\le6\eps/c_0<8\eps$. Question and score observations are unchanged, a queried label in $B_j$ has divergence at most $-\log(1-\alpha_j)\le32\eps/3$, and every other label has divergence zero. If $m_j$ counts queries in $B_j$, testing gives
\[
\E_Pm_j\ge\frac{3\,\kl(1-\delta,\delta)}{32\,\eps},
\]
and summing over the disjoint bands gives $\E_Pm=\Omega_\delta(H_K/\eps)$ at the same, arbitrary, law $P$. The bound counts queries to answer occurrences; if a known deterministic grader lets one grade serve every repeat of an answer, distinct grader calls are a different resource.

\emph{Telling questions apart.} The deterministic bits above also force $\Omega_\delta(M_\eps)$ queries.

\emph{The multiscale term.} Let $a_*>0$, $r=a_*/4$, $F=s/(s+r)$ and $h=-\log F$. At every question let the score percentile be uniform, with correctness $\Ber(1-r)$ below $F$ and certain above it. Then $p_k=1-rF^k$, $\vbar_k=rF^k(1-rF^k)\le v$, and $\sumw\le\sum_{k\ge1}rF^k=rF/(1-F)=s$. Let $J_*$ be the number of integers $j\ge0$ with $k_j=16^j\le\min\{K,1/(4h)\}$. Since $a_*/(5s)\le h\le a_*/(4s)$ and $s/a_*\le K$, $J_*$ lies between constant multiples of $1+\log(s/a_*)$. For each such $j$ the band $\{u:(4k_j)^{-1}\le-\log u<4/k_j\}$ lies below $F$, the bands are disjoint, and each holds the winner at budget $k_j$ with probability $c_0$. The alternative lowers the success probability inside band $j$ from $1-r$ to $1-r-\Delta$ with $\Delta=3\eps/c_0<4\eps$, which moves the target by $3\eps$. The alternative success probability is at least $13/16$ and its failure probability at least $r$, so one queried label in the band has divergence at most $256\eps^2/(13r)$, and testing gives $\E_Pm_j\ge13\,\kl(1-\delta,\delta)\,r/(256\eps^2)$. Summing over the $J_*$ bands,
\[
\E_Pm\ge\frac{13\,\kl(1-\delta,\delta)}{1024}\,\frac{a_*J_*}{\eps^2},
\]
which is the logarithmic structure in \eqref{eq:frontm}. This lower bound does not identify the logarithmic factors that the confidence union adds to the upper bound, which is why Theorem~\ref{thm:frontier}(iii) holds up to polylogarithmic factors.

The same law forces generated answers. Let $k_{\rm last}$ be the largest selected $k_j$ and $p_{\rm last}=e^{-1/(4k_{\rm last})}-e^{-4/k_{\rm last}}$ the probability that one answer's percentile lies in its band. Every generated answer lands there with probability $p_{\rm last}$, independently of the past, and a query in the band needs an answer there, so Wald's identity gives $p_{\rm last}\E_PN\ge\E_Pm_{\rm last}\ge13\kappa_\delta r/(256\eps^2)$. Since $p_{\rm last}\le15/(4k_{\rm last})$ and $k_{\rm last}>\min\{K,1/(4h)\}/16\ge s/(16a_*)$,
\[
\E_PN\ge\frac{13\,\kappa_\delta\,s}{61440\,\eps^2}
\]
at the same law $P$. One law in the class thus forces both the label term and the $s/\eps^2$ answer term.

\emph{The first budget alone.} Let $r=\min(2v,s,1/4)$. At every question let the score percentile $U$ be uniform, with correctness 1 when $U\ge1/2$ and $\Ber(1-r)$ otherwise. The budget-$k$ winner lies in the lower half with probability $2^{-k}$, so $1-p_k=2^{-k}r$, $\vbar_k=2^{-k}r(1-2^{-k}r)$ decreases in $k$, $\max_k\vbar_k=\vbar_1\le r/2\le v$ and $\sumw<r\le s$; the law is in $\mathcal C(M,K,v,s)$. The alternative lowers the success probability in the lower half to $1-r-4\eps(1+\gamma)$ for a small $\gamma>0$, which moves $\theta_1$ by $2\eps(1+\gamma)>2\eps$ and leaves scores and upper-half labels unchanged. Its success probability is at least $0.62$ and its failure probability at least $r$, so a lower-half label has divergence at most $16\eps^2(1+\gamma)^2/(0.62\,r)\le26\eps^2(1+\gamma)^2/r$. Testing and $\gamma\to0$ give
\[
\E_Pm\ge\frac{\kappa_\delta\,r}{26\,\eps^2}\ge\frac{\kappa_\delta\,a_*}{26\,\eps^2},
\]
since $r\ge\min(v,s)$ when $v\le1/4$. This law moves only $\theta_1$. With label calibration at $K=1$ and the deterministic bits, certifying $\theta_1$ alone already takes $\Omega_\delta(1/\eps+M_\eps+a_*/\eps^2)$ queries over the class, and by (iii) the whole curve costs at most polylogarithmic factors more.

\emph{Noisy labels.} If in addition $\eta\le\Prb(Y=1\mid x,S)\le1-\eta$ almost surely for some $\eta\in(0,1/2]$, and $\eps\le\eta c_0/6$, raise the success probability by $\Delta=3\eps/c_0\le\eta/2$ inside one band $B_j(x)$ at a time. This moves $\theta_{k_j}$ by exactly $3\eps$, keeps the success probability in $[\eta,1-\eta/2]$, and costs at most $\Delta^2/(\eta/4)=36\eps^2/(\eta c_0^2)$ per queried label in that band. Summing over bands, every valid audit has $\E_Pm\ge(1+\lfloor\log_{16}K\rfloor)\kappa_\delta\eta c_0^2/(36\eps^2)$ at every continuous-score law with this margin.

\section{Proof of Theorem~\ref{thm:instance}}\label{app:instance}

Throughout, the list $\{1,\ldots,M\}$ and its equal weights are known, $P_x$ is the law of one answer $Z=(S,Y)$ at question $x$, and a valid audit covers every $\theta_k$ at once with probability at least $1-\delta$ at every law. For $k$ answers $z_1,\ldots,z_k$, $\widetilde W_k(z_1,\ldots,z_k)$ is the average correctness of the answers that attain the highest score. Given the scores, the tied maxima share one conditional success probability, so the first maximizer and a tie-averaged one have the same expected correctness and $\E\{\widetilde W_k(Z_1,\ldots,Z_k)\mid x\}=p_k(x)$. Write
\[
g_{k,x}(z)=k\bigl[\E\{\widetilde W_k(z,Z_2,\ldots,Z_k)\mid x\}-p_k(x)\bigr],\qquad v_k(x)=\E\{g_{k,x}(Z)^2\mid x\},
\]
so that $\E g_{k,x}(Z)=0$ and $|g_{k,x}|\le k$. With $w_x=Ma_x$, the definition \eqref{eq:gamma} of $\Gamma$ reads $\Gamma=\inf_w\allowbreak\max_kM^{-1}\sum_xv_k(x)/w_x$ over $w\ge0$ with $M^{-1}\sum_xw_x=1$, using $0/0=0$. Lower bounds are proved by testing (Appendix~\ref{app:proofs}).

\subsection{Part (i)}
Fix a valid audit with $\bar N=\E_PN<\infty$; otherwise there is nothing to prove. Let $C(x)$ be the expected number of answers at $x$ and $w_x=MC(x)/\bar N$, so that $M^{-1}\sum_xw_x=1$. For each $k$ put $B_k=M^{-1}\sum_xv_k(x)/(1+w_x)$ and let $k$ maximize it, with $B=B_k$. Since $(1+w)/2$ is an admissible allocation, $B\ge\Gamma/2$.

Suppose first that $\Gamma\ge16\eps K$, so $B\ge8\eps K$. Put $h_x=g_{k,x}/(1+w_x)$, $t=4\eps/B$ and $Q_x(dz)=\{1+th_x(z)\}P_x(dz)$. Since $|th_x|\le tk\le1/2$ and $\E h_x=0$, each $Q_x$ is a law. Expanding $\prod_{i\le k}\{1+th_x(Z_i)\}$ over subsets of $\{1,\ldots,k\}$,
\[
p_k^Q(x)-p_k(x)=t\sum_{i\le k}\E\{\widetilde W_kh_x(Z_i)\}+\E(\widetilde W_kG),\qquad G=\sum_{|A|\ge2}t^{|A|}\prod_{i\in A}h_x(Z_i).
\]
By symmetry the linear term is $tk\,\E\{\E(\widetilde W_k\mid Z_1)h_x(Z_1)\}=t\,\E(g_{k,x}h_x)=tv_k(x)/(1+w_x)$. Products over distinct subsets are orthogonal, so with $r_x=\E h_x^2=v_k(x)/(1+w_x)^2$ we get $\E G^2=(1+t^2r_x)^k-1-kt^2r_x$, and since $\E G=0$ and $|\widetilde W_k-1/2|\le1/2$,
\[
|\E(\widetilde W_kG)|\le\tfrac12\bigl(e^{z_x}-1-z_x\bigr)^{1/2}\le\frac{z_xe^{z_x/2}}{2\sqrt2}\le\frac{z_x}2,\qquad z_x=kt^2r_x\le\frac{(tk)^2}4\le\frac1{16},
\]
using $v_k(x)\le k/4$ from part (iii). Averaging over questions and using $r_x\le v_k(x)/(1+w_x)$,
\[
\theta_k(Q)-\theta_k(P)\ge tB-\frac{kt^2}2B=tB\Bigl(1-\frac{kt}2\Bigr)\ge3\eps .
\]
Because $-\log(1+y)\le-y+y^2$ for $y\ge-1/2$, $\KL(P_x\|Q_x)\le t^2r_x$, and since $w_xr_x\le v_k(x)/(1+w_x)$ the chain rule gives a transcript divergence at most $\bar Nt^2M^{-1}\sum_xw_xr_x\le\bar Nt^2B$. Separation forces this to be at least $\kappa_\delta$, so $\bar N\ge\kappa_\delta B/(16\eps^2)\ge\kappa_\delta\Gamma/(32\eps^2)$. As $K/\eps+\Gamma/\eps^2\le17\Gamma/(16\eps^2)$ in this case, the claim follows. If instead $\Gamma<16\eps K$, then $K/\eps+\Gamma/\eps^2<17K/\eps$, and the every-law calibration bound $\bar N\ge7\kappa_\delta K/(32\eps)$ of Appendix~\ref{app:candlower} is larger than $(\kappa_\delta/128)\cdot17K/\eps$.

For the limit, fix $\eta,\zeta>0$, put $B_k=M^{-1}\sum_xv_k(x)/(\eta+w_x)$ and $B=\max_kB_k\ge\Gamma/(1+\eta)$, because $(\eta+w)/(1+\eta)$ is admissible, and use $h_x=g_{k,x}/(\eta+w_x)$ and $t=2(1+\zeta)\eps/B$. Then $|h_x|\le K/\eta$, $r_x\le K/(4\eta^2)$, $M^{-1}\sum_xr_x\le B/\eta$ and $M^{-1}\sum_xw_xr_x\le B$, and $tK/\eta\le2(1+\zeta)(1+\eta)\eps K/(\eta\Gamma)$ tends to zero uniformly over audits (if $\Gamma=0$ there is nothing to prove). The remainder above, averaged over questions, is at most $\{Kt^2B/(2\sqrt2\eta)\}\exp\{K^2t^2/(8\eta^2)\}=o(tB)$, so the target moves by $2(1+\zeta)\eps\{1-o(1)\}>2\eps$ for small $\eps$. With $-\log(1+y)+y\le y^2/\{2(1-|y|)\}$, $\KL(P_x\|Q_x)\le t^2r_x/\{2(1-tK/\eta)\}$, and the chain rule gives
\[
\bar N\ge\frac{2\kappa_\delta(1-tK/\eta)}{t^2B}\ge\frac{\kappa_\delta\Gamma\{1-o(1)\}}{2(1+\eta)(1+\zeta)^2\eps^2}.
\]
Taking the lower limit and then letting $\eta,\zeta\to0$ proves the claim.

\subsection{Part (ii)}
\emph{Blocks.} A block at question $x$ is $b\ge K$ fresh answers with all correctness values queried. For each $k$, $U_k$ is the average of $\widetilde W_k$ over all $k$-subsets of the block. After sorting the block by score it has a closed form: a tied group of $d$ answers with $c$ answers strictly below contributes its average correctness times $\{\binom{c+d}k-\binom ck\}/\binom bk$. Then $\E U_k=p_k(x)$ and $U_k\in[0,1]$. Write $\sigma_{k,x}^2=\Var(U_k)$. With the orthogonal decomposition $\widetilde W_k-p_k(x)=\sum_{\emptyset\ne A}\phi_{|A|}(Z_A)$ of \citet{hoeffding1948} and $\eta_j=\E\phi_j^2$, one has $\Var(\widetilde W_k)=\sum_{j\le k}\binom kj\eta_j\le1/4$ and $\Var(U_k)=\sum_{j\le k}\binom kj^2\eta_j/\binom bj$. The term $j=1$ gives $b\,\sigma_{k,x}^2\ge k^2\eta_1=v_k(x)$, and for $j\ge2$, $b\binom kj/\binom bj=k\prod_{i=1}^{j-1}(k-i)/(b-i)\le k(k-1)/(b-1)$. Hence
\[
v_k(x)\le b\,\sigma_{k,x}^2\le v_k(x)+\frac{k(k-1)}{4(b-1)} .
\]

\emph{Pilot.} Draw $m$ pairs of blocks at every question. Each pair gives $(U_k-U_k')^2/2\in[0,1/2]$ with mean $\sigma_{k,x}^2$. With $r=\{\log(2MK/\alpha)/(8m)\}^{1/2}$, let $u_{k,x}$ be the minimum of $1/4$ and the average of these values plus $r$. By Hoeffding's inequality and a union bound, on an event $E$ of probability at least $1-\alpha$, $\sigma_{k,x}^2\le u_{k,x}\le\sigma_{k,x}^2+2r$ for all $k$ and $x$. Put $U_{k,x}=bu_{k,x}$; on $E$, $v_k(x)\le U_{k,x}\le v_k(x)+\eta_b$ with $\eta_b=K(K-1)/\{4(b-1)\}+2br$.

\emph{Allocation.} For a nonnegative array $V$ let $\Gamma(V)=\inf_a\max_kM^{-2}\sum_xV_{k,x}/a_x$ over the simplex. For $\gamma\in(0,1)$, combining an optimal $a$ for $V$ with weight $\gamma$ and the uniform allocation with weight $1-\gamma$ and using $(\sqrt V+\sqrt\eta)^2/(\gamma a+(1-\gamma)/M)\le V/(\gamma a)+\eta M/(1-\gamma)$ gives $\Gamma(V+\eta)\le\Gamma(V)/\gamma+\eta/(1-\gamma)$, and optimizing over $\gamma$,
\[
\Gamma(V+\eta)\le\bigl\{\Gamma(V)^{1/2}+\eta^{1/2}\bigr\}^2 .
\]
Let $a^*$ be a pilot-measurable allocation with $G=\max_kM^{-2}\sum_xU_{k,x}/a^*_x\le\Gamma(U)+\tau$, and use $a=(1-\rho)a^*+\rho/M$, so $a_x\ge(1-\rho)a^*_x$ and $a_x\ge\rho/M$. On $E$, $G\le\{\Gamma^{1/2}+\eta_b^{1/2}\}^2+\tau$, because $\Gamma(\cdot)$ is monotone. On every pilot outcome $G\le b/4+\tau$, by comparison with the uniform allocation.

\emph{Fresh blocks and the band.} Given the pilot, choose $\bar N$ below and draw $n_x=\lceil\bar Na_x/b\rceil$ fresh blocks at each question. The estimate $\widehat\theta_k=M^{-1}\sum_x\bar U_{k,x}$, with $\bar U_{k,x}$ the average of the $n_x$ fresh blocks, is a sum of independent terms, each within $1/(Mn_x)\le b/(\rho\bar N)$ of its mean, with total variance at most $V_{\rm up}=\max_kM^{-2}\sum_xu_{k,x}/n_x$ on $E$, and $V_{\rm up}\le G/\{(1-\rho)\bar N\}$ on every pilot outcome. Bernstein's inequality and a union over the $2K$ sides give radius $(2V_{\rm up}\ell)^{1/2}+b\ell/(3\rho\bar N)$ with $\ell=\log\{2K/(\delta-\alpha)\}$ and conditional failure probability at most $\delta-\alpha$ on $E$, so the audit is valid at every law. The choice
\[
\bar N=\Bigl\lceil\Bigl\{\frac{A^{1/2}+(A+4\eps B_0)^{1/2}}{2\eps}\Bigr\}^2\Bigr\rceil,\qquad A=\frac{2G\ell}{1-\rho},\quad B_0=\frac{b\ell}{3\rho},
\]
makes the radius at most $\eps$, and the audit generates at most $2mMb+\bar N+Mb$ answers.

\emph{Schedule.} Take $b=\max(2,K,\lceil\eps^{-1/4}\rceil)$, $m=\lceil\eps^{-3/4}\rceil$, $\rho=\min(1/2,\eps^{1/8})$, $\alpha=\min(\delta/2,\eps)$ and $\tau=\eps$; none depends on the law. As $\eps\to0$: $\eta_b\to0$, the pilot $2mMb=O(M/\eps)$ and $Mb$ are $o(\eps^{-2})$, $\eps B_0\to0$ and $\ell\to\log(2K/\delta)$. On $E$, $\eps^2\bar N\le A\{1+o(1)\}+\eps^2$ and $A\le2\{\Gamma+o(1)\}\log(2K/\delta)$. Off $E$, which has probability at most $\eps$, $G\le b/4+\tau$ gives $\eps^2\bar N=O(b)$, a contribution $O(\eps b)=o(1)$. Hence $\limsup\eps^2\E N\le2\Gamma\log(2K/\delta)$. The allocation $a^*$ is the Neyman allocation for the least favorable mixture of budgets \citep{cochran1977}; learning it from a pilot is the adaptive stratified sampling problem of \citet{carpentier2015}, and the audit charges every pilot answer and label.

\subsection{Part (iii)}
In the decomposition above, $\Var(\widetilde W_k\mid x)\ge k\eta_1=v_k(x)/k$, and $\Var(\widetilde W_k\mid x)\le p_k(x)\{1-p_k(x)\}$ because $\widetilde W_k\in[0,1]$ has mean $p_k(x)$. So $v_k(x)\le kp_k(x)\{1-p_k(x)\}\le k/4$. The uniform allocation gives $\Gamma\le\max_kM^{-1}\sum_xv_k(x)\le\max_kk\vbar_k$, and for each $k$, $k\vbar_k=\sum_{j\le k}\vbar_k\le\sum_{j\le k}\max_{i\ge j}\vbar_i\le\sumw$.

\subsection{Computing \texorpdfstring{$\Gamma$}{Gamma}}\label{app:gamma}
For a question whose answers fall in score groups $1,\ldots,G$ in increasing order, with masses $\mu_g$, success fractions $\eta_g$ and cumulative masses $F_g$ ($F_0=0$), an answer in group $g$ with correctness $y$ has
\[
k\,\E\{\widetilde W_k(z,Z_2,\ldots,Z_k)\mid x\}=k\Bigl\{\eta_gF_g^{k-1}+\sum_{h>g}\eta_h\bigl(F_h^{k-1}-F_{h-1}^{k-1}\bigr)\Bigr\}+(y-\eta_g)\frac{F_g^k-F_{g-1}^k}{\mu_g},
\]
so $v_k(x)$ is a finite sum. For every $\lambda$ in the simplex of budgets, Cauchy--Schwarz gives the lower bound $\Gamma\ge D(\lambda)=\{M^{-1}\sum_x(\sum_k\lambda_kv_k(x))^{1/2}\}^2$, attained by the allocation proportional to the square roots when $\lambda$ is optimal, and any allocation gives an upper bound. We maximize $D$ by conditional-gradient steps and report the best lower and upper values. On the law of the first 4,000 reference answers per question of the MMLU-Pro study, at $K=64$, the bracket is $[0.27961,0.27961]$, against $\sumw=2.566$ on the same law; uniform allocation with all-subsets reuse gives $1.126$, and $\max_kk\vbar_k=2.314$. On the 185 held-out pools at $K=64$ the median $\Gamma$ is $0.34$, the median $\sumw$ is $3.88$, and $\sumw/\Gamma$ has median 12.0 and interquartile range 8.4 to 21.9. On these pools the median ratio of $\max_kk\vbar_k$ to the uniform-allocation value is 2.3, and the median ratio of that value to $\Gamma$ is 3.7.

\section{Proof of the cost bound of Theorem~\ref{thm:paired}}\label{app:pairedcost}
We carry out the argument for the cap $\lambda_{\max}=0.95$ used in the stored-pool study and then explain the change for other caps. Fix a budget $k$ and write $z_k=\vbar_k+\eps$, $L=\log(2K/\alpha_S)$ and $t=\log\{4K\max(1,R_F/2)/\eps\}$, where $R_F$ is the final round. The variance estimate before pair $j$ is half the average disagreement rate of the earlier pairs, with one pseudo-pair of variance $1/4$ and a floor of at most $0.01\,\eps$. By Bernstein's inequality and a union bound over pairs and budgets, outside an event of probability $\eps/2$ the unfloored estimate at pair $j$ is within
\[
\sqrt{\frac{\vbar_kt}{(j-1)M}}+\frac{t}{3(j-1)M}+\frac{1}{4(j-1)M}
\]
of $\vbar_k$. With $A=100{,}000$, this deviation is below $0.01\,z_k$ once $At/z_k$ question visits have been made, and the floor does not change that. Call later pairs \emph{good}. On a good pair,
\[
\lambda_{j}\ge0.94\,\eps/z_k,\qquad \psi(\lambda_{j})\,\vbar_k\le0.57\,\eps\lambda_{j}.
\]
The first follows from the bet rule. For the second, if $\vbar_k\ge\eps/10$ the estimate is at least $0.89\,\vbar_k$, the bet satisfies $\lambda/(1-\lambda)\le\eps/\hat v$, and $\psi(\lambda)\le\lambda^2/\{2(1-\lambda)\}$ gives $\psi(\lambda)\vbar_k\le\lambda\eps\vbar_k/(2\hat v)\le0.57\,\eps\lambda$; if $\vbar_k<\eps/10$, use $\psi(\lambda)/\lambda\le\psi(0.95)/0.95<2.154$. A cap on the bet can only help both comparisons.

Let $P_J=\sum_{j\le J}\psi(\lambda_{j})D_j$, $b=\psi(0.95)<2.05$ and $u=1/(4b)$. The disagreement indicators are independent given the past, with means $2p_k(x)\{1-p_k(x)\}$, and $e^q-1\le1.2\,q$ for $0\le q\le1/4$, so
\[
\exp\Bigl\{u\Bigl[P_J-1.2\sum_{j\le J}2M\psi(\lambda_{j})\vbar_k\Bigr]\Bigr\}
\]
is a nonnegative supermartingale. Ville's inequality, with a union over budgets, bounds $P_J$ for all $J$ by its compensator plus $4bt$, outside an event of probability at most $\eps/2$. The first pair adds at most $6M\eps^2$ to the compensator and all other bad pairs at most $2Abt$. With the good-pair bounds,
\[
P_J\le0.684\,\eps\,\mathsf{mass}_J+7.2\,M\eps^2+500{,}000\,t,\qquad \mathsf{mass}_J=2M\sum_{j\le J}\lambda_{j}.
\]
At most $2M+2At/z_k$ rows lie in bad pairs. If $n=2MJ\ge4M+4At/z_k$, then $\mathsf{mass}_J\ge0.47\,n\eps/z_k$, and substituting into the radius $(P_J+L)/\mathsf{mass}_J$ shows it is at most $\eps$ as soon as $n\ge20M+4{,}000{,}000\,z_k(t+L)/\eps^2$. Rounding up to a whole pair adds fewer than $2M$ rows. So, with probability at least $1-\eps$, simultaneously over budgets,
\[
n_k\le22M+4{,}000{,}000\,(t+L)\frac{\vbar_k+\eps}{\eps^2}.
\]
The constant is loose; it is a device of the proof and does not enter the intervals. An answer at position $j$ is generated only while some budget $k\ge j$ is active, so $N\le\sum_{j\le K}\max_{k\ge j}n_k$, and $\sum_j\max_{k\ge j}(\vbar_k+\eps)\le\sumw+K\eps$. On the complementary event the round cap gives $N\le KR_FM$, and inverting Hoeffding's inequality gives $R_FM=O\{M+\eps^{-2}\log(K/\delta)\}$. Taking expectations proves the bound on $\E N$ in \eqref{eq:pairedcost}. Each row has a length fixed before it is generated, and a row of length $\ell$ has $H_\ell\le H_K$ score records in expectation \citep{renyi}; Wald's identity gives $\E m\le H_K\E n$.

\paragraph{Other caps.} For $\lambda_{\max}\le0.95$ every comparison above holds as stated, except that the bet on a good pair is at least $\min\{\lambda_{\max},0.94\,\eps/z_k\}\ge\lambda_{\max}\cdot0.94\,\eps/z_k$, so the constant in the bound on $n_k$ grows by a factor at most $1/\lambda_{\max}$. For $0.95<\lambda_{\max}<1$, replace the threshold $\eps/10$ by a constant multiple of $\eps$ that depends on $\psi(\lambda_{\max})/\lambda_{\max}$, and enlarge $A$ accordingly. In every case the order in \eqref{eq:pairedcost} is unchanged.

\section{Proof of Theorem~\ref{thm:oracle}}\label{app:oracle}

Write $U$ for the percentile of an answer (with an independent uniform tie-breaker), $w_k(u)=ku^{k-1}$ and $f(u)=\Prb(Y=1\mid U=u)$, so that $\theta_k=\int_0^1w_k(u)f(u)\,du$. Let $L=\log K$. All $o(1)$ terms hold along arbitrary joint sequences with $T/L\to\infty$ and $C/K\to\infty$.

\paragraph{Labels: a lower bound for any adaptive design.} Strengthen the oracle further: it may request a fresh label at any percentile it chooses, and answers are free. In the coordinate $t=-\log u$, the winner of $k$ answers has density $ke^{-kt}$. Fix $A\ge2$ and a mesh $h>0$, let $d=\lfloor(L-2\log A)/h\rfloor$ and $t_j=(A/K)e^{jh}$ for $j=0,\ldots,d$, and use the cells $[t_{j-1},t_j)$ plus two outer cells. Put an independent smooth cosine-squared prior of radius $\rho$ around the Bernoulli mean $1/2$ in each cell. With $p_{kj}$ the winner probability of cell $j$ at budget $k$ and $q_j$ the prior-averaged expected label count in cell $j$ divided by $T$, the van Trees inequality \citep{gilllevit}, which holds for adaptive designs because action probabilities carry no parameter score and label score increments are martingale differences, gives
\[
R\ge\frac{1-4\rho^2}{4T}\max_{k\le K}\sum_j\frac{p_{kj}^2}{q_j+b},\qquad
b=\frac{\pi^2(1-4\rho^2)}{4T\rho^2},\qquad\sum_jq_j\le1 .
\]
Truncating a stopped transcript and passing to the limit in $L_2$ extends it to expected caps; it also covers biased estimators. Completing $q$ to a probability vector and normalizing $q_j+b$, with $R_0=d+2$ cells,
\[
R\ge\frac{1-4\rho^2}{4T(1+R_0b)}\inf_{q\in\Delta}\max_k\sum_j\frac{p_{kj}^2}{q_j}.
\]
For weights $\lambda_k\ge0$ summing to one, Cauchy--Schwarz bounds the infimum below by
\[
\Bigl[\sum_j\Bigl(\sum_k\lambda_kp_{kj}^2\Bigr)^{1/2}\Bigr]^2 .
\] Take $\lambda_k=1/(kH_K)$. On each internal cell, direct integration and
\[
\sum_{k=1}^Kke^{-2kt}=\frac{1-e^{-2Kt}\{1+K(1-e^{-2t})\}}{4\sinh^2t}
\]
give at least $c_A(1-e^{-h})/(2\sqrt{H_K})$ inside the sum, where $c_A=e^{-1/A}\sqrt{1-(1+2A)e^{-2A}}$. Therefore
\[
R\ge\frac{1-4\rho^2}{4T(1+R_0b)}\,\frac{d^2(1-e^{-h})^2c_A^2}{4H_K}.
\]
Choose $A=L$, $h=(L/T)^{1/3}$ and $\rho=\min\{1/8,(R_0/T)^{1/4}\}$. Then $R_0/T=O\{(L/T)^{2/3}\}$, so $h$, $\rho$ and $R_0b$ tend to zero, while $dh/L\to1$ and $H_K/L\to1$, and $R\ge(1-o(1))L/(16T)$ along the actual sequence, with no need to fix $K$ first.

\paragraph{Answers: lower bounds.} Give every label for free. In the oracle experiment take $f_a(u)=1/2+au^{K-1}$ under a shrinking smooth prior around $a=0$. The derivative of $\theta_K$ in $a$ is $K/(2K-1)$, and one answer carries Fisher information at most $4/\{(1-4\rho^2)(2K-1)\}$; with $\rho=\min\{1/8,(K/C)^{1/4}\}$ the van Trees inequality gives $R_{\rm oracle}\ge(1-o(1))K/(8C)$. In the unknown experiment, let an answer score uniformly on $[0,1]$ and be incorrect with probability $1-\lambda/K$, or score uniformly on $[2,3]$ and be correct otherwise. An answer then reveals one Bernoulli bit, and $\theta_K(\lambda)=1-(1-\lambda/K)^K$. A smooth prior around $\lambda=1/2$ with radius $\min\{1/4,(K/C)^{1/3}\}$ gives a squared derivative tending to $e^{-1}$ and information $(2+o(1))/K$ per answer, so $R_{\rm unknown}\ge(1-o(1))K/(2eC)$. Acquisition actions add no likelihood term under adaptive stopping, and both bounds also hold under deterministic caps.

\paragraph{The complete-pool estimator.} For $n\ge K$ answers, sort the scores and set
\[
p_{k,n}(r)=\binom{r-1}{k-1}\Big/\binom nk,\qquad U_{n,k}=\sum_{r=1}^np_{k,n}(r)\,Y_{(r)}.
\]
This is the complete U-statistic that averages the selected label over all subsets of size $k$, so it is unbiased for $\theta_k$, and it satisfies the variance bound \eqref{eq:ustatbound} uniformly over score--correctness laws. To see this, let $R$ be the overlap of two random subsets of size $k$, $Z=R/k$, and $\zeta_r$ the covariance of their selected labels when $r$ answers are shared. Conditioning on the shared maximum's rank and label, expanding the conditional winner mean and integrating by parts gives
\[
\zeta_r=\gamma\Bigl[\theta(1-\theta)-(1-\gamma)\int_0^1z^{-1-\gamma}G(z)\{z-G(z)\}\,dz\Bigr],\qquad\gamma=r/k,
\]
with $G(z)=\int_0^zf(v^{1/k})\,dv$, so that $0\le G'\le1$ and $G(1)=\theta$. The conditional mean used is $g_r(u,y)=u^{k-r}y+(k-r)\int_u^1v^{k-r-1}f(v)\,dv$, and the boundary terms vanish because $0\le G(z)\le z$. After complementing labels so that $\theta\le1/2$, admissibility gives $\max\{0,z-(1-\theta)\}\le G(z)\le\min\{z,\theta\}$. The product $G(z)\{z-G(z)\}$ is concave in $G$, so its minimum over this range is at an endpoint; with $s=1-\theta$, for $z>s$ the upper endpoint's value exceeds the lower one's by $\theta(z-\theta)-s(z-s)=(1-2\theta)(1-z)\ge0$, and for $z\le s$ the lower value is zero. The lower endpoint is therefore the minimizer, which is the threshold mechanism, and it gives $\zeta_r\le s^{2-r/k}-s^2$. With $x=-\log s\in[0,\log2]$ the overlap decomposition becomes
\[
\Var(U_{n,k})\le e^{-2x}\,\E(e^{xZ}-1)\le\frac1{2e}\E Z+\E Z^2,
\]
and the hypergeometric overlap has $\E Z=k/n$ and $\E Z^2\le(k/n)^2+1/n$, which proves \eqref{eq:ustatbound}. All overlap orders are kept; a first-projection approximation at fixed $k$ would not suffice as $K$ grows.

\paragraph{From percentiles to observed ranks.} Let $b_{k,n}(r)=(r/n)^k-\{(r-1)/n\}^k$ and $D_{K,n}=\{1-(K-1)/(2n)\}^{-1}$. For $k\ge2$ and $r\ge k$, $b_{k,n}(r+1)/b_{k,n}(r)\le\{r/(r-1)\}^{k-1}\le r/(r-k+1)=p_{k,n}(r+1)/p_{k,n}(r)$, so $p_{k,n}(r)/b_{k,n}(r)$ increases over its support (the case $k=1$ is immediate). Its value at $r=n$ is $(k/n)/\{1-(1-1/n)^k\}\le D_{K,n}$ by the quadratic lower bound on $1-(1-1/n)^k$, hence
\begin{equation}\label{eq:rankdom}
p_{k,n}(r)\le D_{K,n}\,b_{k,n}(r),\qquad1\le k\le K.
\end{equation}
For a density $q$ with $0<q\le n/t$, let $Q_r$ be its mass on rank bin $r$ and include each sorted answer independently with probability $a_r=tQ_r\le1$. Query the included labels and return
\[
\widehat\theta_k=\frac12+\sum_rp_{k,n}(r)\frac{I_r}{a_r}\Bigl(Y_{(r)}-\frac12\Bigr).
\]
The expected number of labels is $t$. Given the pool, the estimator has mean $U_{n,k}$ and variance $\frac14\sum_rp_{k,n}(r)^2(1/a_r-1)$. Cauchy--Schwarz within rank bins and \eqref{eq:rankdom} give, for any union $B$ of bins, $\sum_{r\in B}p_{k,n}(r)^2/Q_r\le D_{K,n}^2\int_Bw_k(u)^2/q(u)\,du$. On a tail where $q=n/t$ every bin is included and contributes no sampling variance, so before choosing $q$ the coordinate error satisfies
\[
R_k\le\frac k{2en}+\Bigl(\frac kn\Bigr)^2+\frac1n+\frac{D_{K,n}^2}{4t}\int_{\text{outside the saturated tail}}\frac{w_k(u)^2}{q(u)}\,du .
\]

\paragraph{An allocation and its risk.} Work first where $c=nL/(Kt)$ lies in a compact subset of $[1,4]$. Put $Y=\sqrt L$, $J_0=\lceil Y\rceil$, and a tail of width $h=\lceil nY/K\rceil/n$ aligned with the rank bins. Let $g(u)=\max_{j\le J_0}ju^{j-1}$, $C_0=\int g=O(1+\log J_0)$ and $E=g/C_0$. Use
\[
q(u)=\begin{cases}n/t,&u\ge1-h,\\(1-a)\bigl[(1-d)\{(1-u)\log(1/h)\}^{-1}+dE(u)/Z\bigr],&u<1-h,\end{cases}
\]
with $a=(n/t)h$, $d=8C_0/L$ and $Z=\int_0^{1-h}E$. Both components below the tail integrate to one there, so $q$ is a density. Also $C_0=1+\sum_{j<J_0}j^j/(j+1)^{j+1}=O(\log J_0)$, $Z\ge1-J_0h$, and
\[
\sup_{u<1-h}\frac{q(u)}{n/t}\le\frac{1-a}{n/t}\Bigl\{\frac{1-d}{h\log(1/h)}+\frac{dJ_0}{C_0(1-J_0h)}\Bigr\}\longrightarrow0,
\]
so $q\le n/t$ once $K$ is large. With $A_0=\log(1/h)/\{(1-a)(1-d)\}$, integrating the logarithmic component bounds the second moment below the tail by
\[
A_0\frac k{2(2k-1)}(1-h)^{2k-1}\{1+(2k-1)h\}.
\]
For $k\le J_0$ the envelope gives the sharper $L/\{8(1-a)\}$. For $J_0<k\le\lfloor K/\sqrt Y\rfloor$ the display is at most $(1+o(1))L/4$, and for larger $k$ it is $o(L)$ uniformly because $(2k-1)h\to\infty$. The complete-pool variance is negligible against $L/t$ in the middle range and has leading term at most $K/(2en)$ in the upper range. With \eqref{eq:ustatbound} the three ranges give
\[
\sup_P\max_k\operatorname{MSE}(\widehat\theta_k)\le\max\Bigl\{\frac L{16t},\frac K{2en}\Bigr\}+o(L/t).
\]
In the oracle experiment, include each answer independently with probability $q(U_i)/(n/t)$ and use $\widehat\theta_k=\frac12+\frac1t\sum_{i\le n}A_iw_k(U_i)(Y_i-\frac12)/q(U_i)$. Its expected label count is $t$, its variance is $\frac1{4t}\int w_k^2/q-(\theta_k-1/2)^2/n$, and its second moment on the saturated tail is at most $k^2/\{(n/t)(2k-1)\}$. The same three ranges give
\[
\sup_P\max_k\operatorname{MSE}(\widehat\theta_k^{\rm oracle})\le\max\Bigl\{\frac L{16t},\frac K{8n}\Bigr\}+o(L/t).
\]
Both estimators use ordinary answers, and the rank estimator uses only their observed order. The bounds are uniform for $c$ in the compact range.

For general $C$ and $T$, take $n=\min\{\lfloor C\rfloor,\lfloor4KT/L\rfloor\}$ and $t=\min\{T,nL/K\}$. Then $n/K\to\infty$, $t/L\to\infty$ and $c\in[1,4+o(1)]$; below this range the answer term dominates and above it the label term does, so the upper and lower bounds match along oscillating as well as convergent ratios of resources.

\paragraph{Deterministic label caps.} To impose a cap $B=\lfloor T\rfloor$ on every run, recompute either inclusion design with expected count $(1-\xi)B$, where $\xi=\sqrt{24\log B/B}$, draw the whole inclusion mask before requesting any label, and return $1/2$ if the mask exceeds $B$; otherwise use the clipped estimator. Coupling gives $\operatorname{MSE}_{\rm capped}\le\operatorname{MSE}_{\rm uncapped,clipped}+\Prb(\text{overflow})/4$, and Bernstein's inequality bounds the overflow probability by $B^{-8}$, which is $o(L/T)$. Answer counts are deterministic already. The leading constants are therefore the same under both kinds of cap. The oracle has at least the information of the unknown experiment, so $R_{\rm unknown}\ge R_{\rm oracle}$, with equality to first order exactly in the label-limited region.

\section{Experimental details}\label{app:details}

\subsection{Stored score pools}\label{app:pools}

Every stored pool is a finite list of questions with a finite set of scored and graded answers per question. An audit draws answers uniformly with replacement from a question's set, so the pool defines an exact answer law whose curve we compute in closed form, ties included.
\begin{itemize}[leftmargin=*]
\item \emph{Weaver} \citep{weaver2025}, released under the MIT license according to its dataset cards. Eight answer sets: GPQA, MATH500, MMLU and MMLU-Pro questions answered by Llama-3.1-8B-Instruct and Llama-3.1-70B-Instruct, 100 answers per question, each set scored by 19 to 27 reward models and verifiers. The questions of each benchmark are split into two halves once, by question, and the same split is used for every scorer. The 185 held-out pools use the evaluation half (250 to 360 questions). Seven further pools use Weaver's combined score on the full question sets (500 to 719 questions).
\item \emph{CodeRM} \citep{coderm}: the execution results released with the CodeRM code repository. Twenty-four pools: HumanEval+ and MBPP+ \citep{evalplus} and LiveCodeBench tasks, solutions from Llama-3-8B, Llama-3-70B, GPT-3.5-turbo and GPT-4o-mini, and unit tests generated by CodeRM-8B or Llama-3.1-70B. The score is the number of 100 generated unit tests a solution passes, the correctness bit is the benchmark's hidden tests, and each task has 100 solutions (164 to 378 tasks per pool).
\item \emph{CodeContests} \citep{alphacode}: DeepMind-provided non-code materials are CC BY 4.0; third-party materials may have separate terms. One pool of 104 test tasks with 16 human-written Python solutions each, graded by the dataset, and scored by negative length in bytes, so the shortest program is selected.
\end{itemize}
Only the Weaver evaluation halves are held out. The other 32 pools come from sources seen during development and are reported separately.

\subsection{Protocol}

The paired audit's settings were chosen among 26 configurations of stratified betting audits (3 paired, 23 per-row) on the Weaver development halves, by the lowest mean ratio of generated answers to the per-pool cheapest competitor at $K=256$ and 1024, and then frozen: $\alpha_S=0.95\,\delta$, bet cap $\lambda_{\max}=0.95$, one pseudo-pair, the bet $\eps/(\eps+\hat v)$, and $0.05\,\delta$ for the final exact-binomial look. Evaluation ran five independent audits per pool and horizon at $\eps=1/32$ and $\delta=0.05$. During development, provisional evaluation runs of an earlier configuration were launched and one completed; its output was set aside unread until the final method had been fixed and evaluated. The per-pool minimum over three competing designs is an optimistic denominator. The rank-based audit's calibration size of 1,200 paths was chosen after its outcomes on the Weaver pools were known, the nested design uses the fixed look ratio 1.25, and the exact-binomial designs have no tuning parameter.

\subsection{Competing certified audits}\label{app:comparators}

All competitors meet the same requirement: half-width $\eps$ at every budget with simultaneous coverage $1-\delta$ for every answer law. The four designs in the first three items draw questions uniformly at random.
\begin{itemize}[leftmargin=*]
\item \emph{Exact binomial with completion.} A predetermined number of paths is sized so that an exact-binomial interval has half-width at most $\eps$ at every possible count. For each budget in turn, paths are revealed in order until the set of intervals consistent with every value of the unrevealed paths has width at most $2\eps$.
\item \emph{Nested exact binomial.} Looks grow geometrically by a factor $1.25$, the error is split over budget--look pairs in advance, each new path is generated only up to the largest unresolved budget, and a budget retires when its interval is narrow enough. The last look resolves every budget.
\item \emph{Rank-based stopping.} Complete calibration paths (1,200 of them) learn a score threshold above which a correct incumbent can be frozen; a distribution-free rank bound limits later threshold exceedances, and a widened exact-binomial interval covers the stopped paths. A nested version adds early looks with error spending.
\item \emph{Fixed designs.} Every question receives the same number of full paths, with a Hoeffding or an exact-binomial interval and a union over budgets. These designs are balanced but cannot see the within-question variance.
\item \emph{Record design} \citep{fitas2026}. Independent full paths with labels at score records and simultaneous bands, sized for the same $(\eps,\delta)$.
\end{itemize}
Table~\ref{tab:allcomp} lists every design on the held-out pools. The first three form the prespecified comparator of Table~\ref{tab:pools}; at $K=64$ and 256 the cheapest of all six is the cheapest of those three on every held-out pool.

\begin{table}[ht]
\caption{Median ratio of each certified design's cost to the paired audit's on the 185 held-out pools (answers / correctness queries). The fixed designs and the record design were run at $K=64$ and 256.}\label{tab:allcomp}
\centering\small
\begin{tabular}{lccc}
\toprule
design & $K=64$ & $K=256$ & $K=1024$\\
\midrule
exact binomial with completion & 1.49 / 1.37 & 1.98 / 1.47 & 2.74 / 1.66\\
nested exact binomial & 1.80 / 1.70 & 2.13 / 1.75 & 2.73 / 1.78\\
rank-based stopping & 1.37 / 1.57 & 1.54 / 1.68 & 1.90 / 1.88\\
nested rank-based stopping & 1.36 / 1.63 & 1.55 / 1.73 & 1.87 / 1.94\\
fixed Hoeffding & 2.15 / 1.99 & 2.69 / 2.00 & --\\
fixed exact binomial & 1.65 / 1.53 & 2.12 / 1.60 & --\\
record design \citep{fitas2026} & 2.06 / 2.25 & 2.62 / 2.73 & --\\
\bottomrule
\end{tabular}
\end{table}

\paragraph{Uncertified practice.} A within-question bootstrap band (199 replicates) at the budget of the fixed Hoeffding design missed the exact curve in 58 of 925 held-out runs at $K=64$ (6.3\%), and in 70 of 1,085 runs over all pools. The fixed Hoeffding design missed in none of its 370 held-out runs at the same budget.

\subsection{Ablation}\label{app:ablation}

All arms run on the stored pools with the random-number streams of the main study (the balanced exact-binomial arm draws its extra balanced rounds after them), and the paired and per-row costs of every held-out run are reproduced exactly. \emph{Balanced exact binomial} runs the nested exact-binomial looks on balanced rounds, each look rounded up to a whole round and each tail inverted at the corrected level of Lemma~\ref{lem:edgecp}. \emph{Per-row betting} is a stratified confidence sequence that centers each question's outcome at its running mean with a quarter pseudo-count, estimates the variance from the running squared residuals, bets $\min\{0.6,\eps/\hat\sigma^2\}$ and checks after every round; its settings were frozen during development. A \emph{matched} version uses the paired audit's bet $\eps/(\eps+\hat\sigma^2)$ and cap $0.95$. Table~\ref{tab:ablationfull} gives the medians.

\begin{table}[ht]
\caption{Ablation on the 185 held-out pools: median ratio to the cheapest competing certified audit (five-run means).}\label{tab:ablationfull}
\centering\small
\begin{tabular}{lccc}
\toprule
design & $K=64$ & $K=256$ & $K=1024$\\
\midrule
nested exact binomial, random questions & 1.42 & 1.45 & 1.45\\
nested exact binomial, balanced rounds & 1.45 & 1.45 & 1.47\\
per-row betting, balanced rounds & 0.86 & 0.75 & 0.63\\
per-row betting, matched bet and cap & 0.91 & 0.81 & 0.70\\
paired audit & 0.74 & 0.66 & 0.53\\
\midrule
balanced / random nested exact binomial & 1.01 & 1.00 & 1.03\\
paired / better per-row variant & 0.91 & 0.89 & 0.83\\
pools where paired is cheaper than both per-row variants & 185/185 & 185/185 & 185/185\\
\bottomrule
\end{tabular}
\end{table}

\subsection{The all-subsets audit}

The implementation uses bet cap $0.9$, a prior of one pseudo-row for the predictions $c_{x,k}$, a first round with zero bet that only sets the predictions, and the error split $0.95\,\delta$ for the sequence and $0.05\,\delta$ for a fixed-round exact-binomial look on the binary outcome of each full path, inverted at the corrected level of Lemma~\ref{lem:edgecp}. Its unbiasedness requires the answers within a path to be independent given the question.

\subsection{The cost law}\label{app:costlaw}

The law across settings is fitted to the five-run mean costs at $\eps=1/32$ and $K\in\{64,256,1024\}$ and the two-run mean costs at $K=64$ and $\eps\in\{1/16,1/64\}$, with $\sumw$ computed exactly from each pool; its 90th-percentile relative error is 12.0\%. The law used for the MMLU-Pro prediction was fitted earlier, to two paired audits per pool at $K=64$ and $\eps=1/32$ on all 217 pools, run with bet cap $0.6$. It was fitted before any answer of the study, pilot answers included, was generated. The paired audits of the MMLU-Pro study used cap $0.6$ as well. Replaying the three stored audit streams at caps $0.6$ and $0.95$ gives identical best-of-$k$ costs (78,800, 80,400 and 78,200 answers), because the bet $\eps/(\eps+\hat v)$ never exceeded 0.58; for pass@$k$ and majority voting the lower cap binds on 66 to 143 of about 290 bets and moves the cost by a few percent in either direction.

\subsection{The MMLU-Pro study}\label{app:luna}

The protocol was written before any evaluation answer was generated. It fixed a subset of 100 MMLU-Pro questions from mathematics, physics, chemistry and engineering, with 50 further questions for pilots; the model \texttt{gpt-6-luna} (the interface reports no dated version; answers were generated on 27--28 September 2026) at temperature 1, without reasoning effort, with token log-probabilities, at most 400 completion tokens and eight answers per request; a prompt asking for a brief reason followed by a final line ``Answer: X''; the verifier score, which is the mean token log-probability of a response with a valid extracted letter, invalid responses receiving the lowest score; correctness, which is agreement of the extracted letter with the published answer key; three audit streams consumed in generation order with common random numbers across audits; a separate reference stream; and the comparison designs.

Audit streams were generated through the batch interface, and part of the reference stream through the flexible-priority interface with the same parameters. Answers from different generation windows agree: across the 100 questions, single-answer accuracy in the audit streams differs from the reference stream by 0.0022 on average (paired $t=1.30$). Answers within a request behave as independent draws: given the stream and the question, the intraclass correlation of correctness among the eight answers of a request is 0.005. Of the 716,384 answers, 5.2\% had no valid letter and 12 were truncated.

Reference curves come from the 409,448 reference answers: exact finite-sample formulas for best-of-$k$ and pass@$k$, and an exact multinomial computation for majority voting. The all-subsets audits for pass@$k$ and majority voting stopped at the same round in each stream (8, 8 and 7), and their paths were never shortened, so their costs equal the number of rounds times $100\times64$.

\subsection{Computation}\label{app:compute}

Prefix winners along a generated path are updated in time linear in its length. Updating the active budgets after a pair of rounds takes $O(MK)$ operations, so a paired audit of $R$ rounds runs in $O(N+RMK)$ time with $O(MK)$ working memory. The multilevel audit processes an observation of level $\ell$ in $O(b_\ell)$ time with $O(K)$ accumulators. All audits ran on one CPU with eight worker processes; the 3,255 pool--horizon--seed jobs of the stored-pool study, each running all six audits, took 936 seconds in total.

\end{document}